\newif\iffull
\pgfplotsset{compat=1.10}
\newtheorem{theorem}{Theorem}[section]
\newtheorem{lemma}{Lemma}[section]
\newtheorem{definition}{Definition}[section]
\newtheorem{remark}{Remark}[section]
\newcounter{numrellocal}
\renewcommand{\thenumrellocal}{\arabic{numrellocal}}
\newcounter{numrelglobal}
\newcommand{\numrel}[2]{
  \stepcounter{numrellocal}
  \refstepcounter{numrelglobal}
  \ltx@label{#2}
  \overset{(\thenumrellocal)}{#1}
}
\newcommand{\ramya}[1]{}
\newcommand{\ar}[1]{}
\newcommand{\cj}[1]{}
\newcommand{\gn}[1]{}
\newcommand{\ind}{\mathbbm{1}}
\newcommand{\R}{\mathbb{R}}
\newcommand{\E}{\mathop{\mathbb{E}}}
\newcommand{\argmin}{\mathop{\mathrm{argmin}}}
\newcommand{\argmax}{\mathop{\mathrm{argmax}}}
\newcommand{\cZ}{\mathcal{Z}}
\newcommand{\cX}{\mathcal{X}}
\newcommand{\cY}{\mathcal{Y}}
\newcommand{\cG}{\mathcal{G}}
\newcommand{\cF}{\mathcal{F}}
\newcommand{\cS}{\mathcal{S}}
\newcommand{\cT}{\mathcal{T}}
\newcommand{\Tau}{\mathcal{T}}
\newcommand{\cC}{\mathcal{C}}
\newcommand{\cD}{\mathcal{D}}
\newcommand{\patch}{\mathrm{Patch}}
\title{Batch Multivalid Conformal Prediction}
 \author[1]{Christopher Jung}
 \author[2]{Georgy Noarov}
 \author[2]{Ramya Ramalingam}
 \author[2]{Aaron Roth}
 \affil[1]{Department of Computer Science, Stanford University}
 \affil[2]{Department of Computer and Information Sciences, University of Pennsylvania}
\begin{document}
\maketitle

\begin{abstract}
We develop  fast distribution-free conformal prediction algorithms for obtaining \emph{multivalid} coverage on exchangeable data in the batch setting. Multivalid coverage guarantees are stronger than marginal coverage guarantees in two ways: (1) They hold even conditional on group membership---that is, the target coverage level $1-\alpha$ holds conditionally on membership in each of an arbitrary (potentially intersecting) group in a finite collection $\mathcal{G}$ of regions in the feature space. (2) They hold even conditional on the value of the threshold used to produce the prediction set on a given example. In fact multivalid coverage guarantees hold even when conditioning on group membership and threshold value simultaneously.

We give two algorithms: both take as input an arbitrary non-conformity score and an arbitrary collection of possibly intersecting groups $\cG$, and then can equip arbitrary black-box predictors with prediction sets.  Our first algorithm \texttt{BatchGCP} is a direct extension of quantile regression, needs to solve only a single convex minimization problem, and produces an estimator which has group-conditional guarantees for each group in $\cG$. Our second algorithm \texttt{BatchMVP} is iterative, and gives the full guarantees of multivalid conformal prediction: prediction sets that are valid conditionally both on group membership and non-conformity threshold. We evaluate the performance of both of our algorithms in an extensive set of experiments. \iffull Code to replicate all of our experiments can be found at \href{https://github.com/ProgBelarus/BatchMultivalidConformal}{https://github.com/ProgBelarus/BatchMultivalidConformal}.\fi



\end{abstract}

\section{Introduction}
Consider an arbitrary distribution $\cD$ over a  labeled data domain $\cZ=\cX\times \cY$. A model is any function $h:\cX\rightarrow \cY$ for making point predictions\iffull\footnote{Often, as with a neural network that outputs scores at its softmax layer that are  transformed into label predictions, a model will also produce auxiliary outputs that can be used in the construction of the non-conformity score $s_h$}\fi. The traditional goal of conformal prediction in the ``batch'' setting is to take a small \emph{calibration dataset} consisting of labeled examples sampled from $\cD$ and use it to endow an arbitrary model $h:\cX\rightarrow \cY$ with prediction sets $\cT_h(x) \subseteq Y$ that have the property that these prediction sets cover the true label with probability $1-\alpha$ \emph{marginally} for some target miscoverage rate $\alpha$: \iffull
$$\Pr_{(x,y) \sim \cD}[y \in \cT_h(x)] = 1-\alpha$$ \else $\Pr_{(x,y) \sim \cD}[y \in \cT_h(x)] = 1-\alpha.$ \fi
This is a \emph{marginal} coverage guarantee because the probability is taken over the randomness of both $x$ and $y$, without conditioning on anything. In the batch setting (unlike in the sequential setting), labels are not available when the prediction sets are deployed. Our goal in this paper is to give simple, practical algorithms in the batch setting that can give stronger than marginal guarantees --- the kinds of \emph{multivalid} guarantees introduced by \cite{multivalid,bastani2022practical} in the sequential prediction setting.

Following the literature on conformal prediction \citep{conformal}, our prediction sets are parameterized by an arbitrary \emph{non-conformity score} $s_h:\cZ\rightarrow \mathbb{R}$ defined as a function of the model $h$. Informally, smaller values of $s_h(x,y)$ should mean that the label $y$ ``conforms'' more closely to the prediction $h(x)$ made by the model. For example, in a regression setting in which $\cY = \mathbb{R}$, the simplest non-conformity score is $s_h(x,y)= |h(x) - y|$. By now there is a large literature giving more sophisticated non-conformity scores for both regression and classification problems---see \cite{angelopoulos2021gentle} for an excellent recent survey. A non-conformity score function $s_h(x,y)$ induces a distribution over non-conformity scores, and if $\tau$ is the $1-\alpha$ quantile of this distribution (i.e. $\Pr_{(x,y) \sim \cD}[s_h(x,y) \leq \tau] = 1-\alpha$), then defining prediction sets as \iffull $$\cT_h^\tau(x) = \{y : s_h(x,y) \leq \tau]$$ \else $\cT_h^\tau(x) = \{y : s_h(x,y) \leq \tau]$ \fi gives $1-\alpha$ marginal coverage. Split conformal prediction \citep{papadopoulos2002inductive,lei2018distribution} simply finds a threshold $\tau$ that is an empirical $1-\alpha$ quantile on the calibration set\iffull\footnote{with a small sample bias correction}\fi, and then uses this to deploy the prediction sets $\cT_h^\tau(x)$ defined above. Our goal is to give stronger coverage guarantees, and to do so, rather than learning a single threshold $\tau$ from the calibration set, we will learn a function $f:\cX\rightarrow \mathbb{R}$ mapping unlabeled examples to thresholds. Such a mapping $f$ induces prediction sets defined as follows: \iffull
$$\cT_h^f(x) = \{y : s_h(x,y) \leq f(x)\}$$ \else $\cT_h^f(x) = \{y : s_h(x,y) \leq f(x)\}$.\fi

Our goal is to find functions $f:\cX\rightarrow \mathbb{R}$ that give valid conditional coverage guarantees of two sorts. Let $\cG$ be an arbitrary collection of \emph{groups}: each group $g \in \cG$ is some subset of the feature domain $g \in 2^\cX$ about which we make no assumption and we write $g(x)=1$ to denote that $x$ is a member of group $g$. An example $x$ might be a member of multiple groups in $\cG$. We want to learn a function $f$ that induces group conditional coverage guarantees---i.e.\ such that for every $g \in \cG$: \iffull
$$\Pr_{(x,y) \sim \cD}[y \in T_h^f(x) | g(x)=1] = 1-\alpha$$ \else $\Pr_{(x,y) \sim \cD}[y \in T_h^f(x) | g(x)=1] = 1-\alpha$. \fi Here we can think of the groups as representing e.g.\ demographic groups (broken down by race, age, gender, etc) in settings in which we are concerned about fairness, or representing any other categories that we think might be relevant to the domain at hand. Since our functions $f$ now map different examples $x$ to different thresholds $f(x)$, we  also want our guarantees to hold conditional on the chosen threshold---which we call a \emph{threshold calibrated} guarantee. This avoids algorithms that achieve their target coverage rates by overcovering for some thresholds and undercovering with others --- for example, by randomizing between full and empty prediction sets. That is, we have: \iffull
\[\Pr_{(x,y) \sim \cD}[y \in T_h^f(x) | g(x)=1, f(x) = \tau] = 1-\alpha\] \else $\Pr_{(x,y) \sim \cD}[y \in T_h^f(x) | g(x)=1, f(x) = \tau] = 1-\alpha$ simultaneously for every $g \in \cG$ and every $\tau \in \mathbb{R}$. \fi 
If $f$ is such that its corresponding prediction sets $T_h^f(x)$ satisfy both group and threshold conditional guarantees simultaneously, then we say that it promises \emph{full multivalid coverage}. 

\subsection{Our Results}
We design, analyze, and empirically evaluate two algorithms: one for giving group conditional guarantees for an arbitrary collection of groups $\cG$, and the other for giving full multivalid coverage guarantees for an arbitrary collection of groups $\cG$.  We give PAC-style guarantees \citep{park2020pac}, which means that with high probability over the draw of the calibration set, our deployed prediction sets have their desired coverage properties on the underlying distribution. Thus our algorithms also offer  ``training-conditional coverage'' in the sense of \cite{bian2022training}. We prove our generalization theorems under the assumption that the data is drawn i.i.d.\ from some distribution, but note that De Finetti's theorem \citep{definetti} implies that our analysis carries over to data drawn from any exchangeable distribution (see Remark~\ref{rem:exchangeability}).

\paragraph{Group Conditional Coverage: \texttt{BatchGCP}} We first give an exceedingly simple algorithm \texttt{BatchGCP} (Algorithm~\ref{alg:simplegroupquantileconditional}) to find a model $f$ that produces prediction sets $\cT_h^f$ that have group conditional (but not threshold calibrated) coverage guarantees. We consider the  class of functions $\cF = \{f_\lambda : \lambda \in \mathbb{R}^{|\cG|}\}$: each $f_\lambda \in \cF$ is parameterized by a vector $\lambda \in \mathbb{R}^{|\cG|}$, and takes value: \iffull
$$f_\lambda(x) = f_0(x) + \sum_{g \in \cG : g(x)=1} \lambda_g.$$ \else $f_\lambda(x) = f_0(x) + \sum_{g \in \cG : g(x)=1} \lambda_g.$ \fi 
Here $f_0$ is some arbitrary initial model. 
Our algorithm simply finds the parameters $\lambda$ that minimize the \emph{pinball} loss of $f_\lambda(x)$. This is a $|\cG|$-dimensional convex optimization problem and so can be solved efficiently using off the shelf convex optimization methods. We prove that the resulting function $f_\lambda(x)$ guarantees group conditional coverage. This can be viewed as an  extension of conformalized quantile regression \citep{romano2019conformalized} which is also based on minimizing pinball loss. It can also be viewed as an algorithm promising ``quantile multiaccuracy'', by analogy to (mean) multiaccuracy introduced in \cite{multicalibration,multiaccuracy}, and is related to similar algorithms for guaranteeing multiaccuracy \citep{gopalan2022low}. Here pinball loss takes the role that squared loss does in (mean) multiaccuracy.

\paragraph{Multivalid Coverage: \texttt{BatchMVP}} We next give a simple iterative algorithm \texttt{BatchMVP} (Algorithm~\ref{alg:batch-quantile-multicalibrator}) to find a model $f$ that produces prediction sets $\cT_h^f$ that satisfy both group and threshold conditional guarantees simultaneously --- i.e.\ full multivalid guarantees. It iteratively finds groups $g \in \cG$ and thresholds $\tau$ such that the current model fails to have the target coverage guarantees conditional on $g(x)=1$ and $f(x) = \tau$, and then ``patches'' the model so that it does. We show that each patch improves the pinball loss of the model substantially, which implies fast convergence. This can be viewed as an algorithm for promising ``quantile multicalibration'' and is an extension of related algorithms for guaranteeing mean multicalibration \citep{multicalibration}, which offer similar guarantees for mean (rather than quantile) prediction. Once again, pinball loss takes the role that squared loss takes in the analysis of mean multicalibration. 

\paragraph{Empirical Evaluation}
We implement both algorithms, and evaluate them on synthetic prediction tasks, and on 10 real datasets derived from US Census data from the 10 largest US States using the ``Folktables'' package of~\cite{ding2021retiring}. In our synthetic experiments, we measure group conditional coverage with respect to synthetically defined groups that are constructed to be correlated with label noise. On Census datasets, we aim to ensure coverage on population groups defined by reported race and gender categories. We compare our algorithms to two other split conformal prediction methods: a naive baseline which simply ignores group membership, and uses a single threshold, and the method of \cite{barber2019limits}, which calibrates a threshold $\tau_g$ for each group $g \in \cG$, and then on a new example $x$, predicts the most conservative threshold among all groups $x$ belongs to. We find that both of our methods obtain significantly better group-wise coverage and threshold calibration than the baselines we compare to. Furthermore, both methods are very fast in practice, only taking a few seconds to calibrate on datasets containing tens of thousands of points.

\iffull
\subsection{Additional Related Work}
Conformal prediction (see \cite{conformal, angelopoulos2021gentle} for excellent surveys) has seen a surge of activity in recent years. One large category of recent work has been the development of sophisticated non-conformity scores that yield compelling empirical coverage in various settings when paired with split conformal prediction \citep{papadopoulos2002inductive,lei2018distribution}. This includes \cite{romano2019conformalized} who give a nonconformity score based on quantile regression, \cite{angelopoulos2020uncertainty} and \cite{romano2020classification} who give nonconformity scores designed for classification, and \cite{hoff2021bayes} who gives a nonconformity score that leads to Bayes optimal coverage when data is drawn from the assumed prior distribution. This line of work is complementary to ours: we give algorithms that can be used as drop-in replacements for split conformal prediction, and can make use of any of these nonconformity scores. 

Another line of work has focused on giving group conditional guarantees. \cite{romano2020malice} note the need for group-conditional guarantees with respect to demographic groups when fairness is a concern, and propose separately calibrating on each group in settings in which the groups are disjoint. \cite{barber2019limits} consider the case of intersecting groups, and give an algorithm that separately calibrates on each group, and then uses the most conservative group-wise threshold when faced with examples that are members of multiple groups --- the result is that this method systematically over-covers. The kind of ``multivalid'' prediction sets that we study here were first proposed by \cite{momentmulti} in the special case of prediction intervals: but the algorithm given by \cite{momentmulti}, based on calibrating to moments of the label distribution and using Chebyshev's inequality, also generally leads to over-coverage. \cite{multivalid} gave a theoretical derivation of an algorithm to obtain tight multivalid prediction intervals in the sequential adversarial setting, and \cite{bastani2022practical} gave a practical algorithm to obtain tight multivalid coverage using prediction sets generated from any non-conformity score, also in the sequential adversarial setting. Although the sequential setting is more difficult in the sense that it makes no distributional assumptions, it also requires that labels be available after predictions are made at test time, in contrast to the batch setting that we study, in which labels are not available. \cite{multivalid} give an online-to-batch reduction that requires running the sequential algorithm on a large calibration set, saving the algorithm's internal state at each iteration, and then deploying a randomized predictor that randomizes over the internal state of the algorithm across all rounds of training. This is generally impractical; in contrast we give a direct, simple, deterministic predictor in the batch setting. 

Our algorithms can be viewed as learning quantile multiaccurate predictors and quantile multicalibrated predictors respectively --- by analogy to multiaccuracy \cite{multiaccuracy} and multicalibration \cite{multicalibration} which are defined with respect to means instead of quantiles. Their analysis is similar, but with pinball loss playing the role played by squared error in (mean) multicalibration. This requires analyzing the evolution of pinball loss under Lipschitzness assumptions on the underlying distribution, which is a complication that does not arise for means. More generally, our goals for obtaining group conditional guarantees for intersecting groups emerge from the literature on ``multigroup'' fairness --- see e.g.  \citep{gerrymandering, multicalibration,subgroup,multigroup2,dwork2019learning,globus2022multicalibrated}
\else

We have discussed the most closely related work; but see Appendix \ref{app:RW} for an extended discussion of additional related work.
\fi

\section{Preliminaries}

We study prediction tasks over a domain $\cZ = \cX\times \cY$. $\cX$ denotes the feature domain and $\cY$ the label domain. We write $\cG \subseteq 2^\cX$ to denote a collection of subsets of $\cX$ which we represent as indicator functions $g:\cX\rightarrow \{0,1\}$. The label domain might e.g. be real valued $(\cY = \R)$---the \emph{regression} setting, or consist of some finite unordered set---the \emph{multiclass classification} setting. 

Suppose there is a fixed distribution $\cD \in \Delta \cZ$. Given such a distribution, we will write $\cD_{\cX}$ to denote the marginal distribution over features: $\cD_{\cX} \in \Delta \cX$ induced by $\cD$. We will write $\cD_{\cY}(x) \in \Delta \cY$ to denote the conditional distribution over labels induced by $\cD$ when we condition on a particular feature vector $x$. \iffull $\cD_{\cY}(x)$ captures all of the information about the label that is contained by the feature vector $x$, and is  the object that we are trying to approximate with our uncertainty quantification. \fi We sometimes overload the notation and write $\cD(x) = \cD_\cY(x)$.

Our uncertainty quantification is based on a bounded non-conformity score function  $s: \cX \times \cY \to \R$. Non-conformity score functions are generally defined with respect to some model $h$---which is why in the introduction we wrote $s_h$---but our development will be entirely agnostic to the specifics of the non-conformity score function, and so we will just write $s$ for simplicity. Without loss of generality\iffull\footnote{When $s(x,y) \in [L,U]$, then we can have the learner use the nonconformity score $s'(x,y) = \frac{s(x,y) - L}{U-L}$. When the learner produces a nonconformity threshold $q'$ with respect to the new scoring function $s'$, we may obtain a threshold $q$ with respect to the original score $s$ by setting $q = q'(U-L) + L$.}\fi, we assume that the scoring function takes values in the unit interval: 
$s(x, y) \in [0,1]$ for any $x \in \cX$ and $y \in \cY$.
Given a distribution $\cD$ over $\cZ=\cX \times \cY$ and a non-conformity score function $s$, we write $\cS$ to denote the induced joint distribution over feature vectors $x$ and corresponding non-conformity scores $s(x,y)$. 
Analogously to our $\cD(x)$ notation, we write $\cS(x)$ to denote the non-conformity score distribution conditional on a particular feature vector $x$. Given a subset of the feature space $B \subset \cX$, we write $\cS|B$ to denote the conditional distribution on non-conformity scores conditional on $x \in B$. Finally, we assume that all non-conformity score distributions $\cS(x)$ are continuous, which simplifies our treatment of quantiles --- note that if this is not the case already, it can be enforced by perturbing non-conformity scores with arbitrarily small amounts of noise from a continuous distribution. 

\begin{definition}
For any $q \in [0, 1]$, we say that $\tau$ is a \emph{$q$-quantile} of a (continuous) nonconformity score distribution $\cS$ if $\Pr_{(x,s) \sim \cS}[s \leq \tau] = q$.
\iffull
Similarly, $\tau$ is a $q$-quantile of a conditional nonconformity score distribution $\cS(x)$ if
\[
\Pr_{s \sim \cS(x)}[s \leq \tau] = q.
\]
\fi
\end{definition}

Our convergence results will be parameterized by the \emph{Lipschitz} parameter of the CDF of the underlying nonconformity score distribution. Informally speaking, a distribution with a Lipschitz CDF cannot concentrate too much of its mass on an interval of tiny width. Similar assumptions are commonly needed in related work, and can be guaranteed by perturbing non-conformity scores with noise --- see e.g. the discussion in \citep{multivalid,bastani2022practical}.
\begin{definition}
A conditional nonconformity score distribution $\cS(x)$ is \emph{$\rho$-Lipschitz} if we have
\iffull
$$
\Pr_{s \sim \cS(x)}[s \leq \tau'] - \Pr_{s \sim \cS(x)}[s \leq \tau] \leq \rho(\tau'-\tau) \quad \text{ for all $0\le \tau \le \tau' \le 1$}.
$$
\else 
$
\Pr_{s \sim \cS(x)}[s \leq \tau'] - \Pr_{s \sim \cS(x)}[s \leq \tau] \leq \rho(\tau'-\tau) \quad \text{ for all $0\le \tau \le \tau' \le 1$}.
$ \fi
A nonconformity score distribution $\cS$ is $\rho$-Lipschitz if for each $x \in \cX$, $\cS(x)$ is $\rho$-Lipschitz.
\end{definition}

If we could find a model $f:\cX \rightarrow [0,1]$ that on each input $x$ output a value $f(x)$ that is a $q$-quantile of the nonconformity score distribution $\cS(x)$, this would guarantee true conditional coverage at rate $q$: for every $x$, $\Pr_{y}[y \in \cT^f(x) | x] = q$, where $\cT^f(x) = \{y : s(x, y) \leq f(x)\}$. As this is generally impossible, our aim will be to train models $f$ that allow us to make similar claims --- not conditional on every $x$, but conditional on membership of $x$ in some group $g$ and on the value of $f(x)$. To facilitate learning models $f$ with guarantees conditional on their output values, we will learn models $f$ whose range $R(f) = \{f(x) : x \in \cX\}$ has finite cardinality $m = |R(f)| < \infty$.

\section{Algorithms}
\label{sec:algs}
\subsection{Algorithmic Preliminaries}
\label{subsec:alg-prelims}
In this section we establish lemmas that will be key to the analysis of both of the algorithms we give. 
Both of our algorithms will rely on analyzing \emph{pinball loss} as a potential function. \iffull All proofs can be found in the Appendix. \fi
\begin{definition}
The \emph{pinball loss} at target quantile $q$ for threshold $\tau$ and nonconformity score $s$ is
\[
L_q(\tau, s) = (s-\tau)q \cdot 1[s > \tau] + (\tau-s)(1-q) \cdot 1[s \leq \tau].
%
\]
We write \iffull
\[
PB^\cS_{q}(f) = \E_{(x,s) \sim \cS}[L_q(f(x), s)]
\] \else $PB^\cS_{q}(f) = \E\limits_{(x,s) \sim \cS}[L_q(f(x), s)]$ \fi
for a model $f:\cX \rightarrow [0,1]$ and nonconformity score distribution $\cS$.
When quantile $q$ and/or distribution $\cS$ is clear, we write $PB$, $PB_q$, and/or $PB^{\cS}$. 
\end{definition}

 It is well known that pinball loss is minimized by the function that predicts for each $x$  the target quantile $q$ of a conditional score distribution given $x$, but we will need a more refined set of statements. First we define a model's \emph{marginal} quantile consistency error:
 \begin{definition}
A $q$-quantile predictor $f:\cX\rightarrow \mathbb{R}$ has marginal quantile consistency error $\alpha$ if
\iffull
\[
\left|\Pr_{(x,s) \sim \cS}[s \leq f(x)] - q \right| = \alpha.
\]
\else $\left|\Pr_{(x,s) \sim \cS}[s \leq f(x)] - q \right| = \alpha.$ \fi
If $\alpha = 0$, we say that $f$ satisfies marginal $q$-quantile consistency. 
\end{definition}

Informally, we will need the pinball loss to have both a progress and an ``anti-progress'' property:
\iffull
\begin{enumerate}
    \item If a model $f$ is \emph{far} from satisfying marginal quantile consistency, then linearly shifting it so that it satisfies marginal quantile consistency will reduce pinball loss substantially, and
    \item If a model does satisfy marginal quantile consistency, then perturbing it slightly won't \emph{increase} pinball loss substantially. 
\end{enumerate}
\else
If a model $f$ is \emph{far} from satisfying marginal quantile consistency, then linearly shifting it so that it becomes marginal quantile consistent should reduce pinball loss substantially. Conversely, if it \emph{is} marginal quantile consistent, then perturbing it slightly should not increase pinball loss substantially. 
\fi
The next lemma establishes these, under the assumption that the underlying distribution is Lipschitz.


\begin{restatable}{lemma}{lempinballlosschange}
\label{lem:pinball-loss-change}
Fix any nonconformity score distribution $\cS$ that is $ \rho$-Lipschitz. Fix any model $f:\cX\rightarrow \mathbb{R}$ that has marginal quantile consistency error $\alpha$ with respect to target quantile $q$, and let $f'(x) = f(x) + \Delta$ with $\Delta$ chosen such that $f'$ is marginal quantile consistent at quantile $q$. Then
\[
    -\alpha |\Delta| + \frac{\alpha^2}{2\rho} \le PB^\cS_q(f') - PB^\cS_q(f) \le -\frac{\alpha^2}{2\rho}.
\]
\end{restatable}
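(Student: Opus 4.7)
The plan is to reduce the lemma to a one-dimensional analysis of the function $\phi(\Delta) := PB^\cS_q(f + \Delta)$ and bound its change via an envelope argument on $\phi'$. First, differentiate the pinball loss pointwise: away from $s = \tau$ (a null set because $\cS$ is continuous) one has $\partial_\tau L_q(\tau,s) = 1[s \le \tau] - q$. Interchanging differentiation and expectation (justified by boundedness of the integrand) gives
$$\phi'(\Delta) \;=\; \Pr_{(x,s) \sim \cS}[\,s \le f(x) + \Delta\,] \;-\; q.$$
Two facts about $\phi'$ are immediate. It is nondecreasing in $\Delta$ (mixture of CDFs). And since each $\cS(x)$ is $\rho$-Lipschitz, the mixture $\Pr_{(x,s) \sim \cS}[s \le f(x) + \Delta]$ is also $\rho$-Lipschitz in $\Delta$, so $\phi'$ is $\rho$-Lipschitz. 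By the hypothesis, $|\phi'(0)| = \alpha$, and by the choice of $\Delta$, $\phi'(\Delta) = 0$.

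I will handle the under-coverage case $\phi'(0) = -\alpha$ (so $\Delta \ge 0$); the over-coverage case $\phi'(0) = +\alpha$ follows by reflecting $\Delta \mapsto -\Delta$, and this is why the stated bounds use $|\Delta|$. Lipschitzness of $\phi'$ together with $\phi'(\Delta) - \phi'(0) = \alpha$ forces $|\Delta| \ge \alpha/\rho$, which is the single quantitative fact that makes the $\alpha^2/(2\rho)$ gain appear.

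For the upper bound, combine two pointwise estimates on $t \in [0,\Delta]$: monotonicity gives $\phi'(t) \le \phi'(\Delta) = 0$, while Lipschitzness applied at $0$ gives $\phi'(t) \le -\alpha + \rho t$. Taking the minimum of the two, the second bound is active on $[0, \alpha/\rho]$ and the first on $[\alpha/\rho, \Delta]$. Integrating,
$$\phi(\Delta) - \phi(0) \;=\; \int_0^{\Delta} \phi'(t)\,dt \;\le\; \int_0^{\alpha/\rho} (-\alpha + \rho t)\,dt \;=\; -\frac{\alpha^2}{2\rho}.$$
For the lower bound, the dual pair is $\phi'(t) \ge \phi'(0) = -\alpha$ (monotonicity) and $\phi'(t) \ge \phi'(\Delta) - \rho(\Delta - t) = -\rho(\Delta - t)$ (Lipschitzness applied at $\Delta$). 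Taking the maximum, the first bound is active on $[0, \Delta - \alpha/\rho]$ and the second on $[\Delta - \alpha/\rho, \Delta]$, and integrating yields $-\alpha(\Delta - \alpha/\rho) - \tfrac{1}{2}\rho(\alpha/\rho)^2 = -\alpha\Delta + \alpha^2/(2\rho) = -\alpha|\Delta| + \alpha^2/(2\rho)$, as claimed.

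The main obstacle is not the integration itself but identifying the correct two-sided envelope on $\phi'$: a naive bound using only monotonicity gives $\phi(\Delta) - \phi(0) \in [-\alpha|\Delta|, 0]$, which is loose on both sides by exactly $\alpha^2/(2\rho)$. Recovering the extra slack requires using Lipschitzness on one half of the interval and monotonicity on the complementary half, with $\alpha/\rho$ as the switching point. A secondary technical care point is justifying $\phi'(\Delta) = \Pr[s \le f(x) + \Delta] - q$ rigorously despite $L_q$ being only piecewise linear in $\tau$; continuity of $\cS$ makes the non-differentiable set measure zero, which is what makes the derivative formula clean.
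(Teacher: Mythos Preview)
Your proof is correct and follows essentially the same approach as the paper: both express the change in pinball loss as $\int_0^\Delta (\Pr[s \le f(x)+t] - q)\,dt$ and then bound the integrand using its monotonicity and $\rho$-Lipschitzness together with the known endpoint values. The paper packages the integral bound as a separate geometric sub-lemma about the area under a Lipschitz CDF between two given heights (with a picture of the extremal max-area and min-area curves), whereas you phrase it as a two-sided envelope on $\phi'$ with switching point $\alpha/\rho$; the content is identical.
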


We now define what will be a basic building block of our iterative algorithm for obtaining multivalid coverage, and of the \emph{analysis} of our algorithm for obtaining group conditional coverage. It is a simple ``patch'' operation on a model, that shifts the model's predictions by a constant term $\Delta$ only on those examples $x$ that lie in some subset $B$ of the feature space:

\begin{definition}[Patch Operation]
Given a model $f$, a subset $B \subseteq \cX$, and a value $\Delta \in \mathbb{R}$ define the patched model 
$f' = \patch(f, B, \Delta)$ to be such that
\iffull
\begin{align*}
    f'(x) = \begin{cases}
    f(x) +\Delta\quad\text{if $x \in B$}\\ f(x) \quad\text{otherwise}.
    \end{cases}
\end{align*}
\else
\[f'(x) = f(x) +\Delta \:\: \text{ if $x \in B$}, \quad \text{ and } f'(x) = f(x) \: \text{ otherwise}. \]
\fi

\end{definition}

We next show that if we have a model $f$, and we can identify a large region $B$ on which it is far from satisfying marginal quantile consistency, then ``patching'' the model so that it satisfies marginal quantile consistency on $S|B$ substantially improves its pinball loss. 

\begin{restatable}{lemma}{lempatchpinballlosschange}
\label{lem:patch-pinball-loss-change}
Given some predictor $f: \cX \to \mathbb{R}$, suppose we have a set of points $B \subseteq \cX$ with 
\begin{align*}
    \Pr_{(x,s) \sim \cS}[x \in B] \cdot \left(\Pr_{(x,s) \sim \cS}[s \le f(x)| x \in B] - q\right)^2 \ge  \alpha\quad\text{and}\quad \Delta = \argmin_{\Delta' \in \mathbb{R}} \left|\Pr_{(x,s) \sim \cS}[s \le f(x)+\Delta'| x \in B] - q\right|.
\end{align*}
Then, if $\cS|B$ is continuous and $\rho$-Lipschitz, $f'=\patch(f, B, \Delta)$ has $PB^\cS_q(f') \leq PB^\cS_q(f) - \frac{\alpha}{2\rho}$.
\end{restatable}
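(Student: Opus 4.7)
The plan is to reduce this lemma to Lemma~\ref{lem:pinball-loss-change} by conditioning on $B$. The key observation is that the patch operation only modifies the predictor on points $x \in B$, so $PB^\cS_q(f') - PB^\cS_q(f)$ depends only on how the pinball loss restricted to $B$ changes. Concretely, I would first write
\[
PB^\cS_q(f) = \Pr_{(x,s)\sim\cS}[x \in B] \cdot PB^{\cS|B}_q(f) + \Pr_{(x,s)\sim\cS}[x \notin B] \cdot \E_{(x,s)\sim\cS}[L_q(f(x),s) \mid x \notin B],
\]
and note that the second term is identical for $f$ and $f'$, so that
\[
PB^\cS_q(f') - PB^\cS_q(f) = p \cdot \bigl(PB^{\cS|B}_q(f') - PB^{\cS|B}_q(f)\bigr),
\]
where $p := \Pr_{(x,s)\sim\cS}[x \in B]$.

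Next, introduce $\alpha_B := \bigl|\Pr_{(x,s)\sim\cS}[s \le f(x) \mid x \in B] - q\bigr|$, so that the hypothesis of the lemma becomes $p \cdot \alpha_B^2 \ge \alpha$. Viewed on the conditional distribution $\cS|B$, the predictor $f$ has marginal quantile consistency error exactly $\alpha_B$ with respect to target quantile $q$. The choice of $\Delta$ in the lemma statement is precisely the constant shift that restores marginal quantile consistency on $\cS|B$, so $f'$ restricted to $B$ agrees with $f+\Delta$ as treated by Lemma~\ref{lem:pinball-loss-change}. Since $\cS|B$ is continuous and $\rho$-Lipschitz, that lemma yields
\[
PB^{\cS|B}_q(f') - PB^{\cS|B}_q(f) \le -\frac{\alpha_B^2}{2\rho}.
\]

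Combining the two displays gives
\[
PB^\cS_q(f') - PB^\cS_q(f) \le -\frac{p \cdot \alpha_B^2}{2\rho} \le -\frac{\alpha}{2\rho},
\]
which is the desired bound. I do not anticipate a serious obstacle: the only subtle point is verifying that the $\Delta$ defined by the $\argmin$ in the lemma statement is exactly the shift that makes $f$ marginal quantile consistent on $\cS|B$, which follows because $\cS|B$ is continuous (so the quantile is attained and the minimum of the absolute deviation is zero), matching the hypothesis of Lemma~\ref{lem:pinball-loss-change}. Everything else is bookkeeping around the conditional/marginal decomposition of pinball loss.
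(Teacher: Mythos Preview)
Your proposal is correct and follows essentially the same approach as the paper: decompose the pinball loss by conditioning on $B$, observe that the patch leaves the off-$B$ part unchanged, apply Lemma~\ref{lem:pinball-loss-change} on $\cS|B$ (using continuity to ensure the $\argmin$ achieves exact marginal quantile consistency), and then multiply by $\Pr[x \in B]$ to recover the $-\alpha/(2\rho)$ bound. The paper's proof is slightly terser in notation but identical in substance.
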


\subsection{\texttt{BatchGCP}: Obtaining Group Conditional Guarantees}
\label{subsec:group-contional-guarantees}
We now give an extremely simple algorithm \texttt{BatchGCP} (Batch Group-Conditional Predictor) that obtains group conditional (but not threshold calibrated) prediction sets. \texttt{BatchGCP} only needs to solve a single closed-form convex optimization problem. Specifically, Algorithm \ref{alg:simplegroupquantileconditional} takes as input an arbitrary threshold model $f$ and collection of groups $\cG$, and then simply minimizes pinball loss over all linear combinations of $f$ and the group indicator functions $g \in \cG$. This is a quantile-analogue of a similar algorithm that obtains group conditional \emph{mean} consistency \citep{gopalan2022low}. 

\begin{algorithm}[H]
\begin{algorithmic}
\STATE Let $\lambda^*$ be a solution to the optimization problem:
\[
\mathrm{argmin}_\lambda \E_{(x,y) \sim \cD} \left[L_q\left(\hat f(x; \lambda), y \right) \right] \quad
\text{where}
\quad \hat f(x ; \lambda) \equiv f(x) + \sum_{g \in \cG} \lambda_g \cdot g(x)
\]
\STATE Output $\hat f(x ; \lambda^*)$
\end{algorithmic}
\caption{\texttt{BatchGCP}($f,\cG,q, \cD$)}
\label{alg:simplegroupquantileconditional}
\end{algorithm}

\begin{restatable}{theorem}{thmgroupconditional}
\label{thm:group-conditional}
   For any input model $f$, groups $\cG$, and $q \in [0, 1]$, Algorithm~\ref{alg:simplegroupquantileconditional} returns $\hat{f}(x;\lambda^*)$ with \iffull
    \begin{align*}
        \Pr_{(x,s)\sim \cS}\left[s \le \hat{f}(x;\lambda^*)| g(x) =1 \right] = q \quad \text{ for every $g \in \cG$}.
    \end{align*}
    In other words, we have for every $g \in \cG$, \fi
    \begin{align*}
        \Pr_{(x,y) \sim \cD}\left[y \in \cT^{\hat f}(x) | g(x)=1 \right] = q \quad \text{ for every $g \in \cG$},
    \end{align*}
    where $\cT^{\hat f}(x) = \{y: s(x,y) \le \hat{f}(x;\lambda^*)\}$.
    Furthermore, $PB^{\cS}_q(\hat{f}(\cdot;\lambda^*)) \le PB^{\cS}_q(f)$.
\end{restatable}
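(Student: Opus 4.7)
The plan is to exploit first-order optimality of $\lambda^*$ for the convex pinball objective, and then interpret the stationarity condition as exactly the group conditional coverage statement.

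First I would verify convexity and differentiability of the objective in $\lambda$. The map $\tau \mapsto L_q(\tau, s)$ is convex and piecewise linear, with derivative $\partial_\tau L_q(\tau,s) = 1[s \le \tau] - q$ at every $\tau \neq s$; the pinball loss is nondifferentiable only at the single point $\tau = s$. Since $\hat f(x;\lambda)$ is linear in $\lambda$ and since the conditional distributions $\cS(x)$ are continuous, for any fixed $\lambda$ the event $\{s = \hat f(x;\lambda)\}$ has probability zero under $\cS$. Thus the objective $F(\lambda) := \E_{(x,s)\sim\cS}[L_q(\hat f(x;\lambda), s)]$ is convex in $\lambda$ and, by dominated convergence, differentiable with
\[
\frac{\partial F}{\partial \lambda_g}(\lambda) = \E_{(x,s)\sim\cS}\!\left[g(x)\bigl(1[s \le \hat f(x;\lambda)] - q\bigr)\right].
\]

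Next I would apply the first-order optimality condition $\nabla F(\lambda^*) = 0$. For every $g \in \cG$ this yields
\[
\E_{(x,s)\sim\cS}\!\left[g(x)\bigl(1[s \le \hat f(x;\lambda^*)] - q\bigr)\right] = 0,
\]
which, assuming $\Pr[g(x)=1] > 0$, rearranges to $\Pr_{(x,s)\sim\cS}[s \le \hat f(x;\lambda^*) \mid g(x)=1] = q$. Rewriting this in terms of $(x,y)\sim \cD$ via the definition of the nonconformity score and the prediction sets $\cT^{\hat f}$ gives exactly the claimed group conditional coverage. (If some $g \in \cG$ has zero marginal mass the statement is vacuous.)

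Finally, the pinball loss inequality is immediate: the vector $\lambda = 0$ is feasible and gives $\hat f(\cdot; 0) = f$, so by optimality of $\lambda^*$,
\[
PB^{\cS}_q(\hat f(\cdot;\lambda^*)) = F(\lambda^*) \le F(0) = PB^{\cS}_q(f).
\]

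The only real subtlety is the nondifferentiability of $L_q$ at $\tau = s$; the continuity assumption on $\cS(x)$ (stated in the preliminaries) is precisely what dispatches this, since it guarantees the bad set has measure zero and lets us differentiate under the expectation. Everything else is standard convex analysis; I would not expect meaningful obstacles beyond carefully justifying the interchange of derivative and expectation.
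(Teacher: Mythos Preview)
Your proof is correct and rests on the same core idea as the paper's: coordinate-wise first-order optimality of $\lambda^*$ forces the group-conditional coverage to hold. The paper presents this by contradiction, invoking the patch lemma (Lemma~\ref{lem:patch-pinball-loss-change}): if coverage failed on some $g'$, then shifting $\lambda^*_{g'}$ by the appropriate $\Delta$ would strictly decrease pinball loss, contradicting optimality. You instead compute $\nabla F(\lambda^*)$ directly and read off the coverage condition from stationarity. Your route is somewhat more elementary: it uses only the continuity assumption on $\cS(x)$ to justify differentiation under the expectation, whereas the paper's proof as written goes through Lemma~\ref{lem:patch-pinball-loss-change}, whose statement carries a $\rho$-Lipschitz hypothesis (not actually essential for the strict decrease needed here, but present in the lemma as stated). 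The paper's framing has the advantage of reusing the same patch machinery that drives the analysis of \texttt{BatchMVP}, making the two algorithms feel unified; your framing is a cleaner standalone argument for this particular theorem. Both handle the $PB^\cS_q(\hat f(\cdot;\lambda^*)) \le PB^\cS_q(f)$ claim the same way, by comparing to $\lambda = 0$.
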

The analysis is simple: if the optimal model $\hat f(x, \lambda^*)$ did not satisfy marginal quantile consistency on some $g \in \cG$, then by Lemma \ref{lem:patch-pinball-loss-change}, a patch operation on the set $B(g) = \{x : g(x) = 1\}$ would further reduce its pinball loss. By definition, this patch would just shift the model parameter $\lambda^*_g$ and yield a model $\hat f(x, \lambda')$ for $\lambda' \neq \lambda^*$, falsifying the optimality of $\hat f(x, \lambda^*)$ among such models.

\subsection{\texttt{BatchMVP}: Obtaining Full Multivalid Coverage}
\label{subsec:full-multivalid-coverage}
In this section, we give a simple iterative algorithm \texttt{BatchMVP} (Batch Multivalid Predictor) that trains a threshold model $f$ that provides full multivalid coverage --- i.e. that produces prediction sets $\cT^f(x)$ that are both group conditionally valid and threshold calibrated. 
To do this, we start by defining quantile multicalibration, a quantile prediction analogue of (mean) multicalibration defined in \cite{multicalibration,omnipredictors} \iffull and a variant of the quantile multicalibration definition given in \cite{multivalid}\footnote{Informally, \cite{multicalibration} defines an $\ell_\infty$ notion of multicalibration (that takes the maximum rather than expectation over levelsets $v$) that \cite{multivalid} generalizes to quantiles. \cite{omnipredictors} define an $\ell_1$ notion of calibration for means. Here we give an $\ell_2$ variant of calibration generalized from means to quantiles.}\fi.

\begin{definition}
The \emph{quantile calibration error} of $q$-quantile predictor $f: \cX \to [0,1]$ on group $g$ is:
\[
    Q(f, g) = \sum_{v \in R(f)} \Pr_{(x,s) \sim \cS}[f(x)=v| g(x)=1] \Big(q - \Pr_{(x,s) \sim \cS}[s \le f(x)| f(x)=v , g(x) = 1] \Big)^2.
\]
We say that $f$ is \emph{$\alpha$-approximately $q$-quantile multicalibrated} with respect to group collection $\cG$ if 
\[
    Q(f, g) \le \frac{\alpha}{ \Pr_{(x,s)\sim \cS}[g(x) =1]} \quad \text{ for every } g \in \cG.
\]
\end{definition}

\iffull
\begin{remark}
We define quantile calibration error on a group as an average over the level sets of the predictor $f$, in a manner similar to how \cite{omnipredictors} define (mean) calibration. The idealized predictor $f^*(x)$ that predicts the true $q$-quantile of the conditional nonconformity score distribution $S(x)$ on every $x$ would have 0 quantile calibration error on every group. When we define approximate calibration error, we allow for larger quantile calibration error on groups that appear less frequently. This is necessary if we are to learn a quantile multicalibrated predictor from finite data, and is what allows us to provide out-of-sample PAC guarantees in Section \ref{sec:generalization}, and is consistent with similar definitions for mean multicalibration from the literature \cite{momentmulti,omnipredictors}
\end{remark}
\fi

Conditional on membership in each group $g$, the quantile multicalibration error gives a bound on the expected coverage error when conditioning both on membership in $g$ and on a predicted threshold of $f(x) = v$, in expectation over $v$. In particular, it implies (and is stronger than) the following simple worst-case (over $g$ and $v$) bound on coverage error conditional on both $g(x) = 1$ and $f(x) = v$:

\begin{restatable}{claim}{multicalibrationtouncertainty}
\label{claim:multicalibration-to-marginal-uncertainty-guarantee}
If $f$ is $\alpha$-approximately quantile multicalibrated with respect to $\cG$ and $q$, then \iffull
        \begin{align*}
            \left|\Pr_{(x,s) \sim \cS}\left[s \le f(x) | g(x)=1, f(x) =v \right] - q\right| \le \frac{\sqrt{\alpha}}{\sqrt{\Pr_{(x,s) \sim \cS}[g(x)=1, f(x) =v]}}.
        \end{align*}
    
        In other words, we have for every $g \in \cG$ and $v \in R(f)$, \fi
    \begin{align*}
        \left|\Pr_{(x, y) \sim \cD}\left[y \in \Tau^{f}(x) | g(x)=1, f(x) = v \right] - q\right| \le \frac{\sqrt{\alpha}}{\sqrt{\Pr_{(x,s) \sim \cS}[g(x)=1, f(x) =v]}} \: \text{ for } g \in \cG, v \in R(f).
    \end{align*}
\end{restatable}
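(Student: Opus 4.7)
The plan is to proceed by directly unpacking the definition of approximate quantile multicalibration and exploiting the fact that every term in the defining sum is non-negative. Fix any $g \in \cG$ and any $v \in R(f)$. By the assumption that $f$ is $\alpha$-approximately $q$-quantile multicalibrated with respect to $\cG$,
\[
Q(f,g) = \sum_{v' \in R(f)} \Pr_{(x,s)\sim \cS}[f(x)=v' \mid g(x)=1] \bigl(q - \Pr_{(x,s)\sim \cS}[s \le f(x) \mid f(x)=v', g(x)=1]\bigr)^2 \le \frac{\alpha}{\Pr_{(x,s)\sim \cS}[g(x)=1]}.
\]

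Since every summand is non-negative, the single term corresponding to the fixed $v$ is at most the whole sum, which gives
\[
\Pr_{(x,s)\sim \cS}[f(x)=v \mid g(x)=1]\bigl(q - \Pr_{(x,s)\sim \cS}[s \le f(x) \mid f(x)=v, g(x)=1]\bigr)^2 \le \frac{\alpha}{\Pr_{(x,s)\sim \cS}[g(x)=1]}.
\]
Next I would multiply through by $\Pr[g(x)=1]$ and use the chain rule $\Pr[f(x)=v \mid g(x)=1]\cdot \Pr[g(x)=1] = \Pr[g(x)=1, f(x)=v]$ to get
\[
\bigl(q - \Pr_{(x,s)\sim \cS}[s \le f(x) \mid f(x)=v, g(x)=1]\bigr)^2 \le \frac{\alpha}{\Pr_{(x,s)\sim \cS}[g(x)=1, f(x)=v]}.
\]
Taking square roots yields the claimed inequality in the nonconformity-score formulation. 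The statement in terms of $\cT^f$ and $\cD$ then follows because, by the definition of the prediction set $\cT^f(x) = \{y : s(x,y) \le f(x)\}$ and by the definition of the pushforward distribution $\cS$ from $\cD$ via $s$, the event $\{y \in \cT^f(x)\}$ under $\cD$ coincides with the event $\{s \le f(x)\}$ under $\cS$.

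There is no real obstacle here; the proof is essentially bookkeeping. The only subtlety worth flagging is to be careful that $\Pr[g(x)=1, f(x)=v] > 0$ so that the right-hand side is well-defined — if this probability is zero then both of the conditional probabilities in the claim are undefined and the statement is vacuous, so we may assume positivity throughout.
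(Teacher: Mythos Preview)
Your proof is correct and essentially identical to the paper's own argument: both isolate the single non-negative summand corresponding to the fixed $v$ from the definition of $Q(f,g)$, rearrange using $\Pr[f(x)=v \mid g(x)=1]\cdot \Pr[g(x)=1] = \Pr[g(x)=1, f(x)=v]$, and take square roots. The paper phrases the rearrangement as ``dividing both sides by $\Pr[f(x)=v \mid g(x)=1]$'' whereas you multiply through by $\Pr[g(x)=1]$ first, but this is the same step.
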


\iffull This claim follows by fixing any group $g$ and threshold value $v$, and dropping all the (nonnegative) terms in $Q(f, g)$ except the one corresponding to $g$ and $v$ (see the Appendix). \fi

\begin{algorithm}[H]
\begin{algorithmic}
\STATE \textbf{Initialize $t = 0$, and define $f_0$ as $f_0(x)=\min_{v \in [\frac{1}{m}]}|v-f(x)|$ for $x \in \cX$.}
\WHILE{$f_t$ is not $\alpha$-approximately $q$-quantile multicalibrated with respect to $\cG$}
  \STATE \textbf{Let $B_t = \{x: f_t(x) =v_t, g_t(x) = 1\}$, where:} 
  \[
  (v_t,g_t) \in \argmax_{(v,g) \in [\frac{1}{m}] \times \cG} \Pr_{(x,s) \sim \cS}[f_t(x) =v, g(x) = 1]\left(q - \Pr_{(x,s) \sim \cS}[s \leq f_t(x) | f_t(x)=v,g(x) = 1])\right)^2
  \]
  \STATE \textbf{Let:} 
  \vspace{-0.2in}
  \[
    \Delta_t = \argmin_{\Delta \in [\frac{1}{m}]} \left|\Pr_{(x,s) \sim \cS}[s \leq f_t(x) + \Delta | x \in B_t] - q \right| 
  \]
  \STATE \textbf{Update $f_{t+1} \gets \patch\left(f_t, B_t, \Delta_t\right)$ and $t \gets t+1$.}
\ENDWHILE
\STATE \textbf{Output} $f_t$. 
\end{algorithmic}
\caption{\texttt{BatchMVP}($f,\alpha,q,\cG,\rho, \cS, m$)}
\label{alg:batch-quantile-multicalibrator}
\end{algorithm}

Algorithm \ref{alg:batch-quantile-multicalibrator} is simple: Given an initial threshold model $f$, collection of groups $\cG$, and target quantile $q$, it repeatedly checks whether its current model $f_t$ satisfies $\alpha$-approximate quantile multicalibration. If not, it finds a group $g_t$ and a threshold $v_t$ such that $f_t$ predicts a substantially incorrect quantile conditional on both membership of $x$ in $g_t$ and on $f_t(x) = v_t$ --- such a pair is guaranteed to exist if $f_t$ is not approximately quantile multicalibrated. It then fixes this inconsistency and produces a new model $f_{t+1}$ with a patch operation up to a target discretization parameter $m$ which ensures that the range of $f_{t+1}$ does not grow too large. Under the assumption that the non-conformity score distribution is Lipschitz, each patch operation substantially reduces the pinball loss of the current predictor, which ensures fast convergence to quantile multicalibration.

\begin{restatable}{theorem}{thmmainmultivalid}
\label{thm:main-multivalid}
Suppose $\cS$ is $\rho$-Lipschitz and continuous, and $m=\frac{8\rho^2}{\alpha}$. After $T \le \frac{32\rho^3}{\alpha^2}$ many rounds, Algorithm~\ref{alg:batch-quantile-multicalibrator} \texttt{BatchMVP}($f,\alpha,q,\cG,\rho, \cS, m$) returns $f_T$ such that
    \begin{enumerate}
        \item $PB^\cS_q(f_T) \le PB^\cS_q(f_0) - T \frac{\alpha^2}{32\rho^3}$.
        \item $f_T$ is $\alpha$-approximately quantile multicalibrated with respect to $\cG$ and $q$. In particular, via Claim~\ref{claim:multicalibration-to-marginal-uncertainty-guarantee}, we have for every $g \in \cG$ and $v \in R(f)$, \iffull
        \begin{align*}
            \left|\Pr_{(x,s) \sim \cS}\left[s \le f_T(x) | g(x)=1, f_T(x) =v \right] - q\right| \le \frac{\sqrt{\alpha}}{\sqrt{\Pr_{(x,s) \sim \cS}[g(x)=1, f_T(x) =v]}}.
        \end{align*}
    
        In other words, we have for every $g \in \cG$ and $v \in R(f)$, \fi
    \begin{align*}
        \left|\Pr_{(x, y) \sim \cD}\left[y \in \Tau^{f_T}(x) | g(x)=1, f_T(x) = v \right] - q\right| \le \frac{\sqrt{\alpha}}{\sqrt{\Pr_{(x,s) \sim \cS}[g(x)=1, f_T(x) =v]}}
    \end{align*}
    \iffull where $\Tau^{f_T}(x) = \{y: s(x,y) \le f_T(x)\}$. \fi
    \end{enumerate}
\end{restatable}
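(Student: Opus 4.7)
The plan is to use the pinball loss $PB^{\cS}_q(f_t)$ as a potential function. I will show that each iteration strictly decreases it by at least $\alpha^2/(32\rho^3)$, so the crude bound $PB^{\cS}_q(f_0) \le 1$ forces termination within $T \le 32\rho^3/\alpha^2$ rounds. Since the while-loop test is precisely $\alpha$-approximate $q$-quantile multicalibration, the output $f_T$ enjoys that guarantee on exit, and the per-level-set coverage bound follows immediately by invoking Claim~\ref{claim:multicalibration-to-marginal-uncertainty-guarantee}.

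For the per-round progress, I first note by induction that $R(f_t) \subseteq \{0, 1/m, 2/m, \ldots, 1\}$ (the initialization snaps $f$ to this grid, and each patch uses $\Delta_t \in [\tfrac{1}{m}]$), so $|R(f_t)| \le m+1$. Now suppose $f_t$ is not $\alpha$-approximately multicalibrated: then some $g^\star \in \cG$ has $Q(f_t, g^\star) > \alpha/\Pr[g^\star(x)=1]$, which after multiplying through by $\Pr[g^\star(x)=1]$ rearranges to
\[
\sum_{v \in R(f_t)} \Pr[f_t(x) = v,\, g^\star(x)=1]\bigl(q - \Pr[s \le v \mid f_t(x)=v,\, g^\star(x)=1]\bigr)^2 \;>\; \alpha.
\]
Since there are at most $m+1$ terms, the largest is at least $\alpha/(m+1)$, and by the $\argmax$ rule the chosen pair $(v_t, g_t)$ (defining $B_t = \{f_t = v_t,\, g_t = 1\}$) inherits the bound
\[
\Pr[x \in B_t]\bigl(\Pr[s \le f_t(x) \mid x \in B_t] - q\bigr)^2 \;\ge\; \tfrac{\alpha}{m+1}.
\]
Lemma~\ref{lem:patch-pinball-loss-change} then says that the \emph{optimal} continuous shift $\Delta^\star$ applied to $B_t$ would reduce pinball loss by at least $\alpha/(2\rho(m+1))$.

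The main obstacle is that the algorithm uses the grid-restricted shift $\Delta_t \in [\tfrac{1}{m}]$ in place of $\Delta^\star$, so Lemma~\ref{lem:patch-pinball-loss-change} does not apply verbatim. To bridge this, I will use that the conditional CDF $\Phi_t(\delta) := \Pr[s \le f_t(x) + \delta \mid x \in B_t]$ is monotone and $\rho$-Lipschitz, so that the grid minimizer $\Delta_t$ of $|\Phi_t(\Delta)-q|$ lies within $1/m$ of $\Delta^\star$. Combined with the convexity of $\delta \mapsto PB^{\cS|B_t}_q(f_t + \delta)$ and the $\rho$-Lipschitz bound on its derivative $\Phi_t(\delta) - q$, a second-order Taylor estimate bounds the excess loss at $\Delta_t$ over $\Delta^\star$ by $\rho/(2m^2)$ on $\cS|B_t$, hence by at most $\mu_t \rho/(2m^2) \le \rho/(2m^2)$ on $\cS$. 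Plugging in $m = 8\rho^2/\alpha$ yields a per-round decrease of at least
\[
\frac{\alpha}{2\rho(m+1)} - \frac{\rho}{2m^2} \;\ge\; \frac{\alpha^2}{16\rho^3} - \frac{\alpha^2}{128\rho^3} \;\ge\; \frac{\alpha^2}{32\rho^3},
\]
giving the potential-based inequality $PB^{\cS}_q(f_T) \le PB^{\cS}_q(f_0) - T \cdot \alpha^2/(32\rho^3)$ in part 1. Since pinball loss is nonnegative and bounded above by $1$ at initialization, the while loop halts after at most $T \le 32\rho^3/\alpha^2$ iterations, whereupon the exit condition yields the $\alpha$-approximate multicalibration claim of part 2; the worst-case coverage inequality is then immediate from Claim~\ref{claim:multicalibration-to-marginal-uncertainty-guarantee}.
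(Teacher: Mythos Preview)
Your proposal is correct and follows essentially the same potential-function argument as the paper: both decompose each round's update into an exact real-valued shift on $B_t$ (whose progress is bounded via Lemma~\ref{lem:patch-pinball-loss-change}) plus a grid-rounding correction controlled by the $\rho$-Lipschitzness of the conditional CDF, and combine these to obtain an $\Omega(\alpha^2/\rho^3)$ per-round decrease in pinball loss. The only cosmetic difference is that you bound the rounding error via a direct second-order estimate on the convex map $\delta \mapsto PB^{\cS|B_t}_q(f_t+\delta)$, while the paper invokes Lemma~\ref{lem:pinball-loss-change} for the same purpose (obtaining the same $O(\rho/m^2)$ bound); note also that your inequality $\alpha/(2\rho(m+1)) \ge \alpha^2/(16\rho^3)$ tacitly uses $m+1 \le m$---replace it with $m+1 \le 2m$ as the paper does.
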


\section{Out of Sample Generalization}
\label{sec:generalization}
We have presented \texttt{BatchGCP} (Algorithm~\ref{alg:simplegroupquantileconditional}) and \texttt{BatchMVP} (\ref{alg:batch-quantile-multicalibrator}) as if they have direct access to the true data distribution $\cD$. In practice, rather than being able to directly access the distribution, we only have a finite \emph{calibration set} $D=(x_i, y_i)_{i=1}^n$ of $n$ data points sampled iid.\ from $\cD$. In this section, we show that if we run our algorithms on the empirical distribution over the sample $D=(x_i, y_i)_{i=1}^n$, then their guarantees hold not only for the empirical distribution over $D$ but also---with high probability --- for the true distribution $\cD$. We include the generalization guarantees for \texttt{BatchGCP} in Appendix~\ref{app:BatchGCP_generalization} and focus on generalization guarantees for \texttt{BatchMVP}.

At a high level, our argument proceeds as follows (although there are a number of technical complications). For any \emph{fixed} model $f$, by standard concentration arguments its in- and out-of-sample quantile calibration error will be close with high probability for a large enough sample size. Our goal is to apply concentration bounds uniformly to \emph{every} model $f_T$ that might be output by our algorithm, and then union bound over all of them to get a high probability bound for whichever model happens to be output. Towards this, we show how to bound the total number of distinct models that can be output as a function of $T$, the total number of iterations of the  algorithm. This is possible because at each round $t$, the algorithm performs a patch operation parameterized by a group $g_t \in \cG$ (where $|\cG| < \infty$), a value $v_t \in [1/m]$, and an update value $\Delta_t \in [1/m]$ --- and thus only a finite number of models $f_{t+1}$ can result from patching the current model $f_t$. The difficulty is that our convergence analysis in Theorem \ref{thm:main-multivalid} gives a convergence guarantee in terms of the smoothness parameter $\rho$ of the underlying distribution, which will not be preserved on the \emph{empirical distribution} over the sample $D$ drawn from $\cD$. Hence, to upper bound the number of rounds our algorithm can run for, we need to interleave our generalization theorem with our convergence theorem, arguing that at each step---taken with respect to the empirical distribution over $D$---we make progress that can be bounded by the smoothness parameter $\rho$ of the underlying distribution $\cD$.

\iffull We remark that previous analyses of out-of-sample performance for similar (mean) multicalibration algorithms \citep{hebert2018multicalibration,momentmulti} have studied algorithms that use a fresh sample of data at each round, or else perturb each empirical measurement using noise so as to invoke the connection between differential privacy and generalization \citep{dwork2015preserving}. Analyzing algorithms of this sort is convenient, but does not correspond to how one would actually want to run the algorithm. In contrast, we directly analyze the out-of-sample performance of our algorithm when it is directly trained using a single, sufficiently large sample of data. \fi 

We first prove a high probability generalization bound for our algorithm as a function of the number of steps $T$ it converges in. This bound holds uniformly for all $T$, and so can be applied as a function of the actual (empirical) convergence time $T$. 

We let $D=(x_i, y_i)_{i=1}^n$ denote our sample, $S=(x_i, s(x_i, y_i))_{i=1}^n$ our nonconformity score sample, and $\tilde{\cS}_S$ denote the empirical distribution over $S$. When $S$ is clear from the context, we just write $\tilde{\cS}$. 

\begin{restatable}{theorem}{BatchMVPalphaprimebound} \label{thm:BatchMVP_alpha_prime_bound}
Suppose $\cS$ is $\rho$-Lipschitz and $S \sim \cS^n$. 
Suppose \texttt{BatchMVP}($f,\alpha,q,\cG,\rho, \tilde{\cS}_S, m$) (Algorithm~\ref{alg:batch-quantile-multicalibrator}) runs for $T$ rounds and outputs model $f_T$. Then $f_T$ is $\alpha'$-approximately $q$-quantile multicalibrated with respect to $\cG$ on $\cS$ with probability $1-\delta$, where 
\[
    \alpha' = \alpha + 21\sqrt{\frac{3\rho^2 \left(\ln(\frac{4\pi^2 T^2}{3\delta}) + T\ln(\frac{\rho^4 |\cG|}{\alpha^2})\right)}{2\alpha n}} + \frac{12\rho^2(\frac{4\pi^2T^2}{3\delta}) + T\ln(\frac{\rho^4 |\cG|}{\alpha^2}))}{\alpha n}.
\]
\end{restatable}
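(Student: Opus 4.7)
My plan is to combine a Bernstein-type concentration for the quantile calibration error of a fixed model with a union bound over all possible output models of \texttt{BatchMVP}, further union-bounded over the (data-dependent) round count $T$. The finite model cardinality is what makes this feasible: since \texttt{BatchMVP} starts from a deterministic $f_0$ and each iteration is parameterized by a triple $(g_t, v_t, \Delta_t) \in \cG \times [\tfrac{1}{m}] \times [\tfrac{1}{m}]$, after $T$ rounds the number of possible output models is at most $N_T := (|\cG|(m+1)^2)^T$. Plugging in $m = 8\rho^2/\alpha$ produces the $T\ln(|\cG|\rho^4/\alpha^2)$ term in the log factor, while the $\ln(4\pi^2T^2/3\delta)$ term arises from a standard union bound over integer-valued $T$ using $\delta_T = 6\delta/(\pi^2T^2)$, so that $\sum_{T\ge 1}\delta_T = \delta$.

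Conditional on a fixed candidate model $f$ (with range in $[\tfrac{1}{m}]$) and group $g\in\cG$, I would concentrate $p_g Q(\cS;f,g) = \sum_v \mu_{g,v}^2/\pi_{g,v}$ by applying Bernstein's inequality to $\mu_{g,v} := \E_{\cS}[\mathbbm{1}[g(x)=1, f(x)=v](q-\mathbbm{1}[s\le v])]$ and $\pi_{g,v} := \E_{\cS}[\mathbbm{1}[g(x)=1, f(x)=v]]$. These random variables are bounded in magnitude by $1$ and supported on $\{g=1, f=v\}$, so their variances are each at most $\pi_{g,v}$, giving $|\tilde\mu_{g,v}-\mu_{g,v}|,\,|\tilde\pi_{g,v}-\pi_{g,v}| \le \sqrt{2\pi_{g,v}L/n} + 2L/(3n)$ uniformly over $v$, where $L$ collects the pooled union-bound log factors. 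A short case split on the magnitude of $\pi_{g,v}$ then controls $|\mu_{g,v}^2/\pi_{g,v} - \tilde\mu_{g,v}^2/\tilde\pi_{g,v}|$: when $\pi_{g,v}$ comfortably exceeds the Bernstein radius, a Taylor-type sensitivity calculation (using $\tilde\pi_{g,v}\asymp\pi_{g,v}$ and $|\mu_{g,v}|/\pi_{g,v}\le 1$) gives $O(\sqrt{\pi_{g,v}L/n})$; when it does not, the trivial bound $\mu^2/\pi \le \pi$ shows both ratios are $O(L/n)$.

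Summing over the $m+1$ level sets yields the two terms appearing in $\alpha'$: Cauchy--Schwarz on the large-regime summands gives $\sum_v \sqrt{\pi_{g,v}L/n} \le \sqrt{(m+1)p_g L/n} = O(\sqrt{\rho^2 L/(\alpha n)})$ since $p_g \le 1$, and the small-regime summands contribute $O(mL/n) = O(\rho^2 L/(\alpha n))$. Assembling all three ingredients --- the $T$-union bound, the model-count union bound, and the per-level-set concentration --- shows that with probability at least $1-\delta$, for every $T$, every possible round-$T$ output model, and every $g\in\cG$,
\[
    |p_g Q(\cS;f,g) - \tilde p_g \tilde Q(\tilde\cS;f,g)| \le O\!\left(\sqrt{\tfrac{\rho^2 L}{\alpha n}}\right) + O\!\left(\tfrac{\rho^2 L}{\alpha n}\right).
\]
Applied to the actual output $f_T$, which satisfies the empirical termination condition $\tilde p_g \tilde Q(\tilde\cS;f_T,g) \le \alpha$, this yields $p_g Q(\cS;f_T,g) \le \alpha'$, and the claimed leading constants $21$ and $12$ come out after carefully tracking Bernstein's constants through the two-regime argument.

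The main obstacle is the instability of the map $(\mu,\pi)\mapsto \mu^2/\pi$ when $\pi$ is small: a direct Lipschitz bound would introduce a $1/\pi$ blow-up and spoil the overall rate. The two-regime split is what resolves this --- in the small regime one abandons concentration of the ratio in favor of the deterministic inequality $\mu^2/\pi \le \pi$, which is itself $O(L/n)$ precisely when $\pi$ is that small. A secondary but routine bookkeeping task is folding the three sources of log factors (from the $T$ union bound, the $N_T$ model-count union bound, and the $O(m)$ level-set union bound) into a single $L$ whose dependence on $T,|\cG|,\rho,\alpha,\delta$ matches the statement.
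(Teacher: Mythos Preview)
Your proposal is correct and follows essentially the same strategy as the paper. The paper also union-bounds over $T$ via $\delta_t = 6\delta/(\pi^2 t^2)$, union-bounds over the at most $(4m^2|\cG|)^t$ possible round-$t$ models, applies a per-$(g,v)$ concentration lemma with the same two-regime split on the mass $\Pr[x\in B]$ (small mass handled by the trivial $\mu^2/\pi\le\pi$ bound, large mass by concentration of the ratio), and then sums over level sets via the concavity of $\sqrt{\cdot}$ (equivalent to your Cauchy--Schwarz step). The only cosmetic differences are that the paper uses multiplicative Chernoff rather than Bernstein (same variance-sensitive effect), and parameterizes each term as $\Pr[B](q-\Pr[s\le f(x)\mid B])^2$ rather than your equivalent $\mu^2/\pi$; both routes track through to the same constants.
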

We then prove a worst-case upper bound on the convergence time $T$, which establishes a worst-case PAC style generalization bound in combination with Theorem \ref{thm:BatchMVP_alpha_prime_bound}. We remark that although the theorem contains a large constant, in our experiments, our algorithm halts after a small number of iterations $T$ (See  Sections \ref{sec:experiments} and \ref{app:experiments}), and we can directly apply Theorem \ref{thm:BatchMVP_alpha_prime_bound} with this empirical value for $T$.
\begin{restatable}{theorem}{BatchMVPsamplecomplexity} \label{thm:BatchMVP_sample_complexity}
Suppose $\cS$ is $\rho$-Lipschitz and continuous, $m=\frac{\rho^2}{2\alpha}$, and our calibration set $S \sim \cS^n$ consists of $n$ iid.\ samples drawn from $\cS$, where $n$ is sufficiently large:
\iffull
$$n \ge 92928\left(\ln\left(\frac{128\rho^3}{\alpha^2\delta'}\right) + \frac{8\rho^3}{\alpha^2} \ln\left(\frac{\rho^4|\cG|}{\alpha^2}\right)\right)\max\left(\frac{\rho^4}{4\alpha^4},\frac{\rho^6}{\alpha^4}\right)$$
\else
$n \ge 92928\left(\ln\left(\frac{128\rho^3}{\alpha^2\delta}\right) + \frac{8\rho^3}{\alpha^2} \ln\left(\frac{\rho^4|\cG|}{\alpha^2}\right)\right)\max\left(\frac{\rho^4}{4\alpha^4},\frac{\rho^6}{\alpha^4}\right)$. \fi
Then \texttt{BatchMVP}($f,\alpha,q,\cG,\rho, \tilde{\cS}_S, m$) (Algorithm~\ref{alg:batch-quantile-multicalibrator}) halts after $T \leq \frac{8\rho^3}{\alpha^2}$ rounds with prob.\ $1-\delta$.
\end{restatable}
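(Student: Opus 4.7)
The plan is to bound the iteration count $T$ by tracking pinball loss on the \emph{true} distribution $\cS$, not the empirical distribution $\tilde{\cS}_S$, since only the former is known to be $\rho$-Lipschitz. First I would observe that if BatchMVP has not halted at iteration $t$, its selection rule ensures that there is a pair $(v_t,g_t)$ with empirical QCE-mass product $\widetilde{\Pr}[f_t(x)=v_t,\,g_t(x)=1]\cdot(q-\widetilde{\Pr}[s\le f_t(x)\mid f_t(x)=v_t,\,g_t(x)=1])^2 \ge \alpha/(m+1)$, by pigeonholing the non-halting condition $\sum_v(\cdot)>\alpha$ over at most $m+1$ level sets of $f_t$.

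Next, I would establish a uniform concentration event over all pairs $(g,v)\in\cG\times\{0,1/m,\ldots,1\}$ and over all models $f$ that can be reached within $T\le 8\rho^3/\alpha^2$ iterations; the latter set has size at most $(|\cG|\cdot m\cdot m)^T$, reusing the counting argument from Theorem~\ref{thm:BatchMVP_alpha_prime_bound}. The sample-complexity hypothesis is tuned precisely so that after a union bound over this finite family, the empirical and true values of $\Pr[x\in B]$ and $\Pr[s\le f(x)\mid x\in B]$ agree on every relevant $B$ up to a radius $O(\alpha^2/\rho^2)$. This transfer ensures that (i) the true QCE-mass product on $B_t$ is still at least $\alpha/(2(m+1))$, and (ii) the empirically chosen discretized shift $\Delta_t$ is within $O(1/m) = O(\alpha/\rho^2)$ of the true marginal-consistency shift on $\cS|B_t$.

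Third, I would invoke Lemma~\ref{lem:patch-pinball-loss-change} on the true distribution $\cS$ applied to $B_t$ to conclude that an idealized patch would reduce $PB^\cS_q$ by at least $\tfrac{\alpha/(2(m+1))}{2\rho} = \Omega(\alpha^2/\rho^3)$ per iteration, and then use the two-sided bound of Lemma~\ref{lem:pinball-loss-change} to conclude that the algorithm's near-optimal $\Delta_t$ retains a constant fraction of this progress. Since $PB^\cS_q$ lives in a bounded range, this caps the total number of iterations; plugging in $m=\rho^2/(2\alpha)$ and carefully tracking constants delivers the target $T\le 8\rho^3/\alpha^2$.

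The main obstacle is the circularity between concentration and progress: the union-bound radius depends on $T$, but the bound on $T$ itself only holds under that concentration event. I would resolve this by fixing $T^{\max}=8\rho^3/\alpha^2$ \emph{a priori}, requiring concentration uniformly over the $(|\cG|m^2)^{T^{\max}}$ possible patch histories, and then arguing inductively that under this event the algorithm's per-step decrease in $PB^\cS_q$ forces it to halt before $T^{\max}$ is reached---so the very $T$ used in the union bound is never exceeded. A secondary obstacle is that the algorithm's $\Delta_t$ is empirical and discretized rather than the true argmin; this is absorbed by the $1/m$ slack built into the choice $m=\rho^2/(2\alpha)$, which is small enough that the induced loss in per-step progress is dominated by the $\Omega(\alpha^2/\rho^3)$ term.
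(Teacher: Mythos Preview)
Your outline follows the paper's proof closely: the pigeonhole on level sets, transferring the empirical QCE--mass product to the true distribution via a union bound over the $(O(m^2|\cG|))^T$ reachable models, invoking Lemma~\ref{lem:patch-pinball-loss-change} for the idealized patch, and fixing $T^{\max}=8\rho^3/\alpha^2$ a priori to break the concentration/convergence circularity are all exactly what the paper does. The substantive gap is claim (ii). The $\rho$-Lipschitz condition only yields $|\Pr_{\cS|B_t}[s\le f_t+\Delta]-q|\le\rho\,|\Delta-\Delta^*|$; there is no reverse inequality. So while concentration does imply that $\Delta_t$ has small \emph{true coverage error} (at most $\rho/(2m)+2\epsilon$), it does \emph{not} follow that $|\Delta_t-\Delta^*|=O(1/m)$. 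For instance, if the true conditional CDF on $B_t$ rises to $q+\rho/(2m)$ immediately after $\Delta^*$ and then plateaus at that value for length $\Omega(1)$, every grid point on the plateau has identical true error $\rho/(2m)$, and after $\pm\epsilon$ empirical noise the algorithm may select $\Delta_t$ anywhere along it. The two-sided bound of Lemma~\ref{lem:pinball-loss-change} then only gives $PB^\cS_q(f_{t+1})-PB^\cS_q(f_t^*)\le \alpha'\,|\Delta_t-\Delta^*|$ with $\alpha'=O(\rho/m)=O(\alpha/\rho)$ and $|\Delta_t-\Delta^*|=\Omega(1)$, i.e.\ an increase of order $\alpha/\rho$, which dominates the $\Theta(\alpha^2/\rho^3)$ progress whenever $\alpha\ll\rho^2$.

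The paper sidesteps this by tracking the \emph{empirical} pinball loss $PB^{\tilde\cS}$ as its potential rather than $PB^\cS$. It proves the progress of the idealized-then-rounded patch on $PB^\cS$ just as you propose, but then applies an additive Chernoff bound directly to the bounded pinball-loss values of $f_t,\tilde f_t^*,f_{t+1}$ (a concentration ingredient beyond the CDF/mass concentration you invoke) to move that progress to $PB^{\tilde\cS}$. The remaining term $PB^{\tilde\cS}(f_{t+1})-PB^{\tilde\cS}(\tilde f_t^*)$ is then dispatched without any distance bound on $|\Delta_t-\Delta^*|$: since $\Delta\mapsto PB^{\tilde\cS|B_t}(f_t+\Delta)$ is convex with derivative $\widetilde{\Pr}[s\le f_t+\Delta\mid B_t]-q$, the algorithm's rule of minimizing $|\widetilde{\Pr}-q|$ over the grid is argued to coincide with minimizing empirical pinball loss over the grid, giving $PB^{\tilde\cS}(f_{t+1})\le PB^{\tilde\cS}(\tilde f_t^*)$ for free. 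This empirical-optimality step is what your plan is missing.
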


Formal generalization arguments are in Appendix~\ref{app:generalization}. Theorems~\ref{thm:BatchMVP_alpha_prime_bound}, \ref{thm:BatchMVP_sample_complexity} are proved in Appendix~\ref{app:BatchMVP_generalization}.

  

\section{Experiments} \label{sec:experiments}

\subsection{A Synthetic Regression Task}
We first consider a linear regression problem on synthetic data drawn from a feature domain of 10 binary features and 90 continuous features, with each binary feature drawn uniformly at random, and each continuous feature drawn from a normal distribution $\mathcal{N}(0, \sigma^2_x)$. The 10 binary features are used to define 20 intersecting groups, each depending on the value of a single binary feature. An input's label is specified by an ordinary least squares model with group-dependent noise as: 
\iffull
\begin{equation*}
    y = \langle \theta, x \rangle + \mathcal{N}\left(0, \sigma^2 + \sum_{i = 1}^{10} \sigma^2_i x_i \right)
\end{equation*}
\else
$y = \langle \theta, x \rangle + \mathcal{N}\left(0, \sigma^2 + \sum_{i = 1}^{10} \sigma^2_i x_i \right)$, 
\fi 
where each  term $\sigma^2_i$ is associated with one binary feature. We set $\sigma^2_i = i$ for all $i \in [10]$. So the more groups an input  $x$ is a member of, the more label noise it has, with larger index groups contributing larger amounts of label noise. 

We generate a dataset $\{(x_i, y_i)\}$ of size 40000 using the above-described model, and split it evenly into training and test data. The training data is further split into training data of size 5000 (with which to train a least squares regression model $f$) and calibration data of size 15000 (with which to calibrate our various conformal prediction methods). Given the trained predictor $f$, we use non-conformity score $s(x,y) = |f(x) - y|$. Next, we define the set of groups $\mathcal{G} = \{g_1, g_2, \cdots, g_{20}\}$ where for each $j \in [20]$, $g_j = \{x \in \mathcal{X} \mid x_{\lfloor (j+1)/2\rfloor} \equiv_2 j + 1\}$. We run Algorithm 1 (\texttt{BatchGCP}) and Algorithm 2 (\texttt{BatchMVP} with $m=100$ buckets) to achieve group-conditional and full multivalid coverage guarantees respectively, with respect to $\mathcal{G}$ with target coverage $q = 0.9$. We compare the performance of these methods to naive split-conformal prediction (which, without knowledge of $\mathcal{G}$, uses the calibration data to predict a single threshold) and the method of \cite{barber2019limits} which predicts a threshold for each group in $\mathcal{G}$ that an input $x$ is part of, and chooses the most conservative one. 

\begin{figure}
    \centering
    \includegraphics[width=0.49\linewidth]{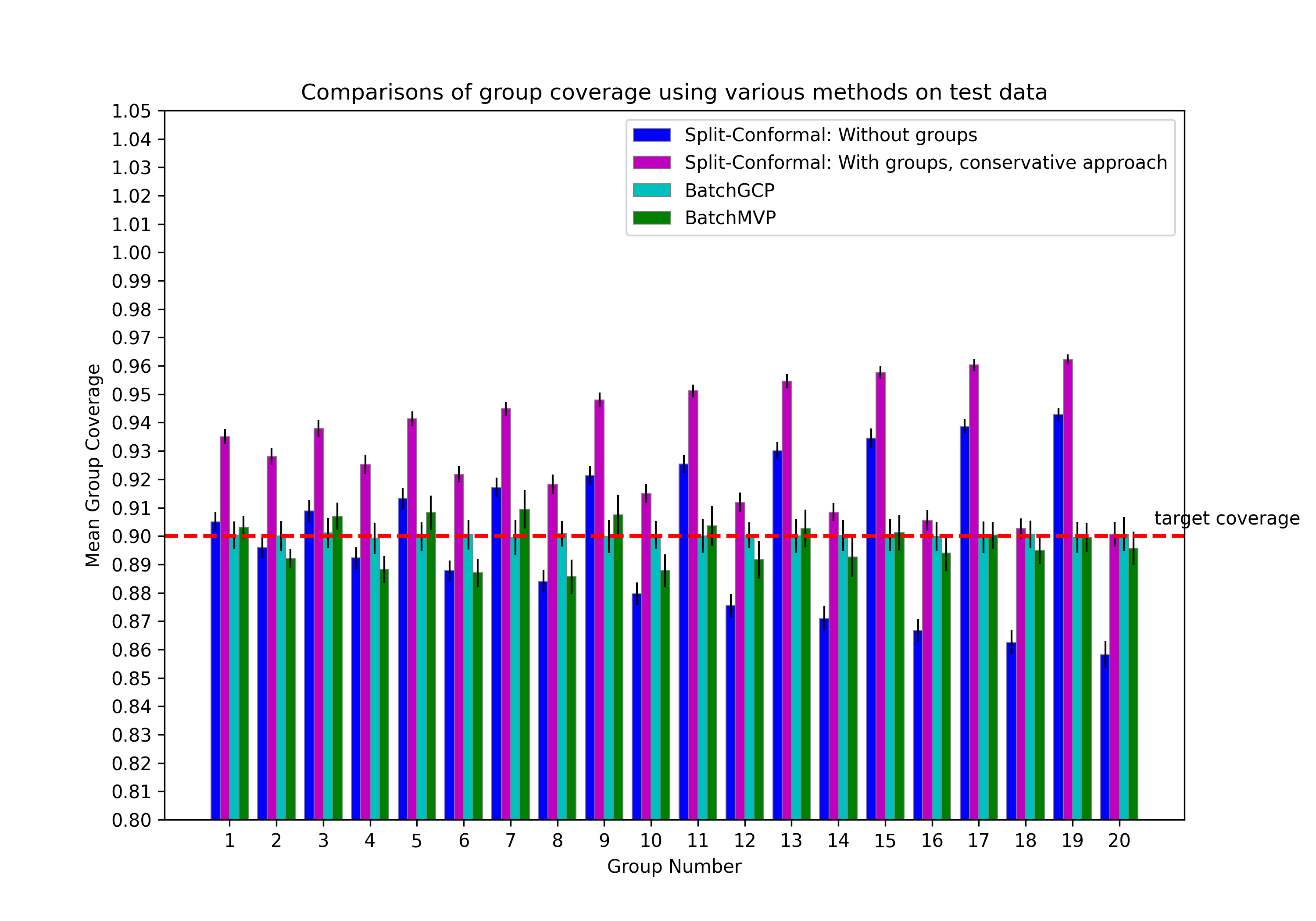}
    \includegraphics[width=0.49\linewidth]{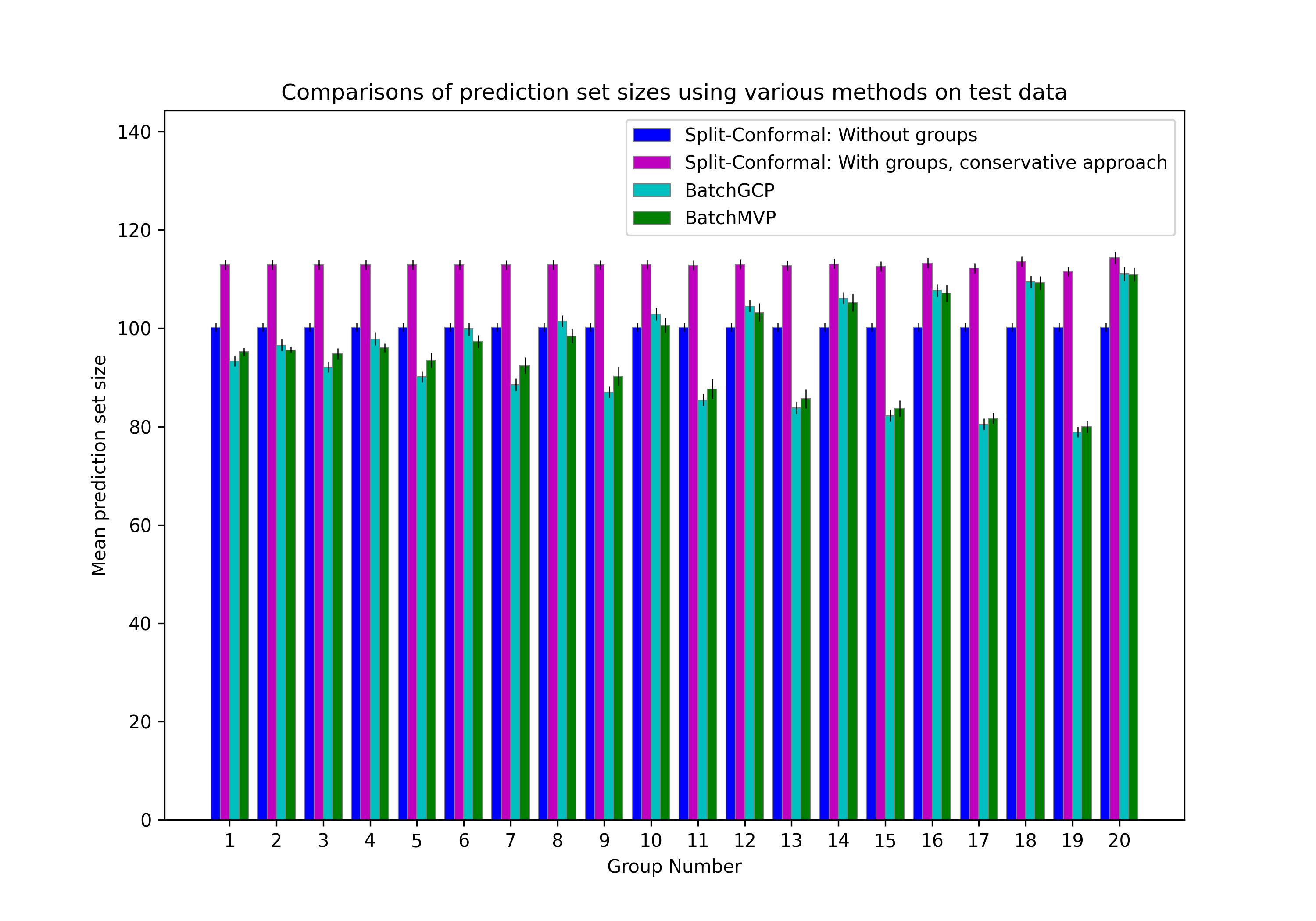}
    \caption{Performance comparisons across different conformal prediction methods. Groupwise coverage is on the left, and mean prediction-set size by group is on the right (averaged over 50 runs). Error bars show standard deviation.}
    \label{fig:increasing_noise}
\end{figure}

\begin{figure}
    \centering
     \includegraphics[width=0.49\linewidth]{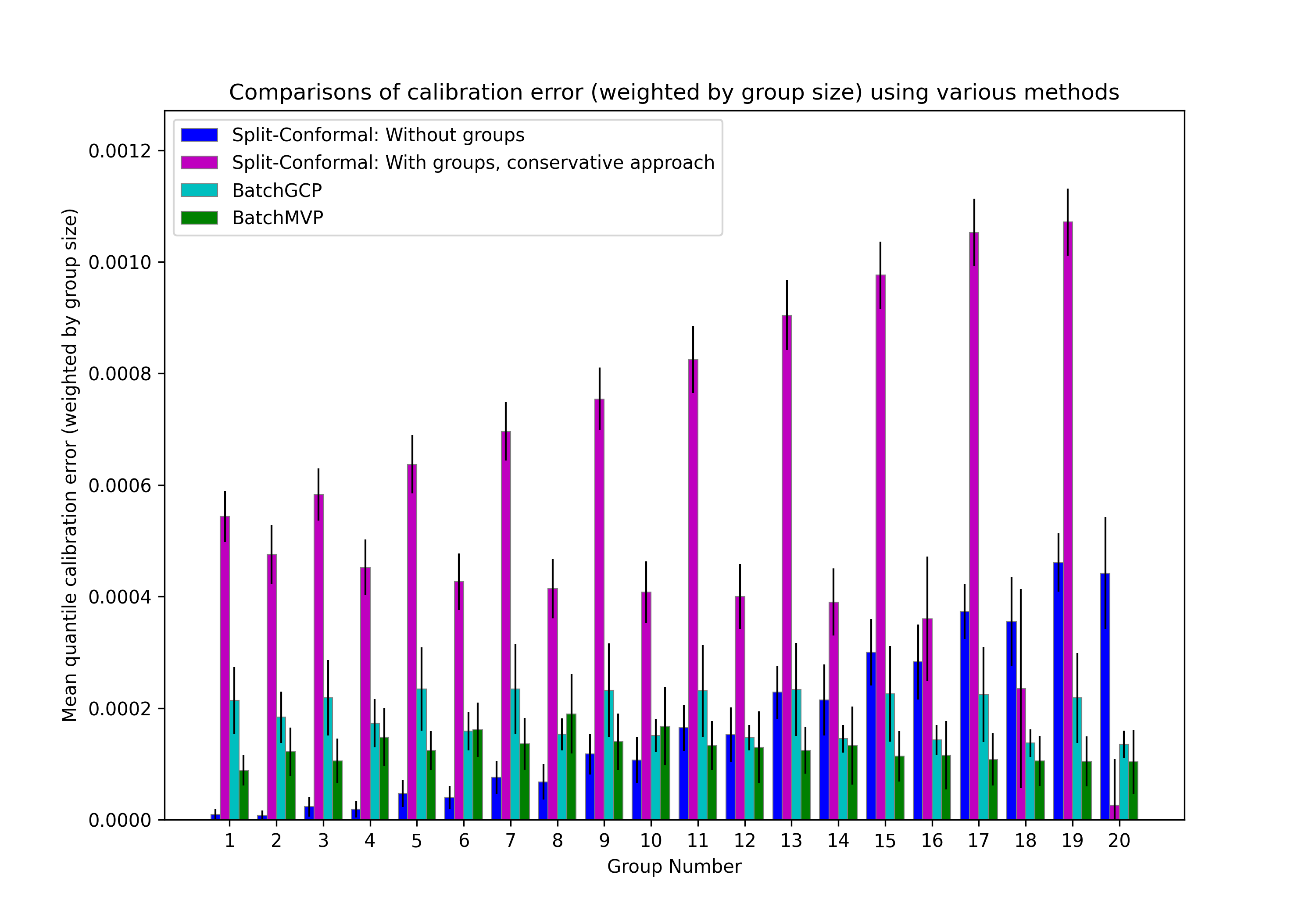}
    \includegraphics[width=0.49\linewidth]{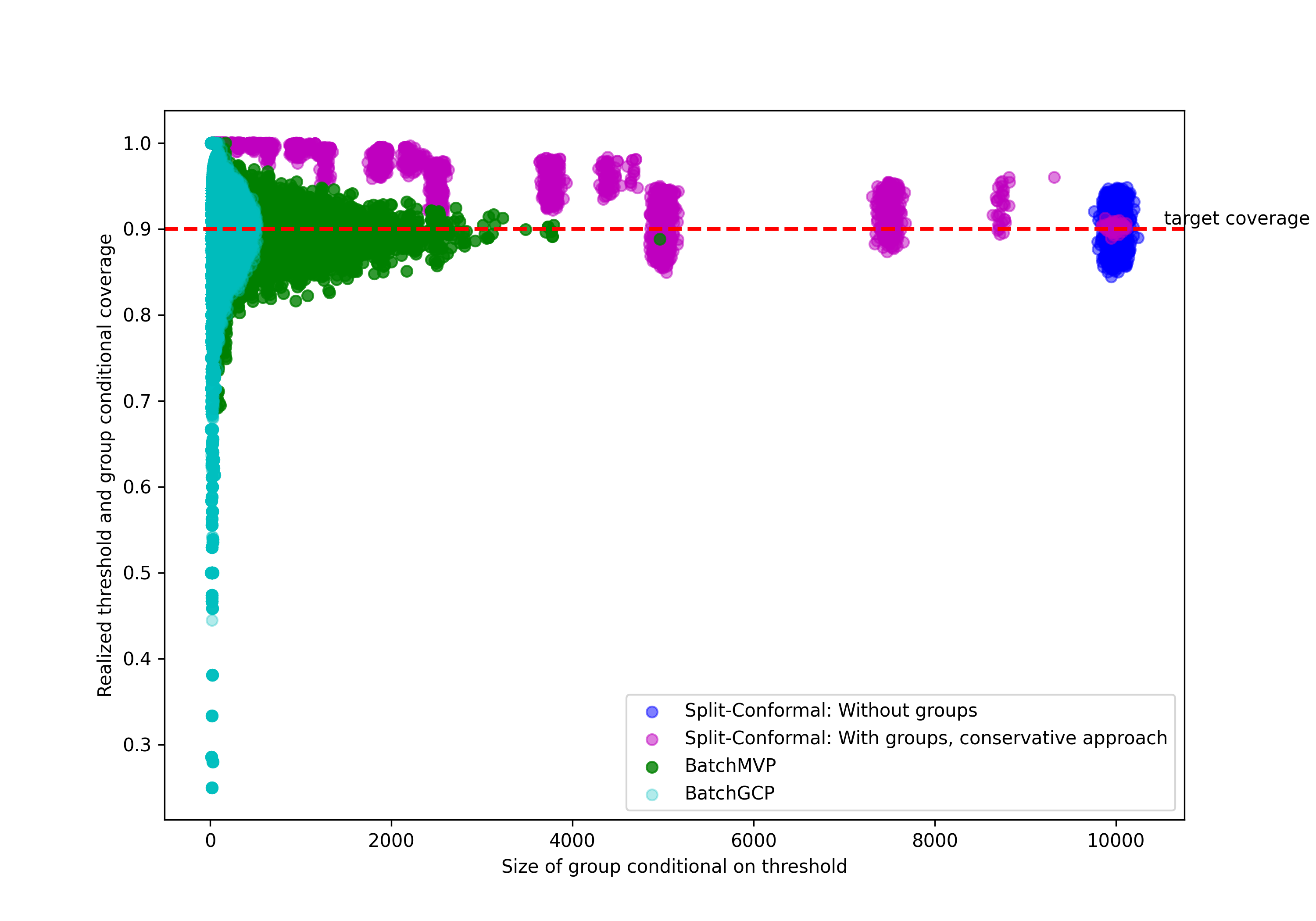}
    \caption{The left figure plots group-wise calibration error (averaged over 50 runs), weighted by group size. Error bars show standard deviation. The right figure is a scatterplot of the number of points associated with each threshold-group pair $(g, \tau_i)$ against the average coverage conditional on that pair for all $g \in \mathcal{G}$ and all $\tau_i$ in a grid, over all tested conformal prediction methods (consolidating results over all 50 runs). Target coverage is $q = 0.9$.}
    \label{fig:multivalid_scatterplot_synthetic}
\end{figure}

Figures~\ref{fig:increasing_noise} and~\ref{fig:multivalid_scatterplot_synthetic} present results over 50 runs of this experiment. Notice that both \texttt{BatchGCP} and \texttt{BatchMVP} achieve close to the desired group coverage across all groups---with \texttt{BatchGCP} achieving nearly perfect coverage and \texttt{BatchMVP} sometimes deviating from the target coverage rate by as much as 1\%. In contrast, the method of \cite{barber2019limits} significantly overcovers on nearly all groups, particularly low-noise groups, and naive split-conformal starts undercovering and overcovering on high-noise and low-noise groups respectively as the expected label-noise increases. Group-wise calibration error is high across all groups but the last using the method of \cite{barber2019limits}, and naive split-conformal has low calibration error on groups where inclusion/exclusion reflects less fluctuation in noise, and higher calibration error in groups where there is much higher fluctuation in noise based on inclusion. Both \texttt{BatchGCP} and \texttt{BatchMVP} have  lower group calibration errors --- interestingly, \texttt{BatchGCP} appears to do nearly as well as \texttt{BatchMVP} in this regard despite having no theoretical guarantee on threshold calibration error. 
A quantile multicalibrated predictor must have low coverage error conditional on groups $g$ and thresholds $\tau_i$ that appear frequently, and may have high coverage error for pairs that appear infrequently---behavior that we see in \texttt{BatchMVP} in Figure~\ref{fig:multivalid_scatterplot_synthetic}. On the other hand, for both naive split conformal prediction and the method of \cite{barber2019limits}, we see high mis-coverage error even for pairs $(g, \tau_i)$ containing a large fraction of the probability mass. 


We fix the upper-limit of allowed iterations $T$ for \texttt{BatchMVP} to 1000, but typically the algorithm converges and halts in many fewer iterations. Across the 50 runs of this experiment, the average number of iterations $T$ taken to converge was $45.54 \pm 14.77$.

\subsection{An Income Prediction Task on Census Data}
\label{sec:folktables}
We also compare our methods to naive split conformal prediction and the method of \cite{barber2019limits} on real data from the 2018 Census American Community Survey Public Use Microdata, compiled by the Folktables package \citep{ding2021retiring}. This dataset records   information about individuals including race, gender and income. In this experiment, we generate prediction sets for a person's income while aiming for valid coverage on intersecting groups defined by race and gender. 

The Folktables package provides datasets for all 50 US states. We run experiments on state-wide datasets: for each state, we split it into 60\% training data $\mathcal{D}_{train}$ for the income-predictor, 20\% calibration data $\mathcal{D}_{calib}$ to calibrate the conformal predictors, and 20\% test data $\mathcal{D}_{test}$. After training the income-predictor $f$ on $\mathcal{D}_{train}$, we use the non-conformity score $s(x,y) = |f(x) - y|$. There are 9 provided codes for race\footnote{1. White alone, 2. Black or African American alone, 3. American Indian alone, 4. Alaska Native alone, 5. American Indian and Alaska Native tribes specified; or American Indian or Alaska Native, not specified and no other races, 6. Asian alone, 7. Native Hawaiian and other Pacific Islander alone, 8. Some Other Race alone, 9. Two or More Races.} and 2 codes for sex (1. Male, 2. Female) in the Folktables data. We define groups for five out of nine race codes (disregarding the four with the least amount of data) and both sex codes. We run all four algorithms (naive split-conformal, the method of \cite{barber2019limits}, \texttt{BatchGCP}, and \texttt{BatchMVP} with $m=300$ buckets) with target coverage $q = 0.9$.

We ran this experiment on the 10 largest states. Figure~\ref{fig:california-comparison} and Figure~\ref{fig:california-calibration-error} present comparisons of the performance of all four algorithms on data taken from  California, averaged over 50 different runs. Results on all remaining states are presented in Appendix \ref{app:Folktables-Extra}. 

Just as in the synthetic experiments, both \texttt{BatchGCP} and \texttt{BatchMVP} achieve excellent coverage across all groups---in fact now, we see nearly perfect coverage for \emph{both} \texttt{BatchGCP} and \texttt{BatchMVP}, with \texttt{BatchGCP} still obtaining slightly better group conditional coverage. In contrast, naive split-conformal prediction undercovers on certain groups and overcovers on others, and the method of \cite{barber2019limits} significantly overcovers on some groups (here, group 4 and 7). The conservative approach also generally yields prediction sets of much larger size. We see also that \texttt{BatchMVP} achieves very low rates of calibration error across all groups, outperforming \texttt{BatchGCP} in this regard. Calibration error is quite irregular across groups for both naive split-conformal prediction and for the method of \cite{barber2019limits}, being essentially zero in certain groups and comparatively much larger in others. The average number of iterations $T$ \texttt{BatchMVP} converged in was $10.64 \pm  1.12$.

\begin{figure}
    \centering
    \includegraphics[width=0.49\linewidth]{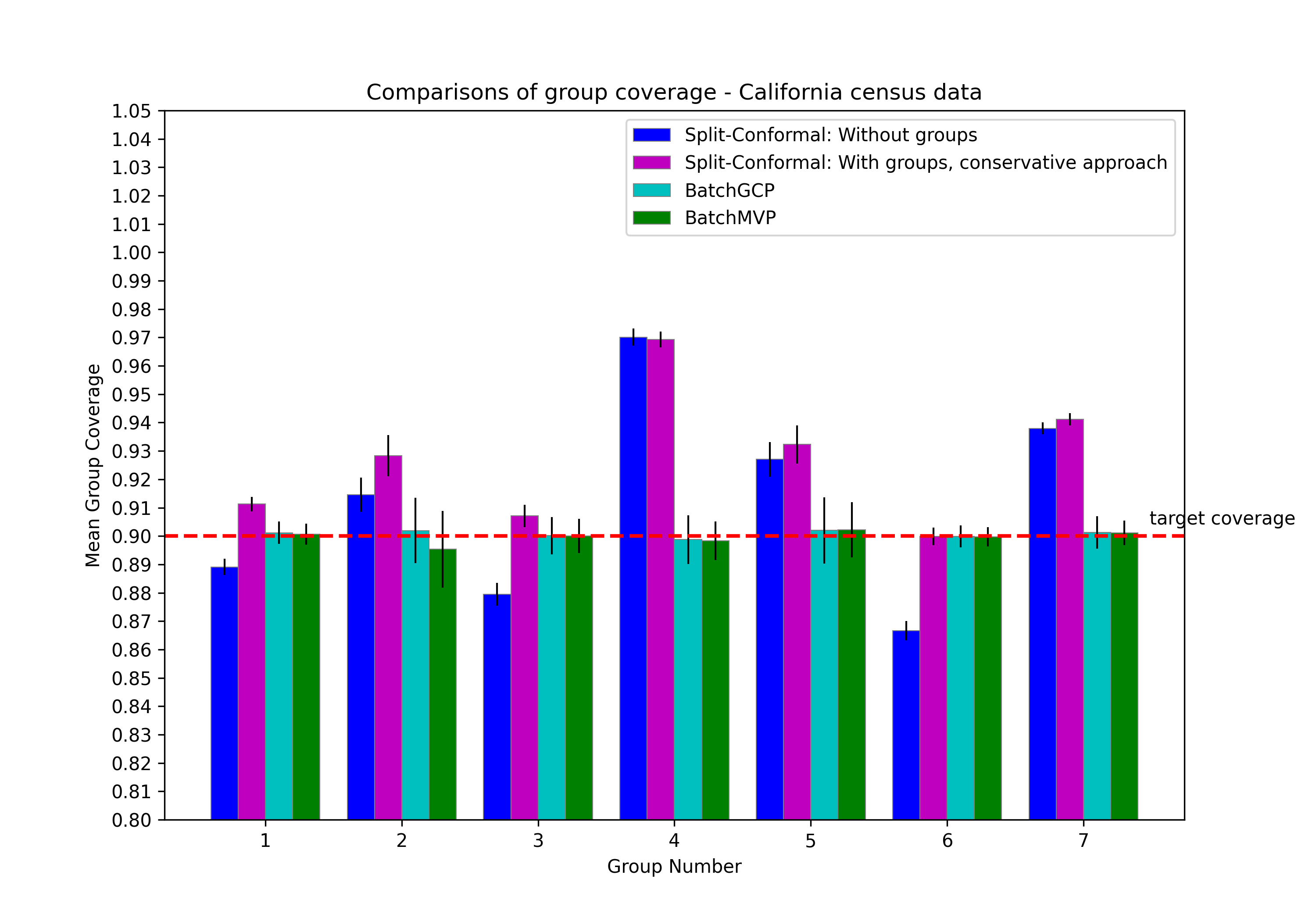}
    \includegraphics[width=0.49\linewidth]{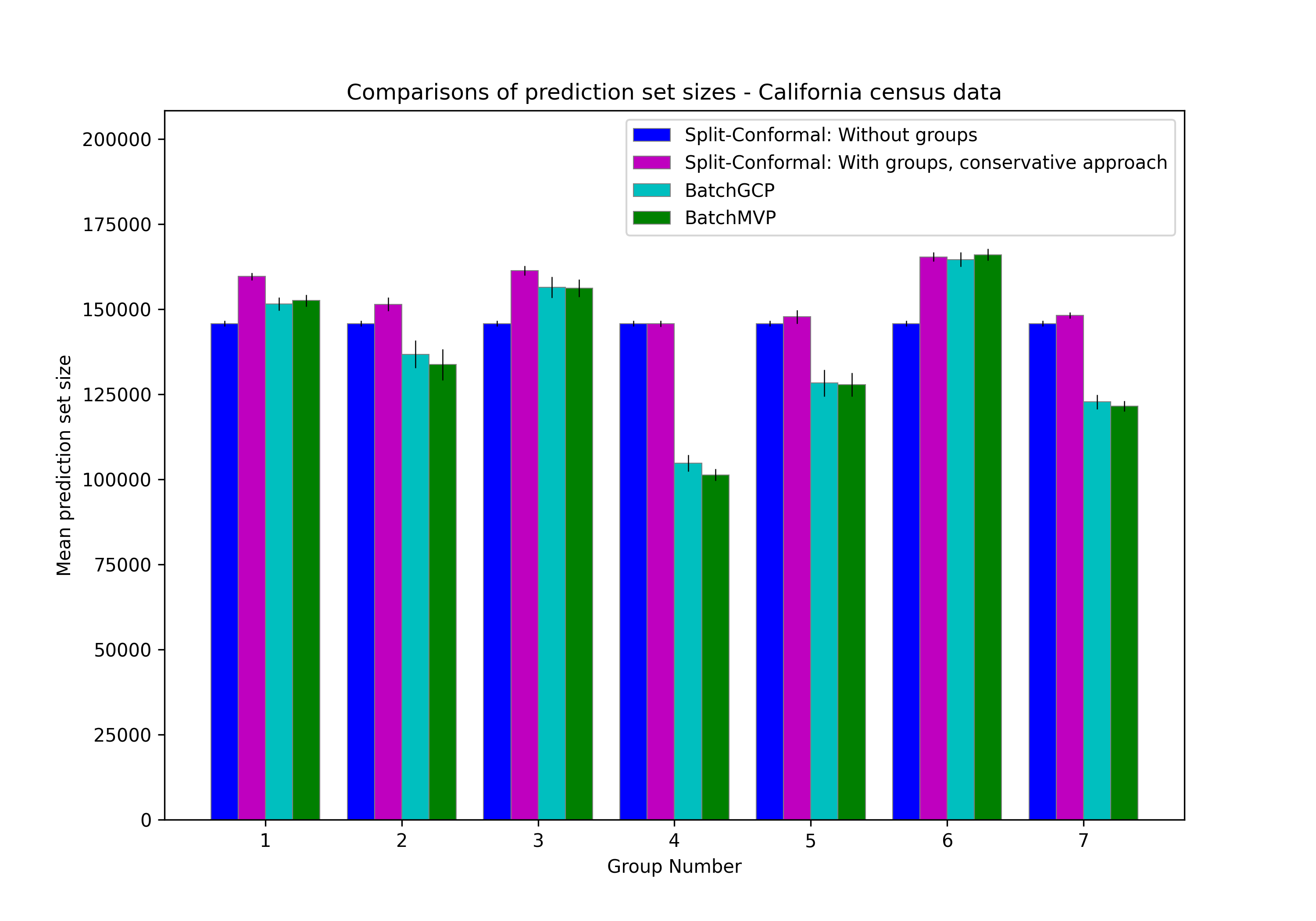}
    \caption{Performance across different conformal prediction methods on Folktables California data. Groupwise coverage is on the left, and mean prediction-set sizes for each group are on the right (averaged over 50 rounds). Error bars show standard deviation.}
    \label{fig:california-comparison}
\end{figure}

\begin{figure}
    \centering
    \includegraphics[width=0.49\linewidth]{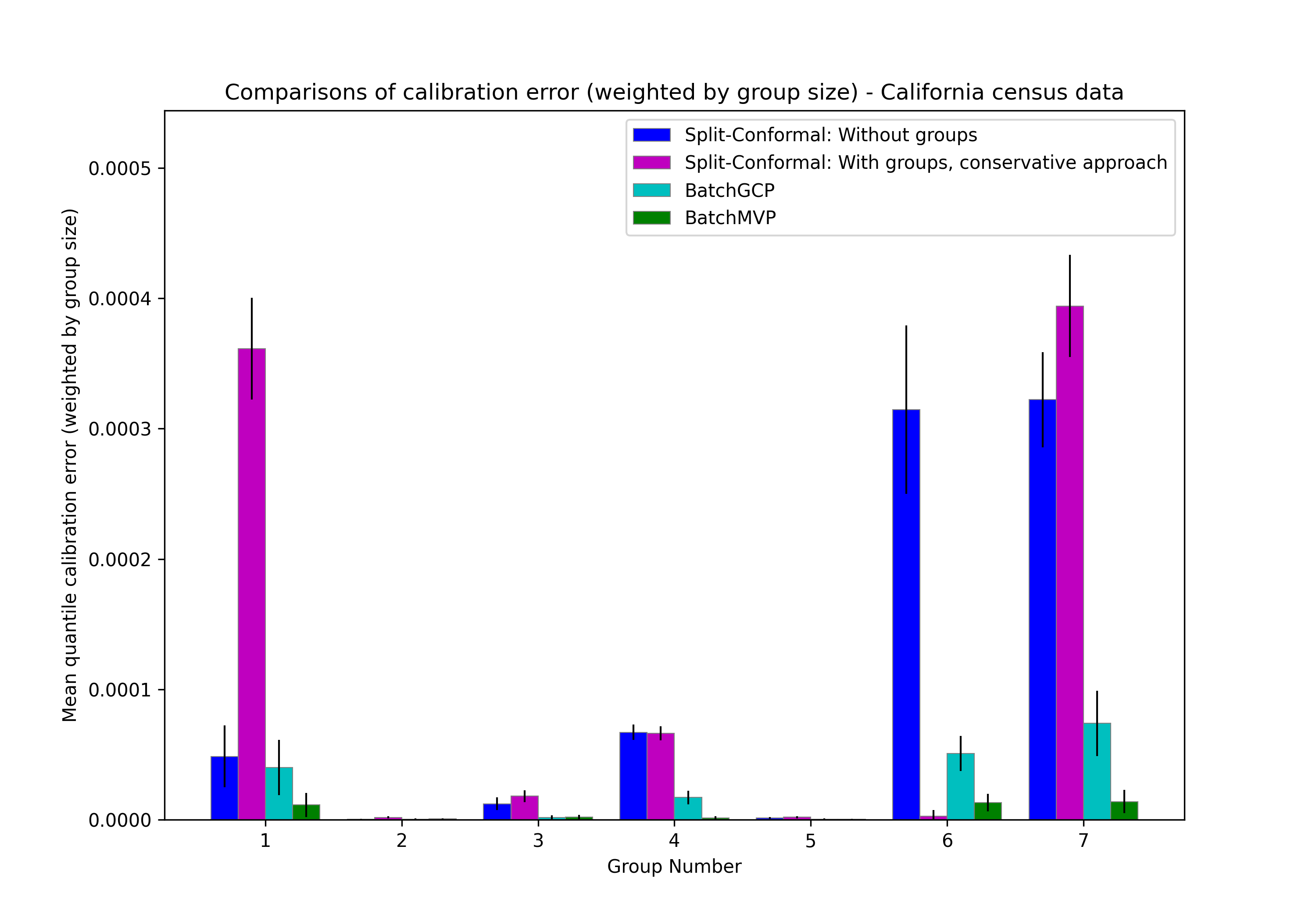}
    \includegraphics[width=0.49\linewidth]{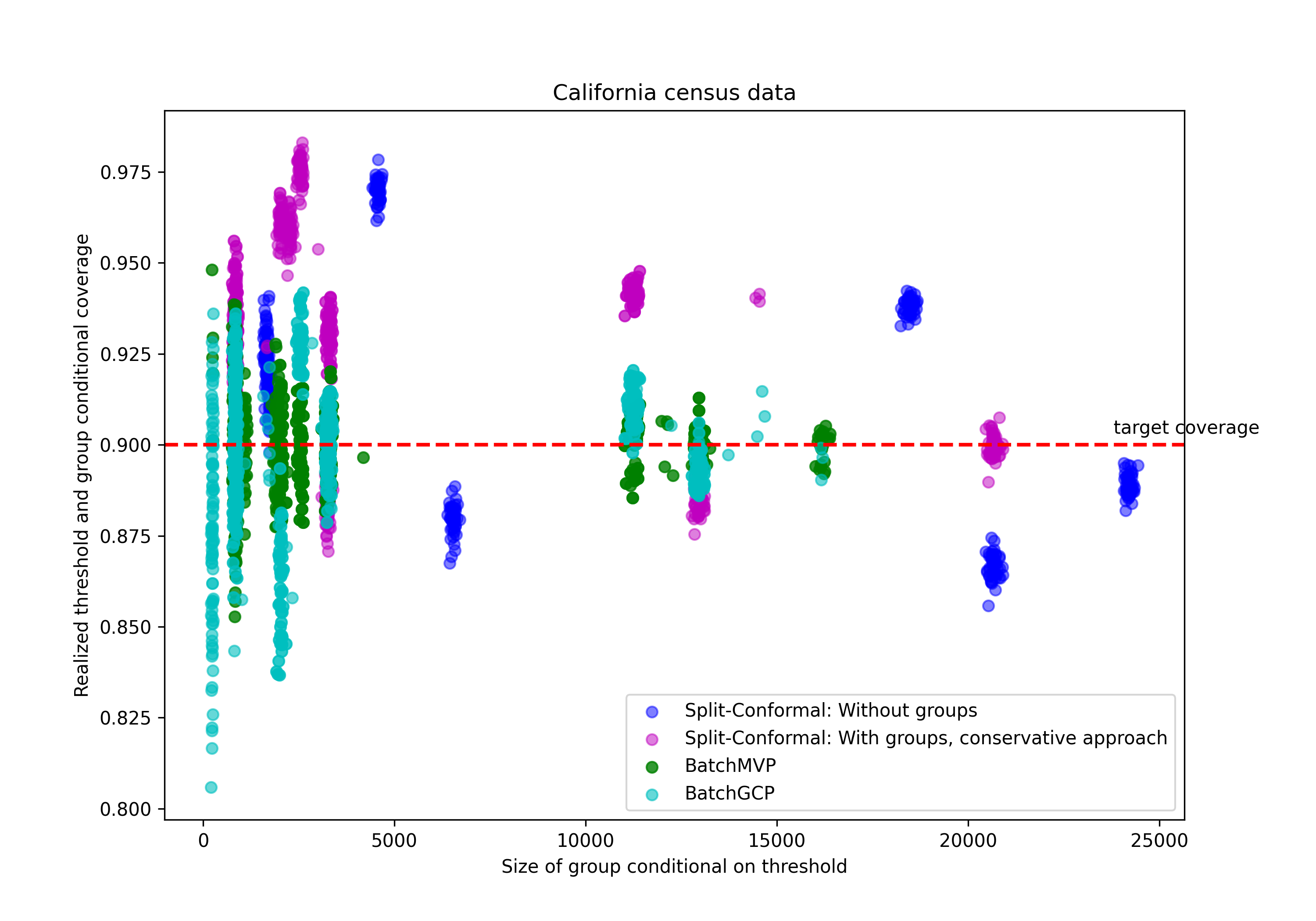}
    \caption{The left figure plots group-wise calibration error, weighted by group size (averaged over 50 runs). Error bars show standard deviation. The right figure is a scatterplot of the number of points associated with each threshold-group pair $(g, \tau_i)$ against the average coverage conditional on that pair for all $g \in \mathcal{G}$ and all $\tau_i$ in a grid, over all tested conformal prediction methods (consolidating results over all 50 runs). Target coverage is $q = 0.9$.}
    \label{fig:california-calibration-error} 
\end{figure}

\iffull
\paragraph{Acknowledgments} We thank Varun Gupta for useful conversations at an early stage of this work. This work was supported in part by the Simons Foundation collaboration on algorithmic fairness and NSF grants FAI-2147212 and CCF-2217062.

\else
\paragraph{Reproducibility statement} The full Python implementations of \texttt{BatchGCP} and \texttt{BatchMVP} can be found in the supplementary zip. Jupyter notebooks that implement each of our experiments are also included in the supplementary zip. For details on our experiments, such as how we generate the data, what conformal scores we use, how we instantiate \texttt{BatchMVP} and \texttt{BatchGCP}, please see Section~\ref{sec:experiments} and Appendix~\ref{app:experiments}, as well as the corresponding Jupyter notebooks. Proofs of all theoretical results are included, either in the main body of the paper or in the designated Appendices.
\fi

\bibliographystyle{plainnat}
\bibliography{ref}

\appendix

\iffull
\else
\section{Additional Related Work}
\label{app:RW}
Conformal prediction (see \cite{conformal, angelopoulos2021gentle} for excellent surveys) has seen a surge of activity in recent years. One large category of recent work has been the development of sophisticated non-conformity scores that yield compelling empirical coverage in various settings when paired with split conformal prediction \citep{papadopoulos2002inductive,lei2018distribution}. This includes \cite{romano2019conformalized} who give a nonconformity score based on quantile regression, \cite{angelopoulos2020uncertainty} and \cite{romano2020classification} who give nonconformity scores designed for classification, and \cite{hoff2021bayes} who gives a nonconformity score that leads to Bayes optimal coverage when data is drawn from the assumed prior distribution. This line of work is complementary to ours: we give algorithms that can be used as drop-in replacements for split conformal prediction, and can make use of any of these nonconformity scores. 

Another line of work has focused on giving group conditional guarantees. \cite{romano2020malice} note the need for group-conditional guarantees with respect to demographic groups when fairness is a concern, and propose separately calibrating on each group in settings in which the groups are disjoint. \cite{barber2019limits} consider the case of intersecting groups, and give an algorithm that separately calibrates on each group, and then uses the most conservative group-wise threshold when faced with examples that are members of multiple groups --- the result is that this method systematically over-covers. The kind of ``multivalid'' prediction sets that we study here were first proposed by \cite{momentmulti} in the special case of prediction intervals: but the algorithm given by \cite{momentmulti}, based on calibrating to moments of the label distribution and using Chebyshev's inequality, also generally leads to over-coverage. \cite{multivalid} gave a theoretical derivation of an algorithm to obtain tight multivalid prediction intervals in the sequential adversarial setting, and \cite{bastani2022practical} gave a practical algorithm to obtain tight multivalid coverage in the general case --- also in the sequential adversarial setting. Although the sequential setting is more difficult in the sense that it makes no distributional assumptions, it also requires that labels be available after predictions are made at test time, in contrast to the batch setting that we study, in which labels are not available. \cite{multivalid} give an online-to-batch reduction that requires running the sequential algorithm on a large calibration set, saving the algorithm's internal state at each iteration, and then deploying a randomized predictor that randomizes over the internal state of the algorithm across all rounds of training. This is generally impractical for large datasets; in contrast we give a direct, simple, deterministic predictor in the batch setting. 

Our algorithms can be viewed as learning quantile multiaccurate predictors and quantile multicalibrated predictors respectively --- by analogy to multiaccuracy \cite{multiaccuracy} and multicalibration \cite{multicalibration} which are defined with respect to means instead of quantiles. Their analysis is similar, but with pinball loss playing the role played by squared error in (mean) multicalibration. This requires analyzing the evolution of pinball loss under Lipschitzness assumptions on the underlying distribution, which is a complication that does not arise for means. More generally, our goals for obtaining group conditional guarantees for intersecting groups emerge from the literature on ``multigroup'' fairness --- see e.g.  \citep{gerrymandering, multicalibration,subgroup,multigroup2,dwork2019learning,globus2022multicalibrated}
\fi

\section{Missing Details from Section~\ref{sec:algs}}
\subsection{Missing Details from Section~\ref{subsec:alg-prelims}}
\begin{lemma}
\label{lem:expected-pinball-derivative}
Fix any $x \in \cX$ and $\tau \in [0,1]$.
\begin{align*}
    \E_{s \sim \cS(x)}\left[\frac{dL_q(f(x)+\tau, s)}{d\tau}\right] = \int_{[0,1]} \frac{dL_q(f(x)+\tau, s)}{d\tau} d\cS(x) = \Pr_{s \sim \cS(x)}[s \le f(x) + \tau] - q
\end{align*}
\end{lemma}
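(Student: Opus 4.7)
The plan is to compute the derivative $\frac{dL_q(f(x)+\tau, s)}{d\tau}$ pointwise in $s$, then interchange derivative and expectation to obtain the stated identity.

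First I would expand the definition of pinball loss as a function of $\tau$, with $s$ fixed:
\[
L_q(f(x)+\tau, s) = (s - f(x) - \tau)\, q \cdot \mathbb{1}[s > f(x)+\tau] + (f(x)+\tau - s)(1-q) \cdot \mathbb{1}[s \leq f(x)+\tau].
\]
On each open region $\{s > f(x)+\tau\}$ and $\{s < f(x)+\tau\}$, this is a linear function of $\tau$, with slopes $-q$ and $(1-q)$ respectively. Thus for every $s \neq f(x)+\tau$,
\[
\frac{dL_q(f(x)+\tau, s)}{d\tau} = -q \cdot \mathbb{1}[s > f(x)+\tau] + (1-q) \cdot \mathbb{1}[s \leq f(x)+\tau].
\]
The lone nondifferentiable point $s = f(x)+\tau$ is harmless because $\cS(x)$ is a continuous distribution, so this point has measure zero.

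Next I would justify the interchange of derivative and expectation, which is the main (mild) technical step. Since the nonconformity scores and thresholds lie in $[0,1]$, both $L_q$ and its (one-sided) difference quotients in $\tau$ are uniformly bounded by $1$, so I can invoke dominated convergence (or equivalently Leibniz's rule for differentiating under the integral sign, noting absolute continuity of $\tau \mapsto L_q(f(x)+\tau,s)$ with a uniformly bounded derivative). This yields
\[
\E_{s \sim \cS(x)}\!\left[\frac{dL_q(f(x)+\tau, s)}{d\tau}\right] = -q \cdot \Pr_{s \sim \cS(x)}[s > f(x)+\tau] + (1-q) \cdot \Pr_{s \sim \cS(x)}[s \leq f(x)+\tau].
\]

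Finally, I would simplify by writing $\Pr[s > f(x)+\tau] = 1 - \Pr[s \leq f(x)+\tau]$ and collecting terms:
\[
-q(1 - \Pr[s \leq f(x)+\tau]) + (1-q)\Pr[s \leq f(x)+\tau] = \Pr_{s \sim \cS(x)}[s \leq f(x)+\tau] - q,
\]
which is the desired identity. The hardest part is really just the interchange of differentiation and integration, and that is immediate given the bounded-domain assumption and continuity of $\cS(x)$; the rest is a direct computation.
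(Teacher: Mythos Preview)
Your proof is correct and follows essentially the same route as the paper: compute the pointwise derivative of the pinball loss in $\tau$ (which is $1-q$ when $s \le f(x)+\tau$ and $-q$ otherwise), integrate against $\cS(x)$, and simplify. You are in fact slightly more careful than the paper, since you explicitly justify the interchange of differentiation and expectation via dominated convergence and note that the nondifferentiable point $s = f(x)+\tau$ has measure zero under the continuous distribution $\cS(x)$.
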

\begin{proof}
\begin{align*}
    \E_{s \sim \cS(x)}\left[\frac{dL_q(f(x)+\tau), s}{d\tau}\right] &=  \int_{0}^1 \frac{d}{d\tau} L_q(f(x) + \tau, s) d\cS(x) \\
    &=  \int_{0}^\tau \frac{d}{d\tau} L_q(f(x) + \tau, s) d\cS(x) + \int_{\tau}^1 \frac{d}{d\tau} L_q(f(x) + \tau, s) d\cS(x) \\
    &= \int_{0}^\tau (1-q) d\cS(x) - \int_{\tau}^1 q d\cS(x)\\
    &= \Pr_{s \sim \cS(x)}[s \le \tau] - q
\end{align*}
\end{proof}

\lempinballlosschange*
\begin{proof}
\begin{align*}
&PB^\cS_q(f') - PB^\cS_q(f) \\
&= \int_{\cX \times [0,1]} L_q(f(x) + \Delta, s) - L_q(f(x), s) \cS(dx, ds) \\
&= \int_{\cX \times [0,1]} \int_{0}^\Delta \frac{dL_q(f(x) + \tau, s)}{d\tau} d\tau \cS(dx, ds) \\
&= \int_{0}^\Delta \int_{\cX \times [0,1]}  \frac{dL_q(f(x) + \tau, s)}{d\tau} \cS(dx, ds) d\tau \\
&= \int_{0}^\Delta \left(\E_{x \sim \cD_\cX}\left[\Pr_{s \sim \cS(x)}[s \le f(x) + \tau] - q \right]\right)d\tau \\
&= \int_{0}^\Delta \left(\Pr_{(x,s) \sim \cS}[s \le f(x) + \tau] - q\right) d\tau \\
&= \int_{0}^\Delta \Pr_{(x,s) \sim \cS}[s \leq f(x) + \tau] d\tau - \Delta q
\end{align*}
where the fourth equality follows from Lemma~\ref{lem:expected-pinball-derivative}.

For convenience, write $H_{\cS,f}(\tau) = \Pr_{(x,s) \sim \cS}[s \le f(x) + \tau]$. Note that 
\begin{align*}   
\int_{0}^\Delta H_{\cS, f}(\tau) d\tau &=\int_{0}^\Delta \Pr_{(x,s) \sim \cS}[s \le f(x) + \tau] d\tau\\
&=\int_{0}^\Delta \Pr_{(x,s) \sim \cS}[s \le f'(x) -\Delta + \tau]d\tau\\
&=-\int_{0}^{-\Delta} \Pr_{(x,s) \sim \cS}[s \le f'(x) + \tau]d\tau\\
&=-\int_{0}^{-\Delta} H_{\cS, f'}(\tau) d\tau,
\end{align*}
as instead of sweeping the area under the curve from $f(x)$ to $f(x) + \Delta$, we can sweep the area under the curve from $f'(x)$ to $f'(x) - \Delta$ because $f'(x) = f(x) + \Delta$. 
\begin{lemma}
Fix any conformity score distribution $\cS$ that is $\rho$-Lipschitz, $\Delta > 0$, and $f: \cX \to \mathbb{R}$. Then we have 
\begin{align*}
    &H_{\cS,f}(0) \Delta + \frac{(H_{\cS,f}(\Delta) - H_{\cS,f}(0))^2}{2\rho} \\
    &\le \int_{0}^\Delta H_{\cS,f}(\tau) d\tau \\
    &\le  H_{\cS,f}(\Delta)\Delta - \left(\frac{(H_{\cS,f}(\Delta) - H_{\cS,f}(0))^2}{2\rho}\right).
\end{align*}
\end{lemma}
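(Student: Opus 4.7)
The plan is to first observe that, as a consequence of the $\rho$-Lipschitz assumption on $\cS$, the function $H_{\cS,f}$ is itself non-decreasing and $\rho$-Lipschitz on $[0,\Delta]$. Indeed, writing $H_{\cS,f}(\tau) = \E_{x \sim \cD_\cX}\bigl[\Pr_{s \sim \cS(x)}[s \le f(x)+\tau]\bigr]$ and applying the Lipschitz bound pointwise in $x$ before taking expectation gives $H_{\cS,f}(\tau') - H_{\cS,f}(\tau) \le \rho(\tau'-\tau)$ for $0 \le \tau \le \tau' \le \Delta$.

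Abbreviate $h = H_{\cS,f}$, $a = h(0)$, $b = h(\Delta)$. Combining the Lipschitz bound with monotonicity, for every $\tau \in [0,\Delta]$,
\[
\max\bigl(a,\; b - \rho(\Delta-\tau)\bigr) \;\le\; h(\tau) \;\le\; \min\bigl(b,\; a + \rho\tau\bigr).
\]
For the lower bound on the integral, I would split the domain at $\tau^\ast = \Delta - \tfrac{b-a}{\rho}$, which is the crossover point of $a$ and $b - \rho(\Delta-\tau)$ (and lies in $[0,\Delta]$ because the Lipschitz property forces $b-a \le \rho\Delta$). Integrating the larger of the two lower bounds on each piece yields
\[
\int_0^\Delta h(\tau)\,d\tau \;\ge\; \int_0^{\tau^\ast} a\,d\tau + \int_{\tau^\ast}^\Delta \bigl(b-\rho(\Delta-\tau)\bigr)\,d\tau,
\]
and then a routine simplification — in particular, $\int_{\tau^\ast}^\Delta (\Delta-\tau)\,d\tau = \tfrac{(b-a)^2}{2\rho^2}$ — collapses the expression to $a\Delta + \tfrac{(b-a)^2}{2\rho}$.

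For the upper bound, I would symmetrically split at $\tau^{\ast\ast} = \tfrac{b-a}{\rho}$, the crossover point of $a+\rho\tau$ and $b$, and use the smaller of the two upper bounds on each piece:
\[
\int_0^\Delta h(\tau)\,d\tau \;\le\; \int_0^{\tau^{\ast\ast}} (a+\rho\tau)\,d\tau + \int_{\tau^{\ast\ast}}^\Delta b\,d\tau,
\]
which after the same kind of algebraic cleanup yields $b\Delta - \tfrac{(b-a)^2}{2\rho}$.

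There is no real obstacle here beyond being careful with the algebra: the heart of the argument is the observation that a non-decreasing, $\rho$-Lipschitz function taking values $a$ and $b$ at the endpoints of an interval is sandwiched between the two piecewise-linear envelopes described above, and both envelopes can be integrated in closed form. The only small sanity check is that $\tau^\ast \ge 0$ and $\tau^{\ast\ast} \le \Delta$, both of which follow from $b-a \le \rho\Delta$.
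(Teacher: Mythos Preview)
Your proposal is correct and follows essentially the same approach as the paper: both identify the extremal piecewise-linear envelopes for a nondecreasing $\rho$-Lipschitz function with given endpoint values and integrate them in closed form. The paper presents the argument geometrically (rectangle minus/plus triangle, with an accompanying figure), whereas you write the pointwise sandwich $\max(a,\,b-\rho(\Delta-\tau)) \le h(\tau) \le \min(b,\,a+\rho\tau)$ explicitly and integrate piece by piece; your version is in fact slightly more careful, since you explicitly verify that $H_{\cS,f}$ inherits the $\rho$-Lipschitz property from $\cS$, a step the paper leaves implicit.
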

\begin{proof}
For simplicity write $H(\tau) = H_{\cS, f}(\tau)$. Note that $H(\tau)$ is a non-negative function that is increasing in $\tau$.

\begin{figure}
    \centering
    
    \tikzset{every picture/.style={line width=0.75pt}} 

\begin{tikzpicture}[x=0.75pt,y=0.75pt,yscale=-1,xscale=1]

\draw  [dash pattern={on 0.84pt off 2.51pt}]  (174,87.47) -- (256,86.47) ;
\draw  [dash pattern={on 0.84pt off 2.51pt}]  (588.83,42.74) -- (588.41,333.83) ;
\draw    (175,334.47) -- (647,334.53) ;
\draw [shift={(649,334.53)}, rotate = 180.01] [color={rgb, 255:red, 0; green, 0; blue, 0 }  ][line width=0.75]    (10.93,-3.29) .. controls (6.95,-1.4) and (3.31,-0.3) .. (0,0) .. controls (3.31,0.3) and (6.95,1.4) .. (10.93,3.29)   ;
\draw    (174.83,334.92) -- (173.01,33.47) ;
\draw [shift={(173,31.47)}, rotate = 89.65] [color={rgb, 255:red, 0; green, 0; blue, 0 }  ][line width=0.75]    (10.93,-3.29) .. controls (6.95,-1.4) and (3.31,-0.3) .. (0,0) .. controls (3.31,0.3) and (6.95,1.4) .. (10.93,3.29)   ;
\draw [color={rgb, 255:red, 74; green, 144; blue, 226 }  ,draw opacity=1 ][line width=1.5]  [dash pattern={on 3.75pt off 3pt on 7.5pt off 1.5pt}]  (255.3,86.8) -- (175,253.47) ;
\draw [color={rgb, 255:red, 144; green, 19; blue, 254 }  ,draw opacity=1 ][line width=1.5]    (588.4,87.63) -- (505.24,253.52) ;
\draw [color={rgb, 255:red, 144; green, 19; blue, 254 }  ,draw opacity=1 ][line width=1.5]    (174.9,252.49) -- (506.07,252.49) ;
\draw [color={rgb, 255:red, 74; green, 144; blue, 226 }  ,draw opacity=1 ][line width=1.5]  [dash pattern={on 3.75pt off 3pt on 7.5pt off 1.5pt}]  (254.09,86.8) -- (589,87.21) ;
\draw  [dash pattern={on 0.84pt off 2.51pt}]  (255,43.47) -- (255.91,333.71) ;
\draw  [dash pattern={on 0.84pt off 2.51pt}]  (506.07,254.76) -- (508,334.47) ;
\draw  [dash pattern={on 0.84pt off 2.51pt}]  (583.4,87.28) -- (642,86.47) ;
\draw  [dash pattern={on 0.84pt off 2.51pt}]  (507,251.47) -- (643,252.47) ;

\draw (120.62,243.27) node [anchor=north west][inner sep=0.75pt]    {$H( 0)$};
\draw (120.01,81.33) node [anchor=north west][inner sep=0.75pt]    {$H( \Delta )$};
\draw (168.96,350.42) node [anchor=north west][inner sep=0.75pt]    {$0$};
\draw (579.98,349.24) node [anchor=north west][inner sep=0.75pt]    {$\Delta $};
\draw (220.69,342.52) node [anchor=north west][inner sep=0.75pt]    {$\frac{|H( \Delta ) -H( 0) |}{\rho }$};
\draw (439.66,342.9) node [anchor=north west][inner sep=0.75pt]    {$\Delta -\frac{|H( \Delta ) -H( 0) |}{\rho }$};

\end{tikzpicture}

    \caption{Upper and lower bounding the local area under $H$. The max-area CDF curve is in dashed blue, the min-area CDF curve is in solid purple.}
    \label{fig:lipschitz-distr-area-bound}
\end{figure}
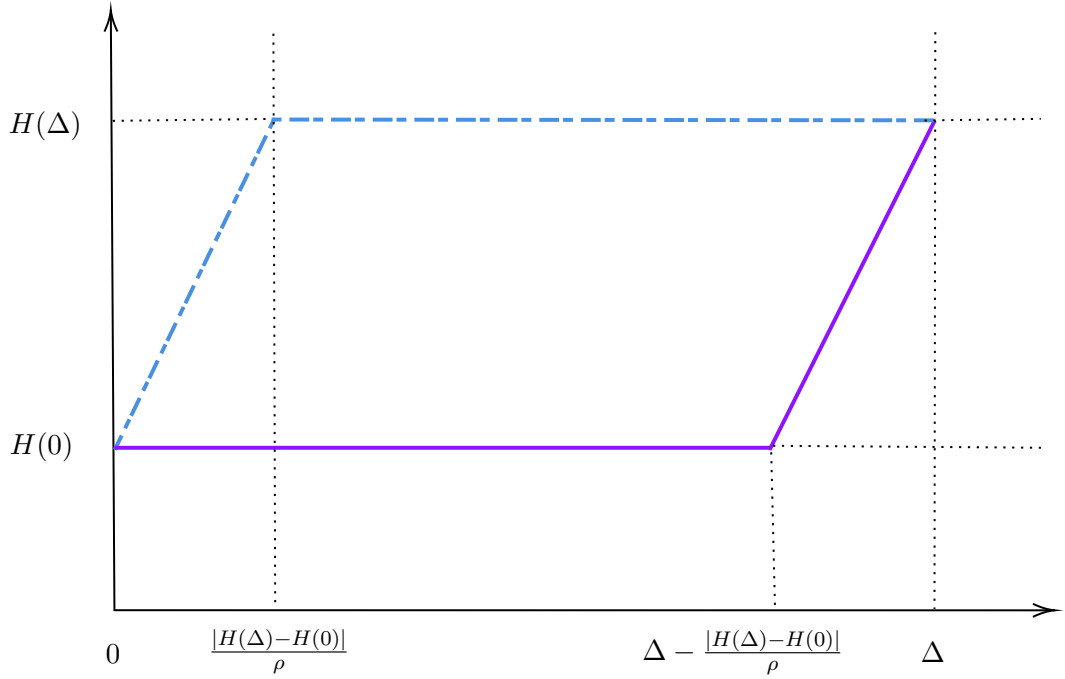

First, we find an upper bound the area. The maximum area that can be achieved is when there's a linear rate of increase from $y=H(0)$ to $y=H(\Delta)$ between $x=0$ and $x=\frac{H(\Delta) - H(0)}{\rho}$ as depicted in Figure~\ref{fig:lipschitz-distr-area-bound}. The area measured via the integral can be calculated by subtracting the area of the triangle from the area of the rectangle from $x=0$ to $x=\Delta$ and from $y=0$ to $y=H(\Delta)$.
\begin{align*}
    &\int_{0}^\Delta \Pr_{(x,s) \sim \cS}[s \leq f(x) + \tau] d\tau\\
    &\le H(\Delta)\Delta - \frac{(H(\Delta)-H(0))^2}{2\rho}.
\end{align*}

Now, we find a lower bound of the area. The area under the curve is minimized when there's a linear increase from $H(0)$ to $H(\Delta)$ between $x=\Delta - \frac{H(\Delta) - H(0)}{\rho}$ and $x=\Delta$. The area can be calculated as the sum of the area of the rectangle from $x=0$ to $x=\Delta$ and from $y=0$ to $y=H(0)$ and the area of the triangle. 
\begin{align*}
    &\int_{0}^\Delta \Pr_{(x,s) \sim \cS}[s \leq f(x) + \tau] d\tau\\
    &\ge H(0) \Delta + \frac{H(\Delta) - H(0)}{2\rho}\left(H(\Delta) - H(0)\right)
\end{align*}
\end{proof}

\paragraph{Case (i) $\Delta \ge 0$:}
We have $H_{\cS,f}(\Delta) =q$ and $H_{\cS, f}(0) = q-\alpha$. 
\begin{align*}
    PB_q(f') - PB_q(f) &\le H_{\cS,f}(\Delta)\Delta - \left(\frac{(H_{\cS,f}(\Delta) - H_{\cS,f}(0))^2}{2\rho}\right) -\Delta q\\
    &\le - \frac{\alpha^2}{2\rho}.
\end{align*}

On the other hand, 
\begin{align*}
    PB_q(f') - PB_q(f) &\ge H_{\cS,f}(0) \Delta + \frac{(H_{\cS,f}(\Delta) - H_{\cS,f}(0))^2}{2\rho} - \Delta q\\
    &= (q-\alpha)\Delta + \frac{\alpha^2}{2\rho} - \Delta q\\
    &= -\alpha \Delta + \frac{\alpha^2}{2\rho}.
\end{align*}

\paragraph{Case (ii) $\Delta < 0$:}
We have we have $H_{\cS,f}(\Delta) =H_{\cS,f'}(0)=q$ and $H_{\cS, f}(0) = H_{\cS, f'}(\Delta') = q + \alpha$ where $\Delta' = -\Delta$, we have
\begin{align*}
    PB_q(f') - PB_q(f) &= -\int_{0}^{-\Delta} H_{\cS, f'}(\tau) d\tau - q\Delta \\
    &= -\int_{0}^{\Delta'} H_{\cS, f'}(\tau) d\tau - q\Delta \\
    &\le -\left(H_{\cS,f'}(0) \Delta' + \frac{(H_{\cS,f'}(\Delta) - H_{\cS,f'}(0))^2}{2\rho}\right) - q\Delta\\
    &\le - \frac{\alpha^2}{2\rho}.
\end{align*}

On the other hand, we have
\begin{align*}
    PB_q(f') - PB_q(f) &= -\int_{0}^{-\Delta} H_{\cS, f'}(\tau) d\tau - q\Delta \\
    &= -\int_{0}^{\Delta'} H_{\cS, f'}(\tau) d\tau - q\Delta \\
    &\ge -\left(H_{\cS,f'}(\Delta')\Delta' - \left(\frac{(H_{\cS,f'}(\Delta') - H_{\cS,f'}(0))^2}{2\rho}\right)\right) - q\Delta\\
    &= (q+\alpha)\Delta + \frac{\alpha^2}{2\rho}\\
    &= \alpha \Delta +  \frac{\alpha^2}{2\rho}.
\end{align*}
\end{proof}

\lempatchpinballlosschange*
\begin{proof}
Note that $f$'s marginal quantile consistency error with respect to target quantile $q$ as measured on $\cS|B$ is
\[
    \left|\Pr_{(x,s)\sim\cS|B}[s \le f(x)] - q\right| \ge \sqrt{\frac{\alpha}{\Pr_{(x,s)\sim\cS}[x \in B]}}.
\]
Also, since $\cS|B$ is continuous, we have
\[
    q = \Pr_{(x,s) \sim \cS|B}[s \le f(x) + \Delta].
\]
In other words, $f'$ satisfies marginal quantile consistency for $q$ as measured on $\cS|B$.

Applying Lemma~\ref{lem:pinball-loss-change} to $\cS|B$, we have
\begin{align*}
& PB^\cS_q(f') - PB^\cS_q(f)\\
&= \Pr_{(x,s)\sim \cS}[x \in B]\cdot (PB^{\cS|B}_q(f') - PB^{\cS|B}_q(f))\\ 
&\le -\Pr_{(x,s)\sim \cS}[x \in B]\cdot \frac{\alpha}{2\rho \Pr_{(x,s)\sim \cS}[x \in B]}  \\
&= -\frac{\alpha}{2\rho}.
\end{align*}
\end{proof}

\subsection{Missing Details from Section~\ref{subsec:group-contional-guarantees}}
\thmgroupconditional*
\begin{proof}
Suppose $\hat{f}(x;\lambda^*)$ is not marginally quantile consistent on some $g' \in \cG$ --- i.e. $\Pr_{(x,s)}[g'(x)=1]\cdot(\Pr_{(x,s) \sim \cS}[s \le f(x)|g'(x)=1] - q)^2 > 0$. In other words, there exists some $\Delta \neq 0$ such that \[
    \Pr_{(x,s) \sim \cS}[s \le f(x)+\Delta|g'(x)=1] = q.
\]

Suppose we patch $\hat{f}(\cdot;\lambda^*)$
\[
    f' = \patch(\hat{f}(\cdot;\lambda^*), g', \Delta)
\]
which can be re-written as
\begin{align*}
    f'(x) &= \hat{f}(x; \lambda^*) + \Delta \cdot \ind[g'(x) = 1]\\
    &= f(x) + \left(\sum_{g \in \cG} \lambda_g \cdot g(x)\right) + \Delta \cdot \ind[g'(x) = 1]\\
    &= f(x) + \sum_{g \in \cG} \lambda'_g \cdot g(x)
\end{align*}
where $\lambda'_{g'} = \lambda^*_{g'} + \Delta$ and $\lambda'_{g} = \lambda^*_{g}$ for all other $g \neq g'$.

Lemma~\ref{lem:patch-pinball-loss-change} yields
\begin{align*}
    PB^\cS_q(\hat{f}(\cdot, \lambda')) - PB^\cS_q(\hat{f}(\cdot;\lambda^*)) < 0  
\end{align*}
which contradicts the fact that $\lambda^*$ is an optimal solution to $\min_\lambda \E_{(x,y) \sim \cD} \left[L_q\left(\hat f(x; \lambda), y \right) \right]$.
\end{proof}

\subsection{Missing Details from Section~\ref{subsec:full-multivalid-coverage}}

\multicalibrationtouncertainty*

\begin{proof}[Proof 
]
    Consider any $g \in \cG$. 
\begin{eqnarray*}
Q(f, g) &=& \sum_{v \in R(f)} \Pr_{(x,s) \sim \cS}[f(x)=v| g(x)=1] \left(q - \Pr_{(x,s) \sim \cS}[s \le f(x)| f(x)=v , g(x) = 1]\right)^2 \\
&\leq&  \frac{\alpha}{\Pr_{(x,s)\sim \cS}[g(x) = 1]}
\end{eqnarray*}

In particular, since each term in the sum is non-negative, we must have for every $v \in R(f)$:
\[\Pr_{(x,s) \sim \cS}[f(x)=v| g(x)=1] \left(q - \Pr_{(x,s) \sim \cS}[s \le f(x)| f(x)=v , g(x) = 1]\right)^2 \leq \frac{\alpha}{\Pr_{(x,s)\sim \cS}[g(x) = 1]}\]
Dividing both sides by $\Pr_{(x,s) \sim \cS}[f(x)=v| g(x)=1]$ we see that this is equivalent to:
\[\left(q - \Pr_{(x,s) \sim \cS}[s \le f(x)| f(x)=v , g(x) = 1]\right)^2 \leq \frac{\alpha}{\Pr_{(x,s)\sim \cS}[g(x) = 1, f_T(x) = v]}\]
Taking the square root yields our claim.
\end{proof}

\thmmainmultivalid*
\begin{proof}
\item
\paragraph{(1) Marginal quantile consistency error in each round $t$:} At any round $t$, if $f_t$ is not $\alpha$-approximately quantile multicalibrated with respect to $\cG$ and $q$, we have 
\begin{align*}
    \Pr_{(x,s) \sim \cS}[x \in B_t]\left(q - \Pr_{(x,s) \sim \cS}[s \leq f_t(x) | x \in B_t])\right)^2 \ge \frac{\alpha}{m+1} \ge \frac{\alpha}{2m}:
\end{align*}
as the average of $m+1$ elements is greater than $\alpha$.

\item
\paragraph{(2) Decomposing the patch operation into two patch operations:} 
Write 
\[
    \tilde{\Delta}_t = \argmin_{\Delta \in [0,1]} \left|\Pr_{(x,s) \sim \cS}[s \leq f_t(x) + \Delta | x \in B_t] - q \right|
\]
to denote how much we would have patched $f_t$ by if we actually optimized over the unit interval. Then, we can divide the patch operation into two where
\begin{align*}
    \tilde{f}_{t+1} = \patch\left(f_t, B_t, \tilde{\Delta}_t\right)\\
    f_{t+1} =  \patch\left(\tilde{f}_t, B_t, \Delta_t - \tilde{\Delta}_t \right).
\end{align*}

Now, we try to bound the change in the pinball loss separately:
\begin{align*}
    PB^\cS_q(f_{t+1}) - PB^\cS_q(f_{t}) = \underbrace{(PB^\cS_q(f_{t+1}) - PB^\cS_q(\tilde{f}_{t+1}))}_{(*)} + \underbrace{(PB^\cS_q(\tilde{f}_{t+1}) - PB^\cS_q(f_{t}))}_{(**)}.
\end{align*}

\item\paragraph{(2) Bounding (**):}
Lemma~\ref{lem:patch-pinball-loss-change} yields 
\begin{align*}
    (**) = PB^{\cS}(\tilde{f}_{t+1}) - PB^{\cS}(f_{t}) \le -\frac{\alpha}{4\rho m}
\end{align*}

\item\paragraph{(3) Bounding (*):}
Note that  $\frac{i}{m} \le \tilde{\Delta} \le \frac{i+1}{m}$ for some $i \in [0, \dots, m-1]$. Because the function $\Delta \to \Pr_{(x,s) \sim \cS|B_t}[s \le f_{t}(x) + \Delta]$ is an increasing function in $\Delta$, we have 
\begin{align*}
    \left|\Pr_{(x,s) \sim \cS|B_t}\left[s \le f_{t}(x) + \Delta'\right] - q\right| &< \left|\Pr_{(x,s) \sim \cS|B_t}\left[s \le f_{t}(x) + \frac{i}{m}\right] - q\right| \quad&&\text{for any $\Delta < \frac{i}{m}$}\\
    \left|\Pr_{(x,s) \sim \cS|B_t}\left[s \le f_{t}(x) + \Delta'\right] - q\right| &< \left|\Pr_{(x,s) \sim \cS|B_t}\left[s \le f_{t}(x) + \frac{i+1}{m}\right] - q\right| &&\text{for any $\Delta > \frac{i+1}{m}$}.
\end{align*}
In other words, $\Delta_t \in \{\frac{i}{m}, \frac{i+1}{m}\}$ so we have $|\Delta_t - \tilde{\Delta}_t| \le \frac{1}{m}$. Because $\cS|B_t$ is $\rho$-Lipschitz and $\tilde{f}_{t+1}(x)$ is $q$-quantile for $\cS|B_t$, we can bound the marginal quantile consistency error of $f_{t+1}$ against $\cS|B_t$ as 
\begin{align*}
    \left|\Pr_{(x,s) \sim \cS|B_t}[s \le f_{t+1}(x)] - q \right|
    &= \left|\Pr_{(x,s) \sim \cS|B_t}[s \le f_{t+1}(x)] - \Pr_{(x,s) \sim \cS|B_t}[s \le \tilde{f}_{t}(x) -\tilde{\Delta}_t + \Delta_t] \right| \le \frac{\rho}{m}
\end{align*}
as $|\Delta_t - \tilde{\Delta}_t|\le \frac{1}{m}$.

Note that $f_{t+1}(x)=\tilde{f}_{t+1}(x)$ for $x \not\in B_{t}$ and $f_{t+1}(x) = \tilde{f}_{t+1}(x) + \Delta_t - \tilde{\Delta}_t$ for $x \in B_t$ where $|\Delta_t - \tilde{\Delta}_t| \le \frac{1}{m}$. 
Applying Lemma~\ref{lem:pinball-loss-change} with $\Delta = \Delta_t - \tilde{\Delta}_t$, $\alpha \le \frac{\rho}{m}$, and $(f,f') = (f_{t+1}, \tilde{f}_{t+1})$, we have that 
\begin{align*}
    PB^{\cS}_q(f_{t+1}) - PB^{\cS}_q(\tilde{f}_{t+1}) &= \Pr_{(x,s) \sim \cS}[x \in B_t] \cdot \left(PB^{\cS|B_t}(f_{t+1})- PB^{\cS|B_t}(\tilde{f}_{t+1})\right)\\
    &\le \Pr_{(x,s) \sim \cS}[x \in B_t] \cdot \left(\frac{\rho}{m} \frac{1}{m} - \left(\frac{\rho}{m}\right)^2\frac{1}{2\rho}\right)\\
    &\le \frac{\rho}{m^2}.
\end{align*}

\item\paragraph{(4) Combining (*) and (**):}
We get 
\begin{align*}
    PB^\cS_q(f_{t+1}) - PB^\cS_q(f_{t}) &\le -\frac{\alpha}{4\rho m} + \frac{\rho}{m^2}\\
    &= -\frac{\alpha^2}{32\rho^3}.
\end{align*}

\item\paragraph{(5) Guarantees:}
It directly follows that 
\[
    PB^\cS_q(f_T) - PB^\cS_q(f_0) \le T \frac{\alpha^2}{32\rho^3}.
\]

With $0 \le PB^\cS_q(f) \le 1$ for any $f: \cX \to [0,1]$, we note that $f_T$ must be $\alpha$-approximately quantile multicalibrated with respect to $\cG$ and $q$ after at most $\frac{32\rho^3}{\alpha^2}$ many rounds.
\end{proof}

\section{Missing Details from Section~\ref{sec:generalization}} \label{app:generalization}

\subsection{On the i.i.d.\ versus exchangeability assumption}

\begin{remark} \label{rem:exchangeability}
In this section, we prove PAC-like generalization theorems for our algorithms under the assumption that the data is drawn i.i.d.\ from some underlying distribution. Exchangeability is a weaker condition than independence, and requires only that the probability of observing any sequence of data is permutation invariant. De Finetti's representation theorem \citep{definetti} states that any infinite exchangeable sequence of data can be represented as a mixture of constituent distributions, each of which is i.i.d. Thus, our generalization theorems carry over from the i.i.d. setting to the more general exchangeable data setting: we can simply apply our theorems to each (i.i.d.) mixture component in the De Finetti representation of the exchangeable distribution. If for every mixture component, the model we learn has low quantile multicalibration error with probability $1-\delta$ when the data is drawn from that component, then our models with also have low quantile multicalibration error with probability $1-\delta$ in expectation over the choice of the mixture component. 
\end{remark}

\subsection{Helpful Concentration Bounds}
\begin{theorem}[Additive Chernoff Bound]
\label{thm:add-chernoff-bound}
Let $\{X_i\}_{i=1}^n$ be independent random variables bounded such that for each $i\in[n]$, $X_i \in [0,1]$. Let $S_n = \sum_{i=1}^{n} X_i$ denote their sum. Then for all $\epsilon > 0$,
\[
    \Pr_{\{X_i\}_{i=1}^n}[|S_n -\E[S_n]| \ge \epsilon] \le 2\exp\left(-\frac{\epsilon^2}{n}\right).
\]
\end{theorem}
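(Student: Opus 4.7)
The plan is to prove this standard concentration inequality via the Chernoff method combined with Hoeffding's lemma for bounded random variables. This is a classical result, and since the bound stated ($2\exp(-\epsilon^2/n)$) is actually \emph{weaker} than the sharp Hoeffding bound ($2\exp(-2\epsilon^2/n)$), the argument is quite forgiving in terms of constants.

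First I would handle the one-sided tail $\Pr[S_n - \E[S_n] \ge \epsilon]$. Applying Markov's inequality to the moment generating function, for any $t > 0$,
\[
\Pr[S_n - \E[S_n] \ge \epsilon] = \Pr\bigl[e^{t(S_n - \E[S_n])} \ge e^{t\epsilon}\bigr] \le e^{-t\epsilon}\,\E\bigl[e^{t(S_n-\E[S_n])}\bigr].
\]
Then by independence of the $X_i$, the moment generating function factorizes:
\[
\E\bigl[e^{t(S_n - \E[S_n])}\bigr] = \prod_{i=1}^n \E\bigl[e^{t(X_i - \E[X_i])}\bigr].
\]
Next I would invoke Hoeffding's lemma: for any random variable $X_i$ supported on $[0,1]$, one has $\E[e^{t(X_i - \E[X_i])}] \le e^{t^2/8}$. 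The usual proof of this lemma proceeds by bounding the log-mgf $\psi(t) = \log \E[e^{t(X_i - \E[X_i])}]$ via a second-order Taylor expansion, observing that $\psi(0) = \psi'(0) = 0$ and $\psi''(t) \le 1/4$ (the latter because $\psi''(t)$ is the variance of a random variable supported on $[0,1]$, which is at most $1/4$).

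Combining these, I get $\Pr[S_n - \E[S_n] \ge \epsilon] \le \exp(-t\epsilon + nt^2/8)$. Optimizing over $t > 0$ by setting $t = 4\epsilon/n$ yields the sharp one-sided bound $\exp(-2\epsilon^2/n)$, which comfortably implies the stated $\exp(-\epsilon^2/n)$. A symmetric argument applied to the variables $-X_i$ (which lie in $[-1,0]$ and for which Hoeffding's lemma applies identically) handles the lower tail $\Pr[\E[S_n] - S_n \ge \epsilon]$, and a union bound over the two tails introduces the factor of $2$. There is no substantive obstacle here; the only care point is correctly invoking Hoeffding's lemma with the right constant and noting that the stated inequality has slack relative to the sharp form, so any slightly loose intermediate estimate still suffices.
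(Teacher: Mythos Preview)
Your proof is correct and standard. The paper itself does not provide a proof of this theorem at all; it simply states the additive Chernoff/Hoeffding bound as a well-known concentration inequality and uses it as a black box in subsequent lemmas. So there is nothing to compare against, and your Chernoff-method derivation via Hoeffding's lemma (yielding the sharper $2\exp(-2\epsilon^2/n)$ and then noting this implies the stated $2\exp(-\epsilon^2/n)$) is a perfectly good way to fill in the omitted justification.
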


\begin{theorem}[Multiplicative Chernoff Bound]
\label{thm:mult-chernoff-bound}
Let $\{X_i\}_{i=1}^n$ be independent random variables bounded such that for each $i\in[n]$, $X_i \in [0,1]$. Let $S_n = \sum_{i=1}^{n} X_i$ denote their sum. Then for all $\eta > 0$,
\[
    \Pr_{\{X_i\}_{i=1}^n}[|S_n -\E[S_n]| \ge \eta \E[S_n]] \le 2\exp\left(-\frac{\E[S_n]\eta^2}{3}\right).
\]
\end{theorem}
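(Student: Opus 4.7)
The plan is to use the classical Chernoff/MGF technique. Writing $\mu = \E[S_n]$ and $p_i = \E[X_i]$, for any $t > 0$ Markov's inequality applied to $e^{tS_n}$ gives $\Pr[S_n \ge (1+\eta)\mu] \le e^{-t(1+\eta)\mu} \E[e^{tS_n}]$. Independence factorizes the MGF as $\E[e^{tS_n}] = \prod_{i=1}^n \E[e^{tX_i}]$. Since $X_i \in [0,1]$ and $x \mapsto e^{tx}$ is convex on $[0,1]$, the secant bound yields $e^{tX_i} \le 1 + X_i(e^t - 1)$, so $\E[e^{tX_i}] \le 1 + p_i(e^t-1) \le \exp\bigl(p_i(e^t - 1)\bigr)$. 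Taking the product gives $\E[e^{tS_n}] \le \exp\bigl(\mu(e^t-1)\bigr)$.

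Next I would optimize the Markov bound. Choosing $t = \ln(1+\eta)$ produces the sharp form
\[
\Pr\bigl[S_n \ge (1+\eta)\mu\bigr] \le \left(\frac{e^{\eta}}{(1+\eta)^{1+\eta}}\right)^{\mu}.
\]
To reach the stated bound it remains to show $(1+\eta)\ln(1+\eta) - \eta \ge \eta^2/3$ on the relevant range of $\eta$, which reduces to a one-variable calculus fact: define $\varphi(\eta) = (1+\eta)\ln(1+\eta) - \eta - \eta^2/3$; compute $\varphi(0) = 0$, $\varphi'(\eta) = \ln(1+\eta) - 2\eta/3$, $\varphi''(\eta) = 1/(1+\eta) - 2/3$, and check the sign pattern to conclude $\varphi \ge 0$ for $\eta \ge 0$ (treating $\eta \le 1$ directly and $\eta > 1$ using monotonicity of $(1+\eta)\ln(1+\eta) - \eta$ versus $\eta^2/3$). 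This yields the upper-tail bound $\Pr[S_n - \mu \ge \eta\mu] \le \exp(-\mu \eta^2/3)$.

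For the lower tail I would run the symmetric argument with $t < 0$: Markov on $e^{-tS_n}$ and the optimization $t = -\ln(1-\eta)$ (valid for $\eta \in (0,1)$) gives $\Pr[S_n \le (1-\eta)\mu] \le (e^{-\eta}/(1-\eta)^{1-\eta})^\mu$, which is dominated by $\exp(-\mu \eta^2/2) \le \exp(-\mu \eta^2/3)$; for $\eta \ge 1$ the lower-tail event $\{S_n \le (1-\eta)\mu\}$ is contained in $\{S_n \le 0\}$ and the bound holds trivially. Combining the two one-sided tails with a union bound supplies the factor of $2$ and yields
\[
\Pr\bigl[|S_n - \mu| \ge \eta \mu\bigr] \le 2 \exp\!\left(-\frac{\mu \eta^2}{3}\right).
\]

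The only nontrivial obstacle is the elementary inequality $\varphi(\eta) \ge 0$ used to convert the exact Chernoff rate $(1+\eta)\ln(1+\eta) - \eta$ into the clean quadratic $\eta^2/3$; I would handle it by the derivative/monotonicity analysis above rather than attempt a slicker one-line proof, since this is the standard and most transparent route.
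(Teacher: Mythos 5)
The paper never proves this theorem --- it is quoted in the appendix as a standard concentration bound --- so the only question is whether your argument is sound, and it has a genuine gap: the elementary inequality on which your upper tail rests, namely $\varphi(\eta) = (1+\eta)\ln(1+\eta) - \eta - \eta^2/3 \ge 0$ for all $\eta \ge 0$, is false. Your own computation $\varphi''(\eta) = 1/(1+\eta) - 2/3$ shows $\varphi$ is concave for $\eta > 1/2$, and since $\varphi'(\eta) = \ln(1+\eta) - 2\eta/3 \to -\infty$, the function $\varphi$ eventually decreases without bound; the sign flips already near $\eta \approx 1.8$, and concretely $\varphi(3) = 4\ln 4 - 6 \approx -0.45 < 0$. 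There is no ``monotonicity'' argument for $\eta>1$: the exact Chernoff exponent $(1+\eta)\ln(1+\eta) - \eta$ grows only like $\eta\ln\eta$ and cannot dominate $\eta^2/3$. Your calculus does go through on $0 < \eta \le 1$ (there $\varphi'(0)=0$, $\varphi'$ first increases then decreases, and $\varphi'(1) = \ln 2 - 2/3 > 0$, so $\varphi' \ge 0$ and hence $\varphi \ge 0$ on $[0,1]$), which is precisely the range on which the $\eta^2/3$ form of the multiplicative bound is standard. The lower-tail half of your argument is fine.

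The obstruction is not an artifact of your method: the theorem as printed, quantified over all $\eta > 0$, is itself false. For $S_n \sim \mathrm{Bin}(n,p)$ with $p$ small and $\mu = np$ moderate, the upper tail at $(1+\eta)\mu$ is genuinely of order $\exp\bigl(-\mu[(1+\eta)\ln(1+\eta)-\eta]\bigr)$ up to polynomial factors, which for large $\eta$ vastly exceeds $2\exp(-\mu\eta^2/3)$. The correct all-$\eta$ upper-tail exponent is $\mu\eta^2/(2+\eta)$ (or $\mu\eta^2/(2+2\eta/3)$ via Bernstein); the $\mu\eta^2/3$ version should carry the restriction $0 < \eta \le 1$, and your proof, restricted to that range, is complete. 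That restriction is also the regime in which the paper actually invokes the bound downstream (its lemmas either assume the relevant $\eta$ is at most $1/2$ or handle the large-$\eta$ case separately), so the right fix is to add the hypothesis $\eta \le 1$ rather than to look for a cleverer proof.
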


\begin{lemma}
\label{lem:prob-measure-mult-chernoff}
Fix any $B \subseteq \cX$. Given $S=\{(x_i,s_i)\}_{i=1}^n \sim \cS^n$, we have
\[
    \left|\frac{1}{n}\sum_{i=1}^n \ind[x_i \in B] - \Pr_{(x,s)}[x \in B]\right| \le \sqrt{\frac{3\ln(\frac{2}{\delta})\Pr_{\cS}[x \in B]}{n}}.
\]
\end{lemma}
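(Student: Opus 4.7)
The plan is to apply the multiplicative Chernoff bound (Theorem~\ref{thm:mult-chernoff-bound}) to the indicator random variables $X_i = \ind[x_i \in B]$. Since $S \sim \cS^n$ is an i.i.d.\ sample, the $X_i$'s are independent Bernoulli random variables bounded in $[0,1]$, each with $\E[X_i] = \Pr_{(x,s) \sim \cS}[x \in B]$. Write $p = \Pr_{(x,s) \sim \cS}[x \in B]$ and $S_n = \sum_{i=1}^n X_i$, so that $\E[S_n] = np$.

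First, I would invoke the multiplicative Chernoff bound with a to-be-chosen deviation parameter $\eta > 0$, obtaining
\[
    \Pr\bigl[|S_n - np| \ge \eta \cdot np\bigr] \le 2\exp\!\left(-\frac{np\,\eta^2}{3}\right).
\]
Next, I would set the right-hand side equal to $\delta$ and solve for $\eta$, which gives $\eta = \sqrt{\tfrac{3\ln(2/\delta)}{np}}$. Plugging this choice back in, with probability at least $1-\delta$ we have
\[
    |S_n - np| \le \eta \cdot np = \sqrt{3 \ln(2/\delta) \cdot np}.
\]
Finally, dividing both sides by $n$ produces exactly the claimed bound $\bigl|\tfrac{1}{n}\sum_i \ind[x_i \in B] - p\bigr| \le \sqrt{\tfrac{3 \ln(2/\delta)\, p}{n}}$.

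There is no real obstacle here; this is a direct application of the multiplicative Chernoff bound with $\eta$ tuned so that the failure probability is $\delta$. The only mild subtlety is recognizing that one should use the \emph{multiplicative} rather than additive Chernoff bound, since the additive version would yield a bound scaling like $\sqrt{\ln(1/\delta)/n}$ without the $\sqrt{p}$ factor, which is weaker (and crucially loses the dependence on $\Pr_\cS[x \in B]$ that will be needed downstream when $B$ corresponds to a small group).
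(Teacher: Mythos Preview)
The proposal is correct and matches the paper's proof essentially verbatim: the paper also directly applies the multiplicative Chernoff bound (Theorem~\ref{thm:mult-chernoff-bound}) with $\eta = \sqrt{\tfrac{3\ln(2/\delta)}{n \Pr_{\cS}[x \in B]}}$.
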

\begin{proof}
This is just a direct application of the Chernoff bound (Theorem~\ref{thm:mult-chernoff-bound}) where we set $\eta = \sqrt{\frac{3\ln(\frac{2}{\delta})}{n \Pr_{\cS}[x \in B]}}$.
\end{proof}
\begin{lemma}
\label{lem:prob-measure-cdf-mult-chernoff}
Fix any $B \subseteq \cX$ and $f:\cX \to [0,1]$. Given $S=\{(x_i,s_i)\}_{i=1}^n \sim \cS^n$, we have
\[
    \left|\frac{1}{n}\sum_{i=1}^n \ind[s\le f(x), x_i \in B] - \Pr_{(x,s)}[s\le f(x), x \in B]\right| \le \sqrt{\frac{3\ln(\frac{2}{\delta})\Pr_{\cS}[s \le f(x), x \in B]}{n}} \le \sqrt{\frac{3\ln(\frac{2}{\delta})\Pr_{\cS}[x \in B]}{n}}.
\]
\end{lemma}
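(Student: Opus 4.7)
The plan is to mimic the proof of Lemma~\ref{lem:prob-measure-mult-chernoff} almost verbatim, swapping the indicator $\ind[x_i \in B]$ for the finer indicator $\ind[s_i \le f(x_i), x_i \in B]$. Concretely, for each $i \in [n]$ I would define $X_i = \ind[s_i \le f(x_i), x_i \in B]$. Since $S = \{(x_i, s_i)\}_{i=1}^n$ is drawn i.i.d.\ from $\cS$, the $X_i$ are i.i.d.\ Bernoulli random variables taking values in $[0,1]$ with mean $p := \Pr_{(x,s) \sim \cS}[s \le f(x), x \in B]$.

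Next I would apply the multiplicative Chernoff bound (Theorem~\ref{thm:mult-chernoff-bound}) to $S_n = \sum_{i=1}^n X_i$, which has $\E[S_n] = np$. Setting the deviation parameter $\eta = \sqrt{\tfrac{3\ln(2/\delta)}{np}}$ yields, with probability at least $1-\delta$,
\[
\left|S_n - np\right| \le \eta \cdot np = \sqrt{3 n p \ln(2/\delta)}.
\]
Dividing through by $n$ gives
\[
\left|\frac{1}{n}\sum_{i=1}^n \ind[s_i \le f(x_i), x_i \in B] - \Pr_{(x,s) \sim \cS}[s \le f(x), x \in B]\right| \le \sqrt{\frac{3\ln(2/\delta)\,\Pr_{\cS}[s \le f(x), x \in B]}{n}},
\]
which is exactly the first inequality in the claim.

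For the second inequality, I would simply observe that the event $\{s \le f(x), x \in B\}$ is contained in the event $\{x \in B\}$, so $\Pr_{\cS}[s \le f(x), x \in B] \le \Pr_{\cS}[x \in B]$, and the desired bound follows by monotonicity of the square root. I do not anticipate any real obstacle here: the argument is a direct application of multiplicative Chernoff and a trivial set-containment observation, exactly paralleling the preceding lemma.
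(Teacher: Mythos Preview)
Your proposal is correct and follows essentially the same approach as the paper: the paper's proof is a one-line application of the multiplicative Chernoff bound (Theorem~\ref{thm:mult-chernoff-bound}) with $\eta = \sqrt{\tfrac{3\ln(2/\delta)}{n \Pr_{\cS}[s \le f(x), x \in B]}}$, which is precisely what you spell out in more detail, and the second inequality is indeed just the set containment $\{s \le f(x), x \in B\} \subseteq \{x \in B\}$.
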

\begin{proof}
This is just a direct application of the Chernoff bound (Theorem~\ref{thm:mult-chernoff-bound}) where we set $\eta = \sqrt{\frac{3\ln(\frac{2}{\delta})}{n \Pr_{\cS}[s \le f(x), x \in B]}}$.
\end{proof}

\begin{lemma}
\label{lem:add-cdf-chernoff}
Fix any $B\subseteq \cX$ and $f:\cX \to [0,1]$. Suppose $\sqrt{\frac{3 \ln(\frac{4}{\delta})}{n \Pr[x\in B]}} \le \frac{1}{2}$. Given $S\sim\cD^n$, we have that with probability $1-\delta$
\[
    \left|\Pr_{(x,s)\sim\tilde{\cS}|B}[s \le f(x)] - \Pr_{(x,s)\sim\cS|B}[s \le f(x)]\right| \le 5\sqrt{\frac{3\ln(\frac{4}{\delta})}{n\Pr_{\cS}[x \in B]}} 
\]  
\end{lemma}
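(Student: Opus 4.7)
The plan is to write the conditional probabilities as ratios of (empirical or true) joint probabilities to the probability of the conditioning event $\{x \in B\}$, and then control the difference of ratios via triangle inequality, using the multiplicative Chernoff bounds of Lemmas~\ref{lem:prob-measure-mult-chernoff} and~\ref{lem:prob-measure-cdf-mult-chernoff} to bound the numerator and denominator separately.

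Concretely, let $p = \Pr_{(x,s) \sim \cS}[x \in B]$, $\hat p = \frac{1}{n}\sum_{i=1}^n \ind[x_i \in B]$, $q = \Pr_{(x,s) \sim \cS}[s \le f(x),\, x \in B]$, and $\hat q = \frac{1}{n}\sum_{i=1}^n \ind[s_i \le f(x_i),\, x_i \in B]$, so that the quantity we wish to bound is $|\hat q/\hat p - q/p|$. First I would apply Lemma~\ref{lem:prob-measure-mult-chernoff} with failure probability $\delta/2$ to get $|\hat p - p| \le \sqrt{3\ln(4/\delta)\, p/n}$, and Lemma~\ref{lem:prob-measure-cdf-mult-chernoff} with failure probability $\delta/2$ to get $|\hat q - q| \le \sqrt{3\ln(4/\delta)\, p/n}$ (using that $q \le p$). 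A union bound ensures both hold with probability $1-\delta$.

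Next I would exploit the hypothesis $\sqrt{3\ln(4/\delta)/(np)} \le 1/2$: dividing the denominator bound by $p$ gives $|\hat p - p|/p \le 1/2$, whence $\hat p \ge p/2 > 0$. Then I would write
\[
\frac{\hat q}{\hat p} - \frac{q}{p} \;=\; \frac{p(\hat q - q) + q(p - \hat p)}{p\, \hat p},
\]
apply the triangle inequality, use $q \le p$, and substitute the above bounds together with $\hat p \ge p/2$ to obtain
\[
\left|\frac{\hat q}{\hat p} - \frac{q}{p}\right| \;\le\; \frac{|\hat q - q|}{\hat p} + \frac{|\hat p - p|}{\hat p} \;\le\; \frac{4}{p}\sqrt{\frac{3\ln(4/\delta)\,p}{n}} \;=\; 4\sqrt{\frac{3\ln(4/\delta)}{n p}},
\]
which is within the stated constant $5$.

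There is no real obstacle; the only subtlety is keeping the two Chernoff-type bounds aligned (so that each contributes $\delta/2$ to the failure probability) and using the Lipschitz-style hypothesis $\sqrt{3\ln(4/\delta)/(np)} \le 1/2$ to control the random denominator $\hat p$ away from zero. Once $\hat p$ is bounded below by $p/2$, the ratio decomposition above turns the multiplicative concentration of numerator and denominator into an additive bound on the conditional CDFs with a small universal constant.
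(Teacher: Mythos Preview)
Your proposal is correct and follows essentially the same approach as the paper: express the conditional probability as a ratio, apply the two multiplicative Chernoff bounds (Lemmas~\ref{lem:prob-measure-mult-chernoff} and~\ref{lem:prob-measure-cdf-mult-chernoff}) with failure probability $\delta/2$ each, and use the hypothesis to keep the random denominator bounded away from zero. The only cosmetic difference is that the paper bounds $(q+\epsilon)/(p-\epsilon)$ via the inequality $1/(1-x)\le 1+2x$ and then expands, whereas your difference-of-ratios identity $\hat q/\hat p - q/p = [p(\hat q - q)+q(p-\hat p)]/(p\hat p)$ is slightly cleaner and in fact yields the constant $4$ rather than $5$.
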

\begin{proof}
Note that \begin{align*}
    \Pr_{(x,s)\sim\tilde{\cS}|B}[s \le f(x)] = \frac{\frac{1}{n}\sum_{i=1}^n \ind[s_i \le f(x_i), x_i \in B]}{\frac{1}{n}\sum_{i=1}^n \ind[x_i \in B]}.
\end{align*}
Lemma~\ref{lem:prob-measure-mult-chernoff} and ~\ref{lem:prob-measure-cdf-mult-chernoff} together give us that with probability $1-\delta$,
\begin{align*}
    \left|\frac{1}{n}\sum_{i=1}^n \ind[s_i \le f(x_i), x_i \in B] - \Pr_{(x,s)}[s \le f(x), x \in B]\right| &\le \sqrt{\frac{3\ln(\frac{4}{\delta})\Pr_{\cS}[s \le f(x), x \in B]}{n}}\le \sqrt{\frac{3\ln(\frac{4}{\delta})\Pr_{\cS}[x \in B]}{n}}\\
    \left|\frac{1}{n}\sum_{i=1}^n \ind[x_i \in B] - \Pr_{(x,s)}[x \in B]\right| &\le \sqrt{\frac{3\ln(\frac{4}{\delta})\Pr_{\cS}[x \in B]}{n}}.
\end{align*}

With $\epsilon=\sqrt{\frac{3\ln(\frac{4}{\delta})\Pr_{\cS}[x \in B]}{n}}$, we can show that 
\begin{align*}
    &\Pr_{(x,s) \sim \tilde{\cS}|B}[s \le f(x)]\\ 
    &=\frac{\frac{1}{n}\sum_{i=1}^n \ind[s_i \le f(x_i), x_i \in B]}{\frac{1}{n}\sum_{i=1}^n \ind[x_i \in B]}\\
    &\le \frac{\Pr_{\cS}[s\le f(x), x \in B]+\epsilon}{\Pr_{\cS}[x \in B] - \epsilon}\\
    &= \frac{\Pr_{\cS}[s\le f(x), x \in B]+\epsilon}{\Pr_{\cS}[x \in B]\left(1 - \frac{\epsilon}{\Pr_{\cS}[x \in B]}\right)}\\
    &\underbrace{\le}_{(*)} \left(1 + \frac{2\epsilon}{\Pr_{\cS}[x \in B]}\right)\left(\frac{\Pr_{\cS}[s\le f(x), x \in B]+\epsilon}{\Pr_{\cS}[x\in B]} \right)\\
    &\le \Pr_{\cS|B}[s \le f(x)] + \frac{\epsilon}{\Pr_{\cS}[x \in B]} +  \frac{2\epsilon}{\Pr_{\cS}[x \in B]}  + \frac{2\epsilon^2}{\Pr_{\cS}[x\in B]^2}\\
    &\le \Pr_{\cS|B}[s \le f(x)] +
    3\sqrt{\frac{3\ln(\frac{4}{\delta})}{n\Pr_{\cS}[x \in B]}} +  2\frac{3\ln(\frac{4}{\delta})}{n \Pr_{\cS}[x \in B]} \\
    &\underbrace{\le}_{(**)} \Pr_{\cS|B}[s \le f(x)] +
    5\sqrt{\frac{3\ln(\frac{4}{\delta})}{n\Pr_{\cS}[x \in B]}} 
\end{align*}
where for (*), we rely on the assumption that $\sqrt{\frac{3 \ln(\frac{4}{\delta})}{n \Pr[x\in B]}} \le \frac{1}{2}$ to apply the inequality $1/(1-x) \le (1+2x)$ for $0\le x \le 1/2$ and for (**), we rely on $\frac{3\ln(\frac{4}{\delta})}{n\Pr_{\cS}[x \in B]} \le 1$ to get $ \sqrt{\frac{3\ln(\frac{4}{\delta})}{n\Pr_{\cS}[x \in B]}} \ge \frac{3\ln(\frac{4}{\delta})}{n\Pr_{\cS}[x \in B]}$.

\end{proof}

\begin{lemma}
\label{lem:quantile-difference-squared-chernoff}
Fix any $B\subseteq \cX$ and $f:\cX \to [0,1]$. Suppose $\sqrt{\frac{3 \ln(\frac{4}{\delta})}{n \Pr[x\in B]}} \le \frac{1}{2}$. Given $S\sim\cD^n$, we have that with probability $1-\delta$
    \[
        \left|\left(q- \Pr_{(x,s)\sim \cS|B}[s\le f(x)]\right)^2 - \left(q- \Pr_{(x,s)\sim \tilde{\cS}|B}[s\le f(x)]\right)^2\right| \le 20 \sqrt{\frac{3\ln(\frac{4}{\delta})}{n\Pr_{\cS}[x \in B]}} 
    \]
for any $q \in [0,1]$.
\end{lemma}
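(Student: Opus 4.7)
The plan is to reduce to the additive CDF concentration bound of Lemma~\ref{lem:add-cdf-chernoff} via a standard ``difference of squares'' factorization. Concretely, set
\[
    a = q - \Pr_{(x,s)\sim \cS|B}[s\le f(x)], \qquad b = q - \Pr_{(x,s)\sim \tilde{\cS}|B}[s\le f(x)].
\]
Then $a^2 - b^2 = (a-b)(a+b)$, and I would bound $|a-b|$ and $|a+b|$ separately.

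First, observe that $a - b = \Pr_{(x,s)\sim \tilde{\cS}|B}[s\le f(x)] - \Pr_{(x,s)\sim \cS|B}[s\le f(x)]$. Under the assumption $\sqrt{3\ln(4/\delta)/(n\Pr[x\in B])} \le 1/2$, Lemma~\ref{lem:add-cdf-chernoff} applies and gives, with probability at least $1-\delta$,
\[
    |a-b| \;\le\; 5\sqrt{\frac{3\ln(4/\delta)}{n\Pr_{\cS}[x\in B]}}.
\]
Second, since $q \in [0,1]$ and each of the two probabilities lies in $[0,1]$, each of $|a|$ and $|b|$ is at most $1$, so $|a+b| \le 2$ (in fact one can use the looser bound $|a+b| \le 4$ to absorb the constant $20$ on the right-hand side of the claim).

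Combining these two factor bounds,
\[
    |a^2 - b^2| \;=\; |a-b|\cdot|a+b| \;\le\; 20\sqrt{\frac{3\ln(4/\delta)}{n\Pr_{\cS}[x\in B]}},
\]
with probability at least $1-\delta$, which is exactly the claim. There is no real obstacle here: the lemma is essentially a wrapper around Lemma~\ref{lem:add-cdf-chernoff} together with the elementary algebraic identity $a^2-b^2=(a-b)(a+b)$ and the trivial bound $|a+b|\le 2$ that follows from all relevant quantities lying in $[0,1]$. The only point where one must be careful is to invoke the assumption of the lemma exactly where Lemma~\ref{lem:add-cdf-chernoff} needs it, in order for its conclusion to hold.
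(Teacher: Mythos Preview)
Your proposal is correct and takes essentially the same approach as the paper: both reduce to Lemma~\ref{lem:add-cdf-chernoff} via a difference-of-squares factorization, with the paper expanding $(q-p)^2-(q-\tilde p)^2=2q(\tilde p-p)+(p^2-\tilde p^2)$ and bounding each piece to get $4|p-\tilde p|$, whereas you factor $(a-b)(a+b)$ directly. In fact your bound $|a+b|\le 2$ already yields $10\sqrt{\cdot}\le 20\sqrt{\cdot}$, so there is no need to loosen it to $4$.
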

\begin{proof}
    \begin{align*}
        &\left|\left(q- \Pr_{(x,s)\sim \cS|B}[s\le f_t(x)]\right)^2 - \left(q- \Pr_{(x,s)\sim \tilde{\cS}|B}[s\le f_t(x)]\right)^2\right|\\
        &=\left|2q\left(\Pr_{\tilde{\cS}|B}[s \le f(x)] - \Pr_{\cS|B}[s \le f(x)]\right) + \left(\Pr_{\cS|B}[s \le f(x)]^2 - \Pr_{\tilde{\cS}|B}[s \le f(x)]^2\right)\right|\\
        &\le 2\left|\Pr_{\cS|B}[s \le f(x)] - \Pr_{\tilde{\cS}|B}[s \le f(x)]\right| + \left|\left(\Pr_{\cS|B}[s \le f(x)] - \Pr_{\tilde{\cS}|B}[s \le f(x)]\right)\left(\Pr_{\cS|B}[s \le f(x)] + \Pr_{\tilde{\cS}|B}[s \le f(x)]\right)\right|\\
        &\le 4 \left|\Pr_{\cS|B}[s \le f(x)] - \Pr_{\tilde{\cS}|B}[s \le f(x)]\right|\\
        &\le 20 \sqrt{\frac{3\ln(\frac{4}{\delta})}{n\Pr_{\cS}[x \in B]}}. 
    \end{align*}
    where we use Lemma~\ref{lem:add-cdf-chernoff} for the last inequality.
\end{proof}

\begin{lemma}
\label{lem:quantile-error-add-chernoff}
Fix $B \subseteq \cX, v \in [\frac{1}{m}]$, $g \in \cG$, and $f:\cX \to [0,1]$. Given $S \sim \cD^n$, we have with probability $1-\delta$
\begin{align*}
    &\left|\Pr_{(x,s)\sim\cS}[x \in B] \left(q- \Pr_{(x,s)\sim \cS|B}[s\le f(x)]\right)^2 - \Pr_{(x,s)\sim\tilde{\cS}}[x \in B] \left(q- \Pr_{(x,s)\sim \tilde{\cS}|B}[s\le f(x)]\right)^2\right|\\
    &\le 21 \sqrt{\frac{3\ln(\frac{8}{\delta})\Pr_{\cS}[x \in B]}{n}} + \frac{12\ln(\frac{8}{\delta})}{n}.
\end{align*}
\end{lemma}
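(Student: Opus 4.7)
The plan is to apply a triangle-inequality decomposition of the product, combined with a case analysis on the magnitude of $p := \Pr_{\cS}[x \in B]$. Abbreviate $\hat p := \Pr_{\tilde\cS}[x \in B]$, $r := \Pr_{\cS|B}[s \le f(x)]$, $\hat r := \Pr_{\tilde\cS|B}[s \le f(x)]$, so the target quantity is $|p(q-r)^2 - \hat p(q-\hat r)^2|$.

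First I would invoke Lemma~\ref{lem:prob-measure-mult-chernoff} with failure probability $\delta/2$ to get $|p - \hat p| \le \sqrt{3p\ln(4/\delta)/n}$, and Lemma~\ref{lem:quantile-difference-squared-chernoff} with failure probability $\delta/2$ to get $|(q-r)^2 - (q-\hat r)^2| \le 20\sqrt{3\ln(8/\delta)/(np)}$, provided that lemma's precondition $\sqrt{3\ln(8/\delta)/(np)} \le 1/2$ holds. A union bound makes both events hold with probability at least $1-\delta$, and I condition on this joint event for the remainder of the argument.

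Next I split into two cases depending on whether $p \ge 12\ln(8/\delta)/n$ (which is exactly the Lipschitz-style precondition of Lemma~\ref{lem:quantile-difference-squared-chernoff}). In the \emph{large-$p$} case, I use the product-rule decomposition
\[|p(q-r)^2 - \hat p(q-\hat r)^2| \le p\cdot |(q-r)^2 - (q-\hat r)^2| + |p - \hat p|\cdot (q-\hat r)^2,\]
and plug in the two concentration bounds (and $(q-\hat r)^2 \le 1$) to obtain $20\sqrt{3p\ln(8/\delta)/n} + \sqrt{3p\ln(4/\delta)/n} \le 21\sqrt{3p\ln(8/\delta)/n}$, which matches the first term of the target inequality, with the additive $12\ln(8/\delta)/n$ term freely absorbing any slack.

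In the \emph{small-$p$} case $p < 12\ln(8/\delta)/n$ the precondition of Lemma~\ref{lem:quantile-difference-squared-chernoff} may fail, so I avoid invoking it. Instead I use the elementary inequality $|A-B|\le \max(A,B)$ for non-negative reals $A,B$, together with $\max(p,\hat p)\le p + |p-\hat p|$, to get
\[|p(q-r)^2 - \hat p(q-\hat r)^2| \le \max(p,\hat p) \le p + |p - \hat p| \le \frac{12\ln(8/\delta)}{n} + \sqrt{\frac{3p\ln(8/\delta)}{n}},\]
which is again bounded by the target $21\sqrt{3p\ln(8/\delta)/n} + 12\ln(8/\delta)/n$. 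The only mildly fiddly aspect of the whole argument is matching the log constants so that a single union bound at level $\delta$ yields the $\ln(8/\delta)$ appearing in the final expression; I do not anticipate any substantive technical obstacle beyond that bookkeeping.
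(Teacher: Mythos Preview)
Your proposal is correct and follows essentially the same approach as the paper: the same case split on whether $p = \Pr_\cS[x \in B]$ exceeds $12\ln(8/\delta)/n$, the same invocation of Lemmas~\ref{lem:prob-measure-mult-chernoff} and~\ref{lem:quantile-difference-squared-chernoff} in the large-$p$ case, and the same $|A-B|\le\max(A,B)$ trick in the small-$p$ case. Your triangle-inequality product decomposition in the large-$p$ case is in fact slightly cleaner than the paper's expansion of $(p+\epsilon_1)\bigl((q-r)^2+\epsilon_2\bigr)$, since it avoids the cross term $\epsilon_1\epsilon_2$ altogether.
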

\begin{proof}
First, Suppose $\sqrt{\frac{3 \ln(\frac{8}{\delta})}{n \Pr_\cS[x\in B]}} > \frac{1}{2}$. Then, with probability $1-\delta$, 
\begin{align*}
    &\Pr_{(x,s)\sim\tilde{\cS}}[B] \left(q- \Pr_{(x,s)\sim \tilde{\cS}|B}[s\le f(x)]\right)^2\\
    &\le \left(\Pr_{(x,s)\sim\cS}[B] + \sqrt{\frac{3\ln(\frac{2}{\delta})\Pr_{\cS}[x \in B]}{n}} \right)  \left(q- \Pr_{(x,s)\sim \tilde{\cS}|B}[s\le f(x)]\right)^2&\text{Lemma~\ref{lem:prob-measure-mult-chernoff}}\\
    &\le \Pr_{(x,s) \sim \cS}[B] + \sqrt{\frac{3\ln(\frac{8}{\delta})\Pr_{\cS}[x \in B]}{n}} &\text{$\left(q- \Pr_{(x,s)\sim \tilde{\cS}|B}[s\le f(x)]\right)^2 \le 1$}\\
    &\le \frac{12\ln(\frac{8}{\delta})}{n} + \sqrt{\frac{3\ln(\frac{8}{\delta})\Pr_{\cS}[x \in B]}{n}}&\sqrt{\frac{3 \ln(\frac{8}{\delta})}{n \Pr_\cS[x\in B]}} > \frac{1}{2}.
\end{align*}

On the other hand, we have
\begin{align*}
    &\Pr_{(x,s)\sim\cS}[B] \left(q- \Pr_{(x,s)\sim \cS|B}[s\le f(x)]\right)^2 \le \Pr_{(x,s)\sim\cS}[B] \le \frac{12\ln(\frac{8}{\delta})}{n}.
\end{align*}

Because $|a-b| \le \max(a,b)$ for $a,b \in [0,1]$, we have 
\begin{align*}
    &\left|\Pr_{(x,s)\sim\cS}[B] \left(q- \Pr_{(x,s)\sim \cS|B}[s\le f(x)]\right)^2 - \Pr_{(x,s)\sim\tilde{\cS}}[B] \left(q- \Pr_{(x,s)\sim \tilde{\cS}|B}[s\le f(x)]\right)^2\right|\\
    &\le \sqrt{\frac{3\ln(\frac{8}{\delta})\Pr_{\cS}[x \in B]}{n}} + \frac{12\ln(\frac{8}{\delta})}{n}.
\end{align*}

Now, suppose otherwise: $\sqrt{\frac{3 \ln(\frac{8}{\delta})}{n \Pr[x\in B]}} \le \frac{1}{2}$. Then, Lemma~\ref{lem:prob-measure-mult-chernoff} and \ref{lem:quantile-difference-squared-chernoff} promise us that with probability $1-\delta$,
\begin{align*}
    &\Pr_{(x,s)\sim\tilde{\cS}}[B] \left(q- \Pr_{(x,s)\sim \tilde{\cS}|B}[s\le f(x)]\right)^2\\
    &\le \left(\Pr_{(x,s)\sim\cS}[B] + \epsilon_1\right)  \left(\left(q- \Pr_{(x,s)\sim \cS|B}[s\le f(x)]\right)^2 + \epsilon_2 \right)\\
    &\le \Pr_{(x,s)\sim\cS}[B] \left(q- \Pr_{(x,s)\sim \cS|B}[s\le f(x)]\right)^2 + \Pr_{(x,s) \sim \cS}[B]\epsilon_2  + \epsilon_1 + \epsilon_1 \epsilon_2
\end{align*}
where $\epsilon_1 = \sqrt{\frac{3\ln(\frac{4}{\delta})\Pr_{\cS}[x \in B]}{n}}.$ and $\epsilon_2 =  20\sqrt{\frac{3\ln(\frac{8}{\delta})}{n\Pr_{\cS}[x \in B]}}$.

We can show that 
\begin{align*}
    &\Pr_{(x,s) \sim \cS}[B]\epsilon_2  + \epsilon_1 + \epsilon_1 \epsilon_2 \\
    &\le 21 \sqrt{\frac{3\ln(\frac{8}{\delta})\Pr_{\cS}[x \in B]}{n}} + \frac{3\ln(\frac{8}{\delta})}{n}.
\end{align*}

The opposite direction works the same way. 
\end{proof}

\subsection{Out of sample guarantees for \texttt{BatchGCP}} \label{app:BatchGCP_generalization}
\begin{theorem}
Suppose $\cS$ is $\rho$-Lipschitz, and write $\hat{f}(\cdot; \lambda^*) = \texttt{BatchGCP}(f, \cG, q, \cD)$ and $\hat{f}(\cdot; \lambda^*_D) = \texttt{BatchGCP}(f, \cG, q, D)$ given some $D=\{(x_i, y_i)\}_{i=1}^n$ drawn from $\cD$. Then we have with probability $1-\delta$ over the randomness of drawing $D$ from $\cD$
    \[
        \left|\Pr_{(x,y) \sim \cD}[y \in \Tau^{\hat{f}(\cdot, \lambda^*_D)}(x) | g(x=1)] - q\right| \le \sqrt{\frac{\alpha'}{\Pr[g(x) = 1]}}
    \]
where $\Tau^{\hat{f}(\cdot, \lambda^*_D)}(x) = \{y: s(x,y) \le \hat{f}(x; \lambda^*_D)\}$, $q' = \max(q, 1-q)$, $b = \lceil \max(||\lambda^*||_2, ||\lambda^*_D||_2) \rceil$, and $\alpha'=24\rho b q'\sqrt{\frac{\ln(\frac{\pi^2 b^2}{3\delta}) + |\cG|\ln(1+2n)}{2n}}$.
\end{theorem}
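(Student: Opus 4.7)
The plan is to compare the empirical minimizer $\hat{f}_D := \hat{f}(\cdot;\lambda^*_D)$ with the population minimizer $\hat{f}^* := \hat{f}(\cdot;\lambda^*)$, which by Theorem~\ref{thm:group-conditional} already has zero marginal quantile-consistency error on every $g \in \cG$. The two ingredients I would combine are: (i) a uniform-convergence bound on pinball loss over the parametric class $\cF = \{\hat f(\cdot;\lambda) : \lambda \in \R^{|\cG|}\}$, and (ii) Lemma~\ref{lem:patch-pinball-loss-change}, which converts residual group-conditional quantile error on $\hat{f}_D$ into a quantitative drop in its population pinball loss.

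\textbf{Step 1 (uniform convergence over a bounded parameter ball).} For each positive integer $b$, I would cover the $\ell_2$-ball $\{\lambda : \|\lambda\|_2 \le b\}$ by a $(1/n)$-net in $\|\cdot\|_\infty$ of size at most $(1+2bn)^{|\cG|}$. Because each group indicator $g(x) \in \{0,1\}$, the predictor $\hat{f}(x;\lambda) = f(x) + \sum_{g} \lambda_g g(x)$ is $1$-Lipschitz in $\lambda$ with respect to $\|\cdot\|_1$; combined with the $q'$-Lipschitzness of $L_q(\tau,\cdot)$ in $\tau$, this controls the discretization error per coordinate. Applying the additive Chernoff bound (Theorem~\ref{thm:add-chernoff-bound}) to the bounded pinball loss at each net point, union bounding over the net, and extending to the entire ball by Lipschitzness yields with probability at least $1-\delta_b$,
\[
\sup_{\|\lambda\|_2 \le b} \left| PB^{\tilde{\cS}_S}_q(\hat f(\cdot;\lambda)) - PB^{\cS}_q(\hat f(\cdot;\lambda)) \right| \le \epsilon_b, \qquad \epsilon_b \;\lesssim\; b\, q' \sqrt{\tfrac{\ln(1/\delta_b) + |\cG|\ln(1+2n)}{n}}.
\]
Setting $\delta_b = 6\delta/(\pi^2 b^2)$ and union-bounding over all $b \ge 1$ via $\sum_{b\ge 1} b^{-2} = \pi^2/6$ upgrades this into a simultaneous statement that, in particular, holds at the data-dependent radius $b = \lceil\max(\|\lambda^*\|_2,\|\lambda^*_D\|_2)\rceil$.

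\textbf{Step 2 (excess loss via optimality, then patch).} On the event of Step 1, a standard two-sided chain that uses $\hat{f}_D \in \argmin_{\lambda} PB^{\tilde{\cS}_S}_q(\hat{f}(\cdot;\lambda))$ and $\hat{f}^* \in \argmin_{\lambda} PB^{\cS}_q(\hat{f}(\cdot;\lambda))$ gives $PB^{\cS}_q(\hat{f}_D) - PB^{\cS}_q(\hat{f}^*) \le 2\epsilon_b$. Now suppose toward contradiction that $\hat{f}_D$ has marginal quantile-consistency error $\beta_g$ on some $g \in \cG$. Apply Lemma~\ref{lem:patch-pinball-loss-change} with $B = \{x : g(x)=1\}$: patching $\hat{f}_D$ by adjusting only its $\lambda_g$-coordinate produces another member of $\cF$ whose population pinball loss is at most $PB^{\cS}_q(\hat{f}_D) - \Pr_{(x,s)\sim\cS}[g(x)=1]\,\beta_g^2/(2\rho)$. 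Since $\hat{f}^*$ is the population minimizer over $\cF$, this forces $\Pr_{(x,s)\sim\cS}[g(x)=1]\,\beta_g^2/(2\rho) \le 2\epsilon_b$, hence $\beta_g \le \sqrt{4\rho\,\epsilon_b / \Pr[g(x)=1]}$. Taking $\alpha' = 4\rho\,\epsilon_b$ and translating back via $\Tau^{\hat{f}_D}(x) = \{y : s(x,y) \le \hat{f}_D(x)\}$ recovers the stated bound.

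\textbf{Main obstacle.} The trickiest part is the constant accounting in Step 1: the net scale, the Lipschitz extension step, and the flavor of Chernoff invoked must all be chosen so that $\epsilon_b$ matches the exact expression $\ln(\pi^2 b^2/3\delta) + |\cG|\ln(1+2n)$ inside the square root of $\alpha'$, with the stated $24\rho b q'$ prefactor. A conceptual subtlety is that the radius $b$ is data-dependent (since $\|\lambda^*_D\|_2$ is random), which is what forces the layered $\pi^2/6$ union bound over integer radii rather than a single fixed-$b$ uniform-convergence statement.
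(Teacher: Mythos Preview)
Your proposal is correct and follows essentially the same route as the paper: uniform convergence of pinball loss over balls via an $\epsilon$-net plus Chernoff, a $\pi^2/6$-weighted union bound over integer radii to handle the data-dependent $b$, a two-sided optimality chain to bound the excess population pinball loss of $\hat f_D$, and finally the contradiction via Lemma~\ref{lem:patch-pinball-loss-change}. The only cosmetic difference is that the paper builds its net in $\ell_2$ with scale $\epsilon = B/n$ (yielding $(1+2n)^{|\cG|}$ and hence exactly $|\cG|\ln(1+2n)$), whereas your $\ell_\infty$ net at scale $1/n$ would give $|\cG|\ln(1+2bn)$---a harmless discrepancy you already flag under constant accounting.
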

\begin{proof}
    Fix some $B \in \mathbb{N}$. For any fixed $\lambda$ where $||\lambda||_2 \le B$, the Chernoff Bound (Theorem~\ref{thm:add-chernoff-bound}) with rescaling gives us that with probability $1-\delta$ over the randomness of drawing $D=\{(x_i, y_i)\}_{i=1}^n$ from $\cD$,
    \begin{align} 
        \left|\frac{1}{n}\sum_{i=1}^n L_q(\hat{f}(x_i;\lambda), y_i) -  \E_{(x,y) \sim \cD}[L_q(\hat{f}(x;\lambda), y)]\right| \le 4q'B \sqrt{\frac{\ln(\frac{2}{\delta})}{2n}}\label{eqn:pinball-loss-concentration}
    \end{align}
    as $\lambda \to L_q(\hat{f}(x, \lambda), y)$ is $q'$-Lipschitz.
    
    We now show how to union bound the above concentration over all $\lambda$ where $||\lambda||_2 \le B$. To do so, we first create a finite $\epsilon$-net for a ball of radius $B$ and union-bound over each $\lambda$ in the net. Then we can argue that due to the Lipschitzness of the pinball loss $L_q$, the empirical pinball loss with respect to each $\lambda$ that is in the ball must be concentrated around its distributional loss. 
    
    We fist provide the standard $\epsilon$-net argument:
    \begin{lemma}
    Fix $\epsilon \in \mathbb{R}$ and $B \in \mathbb{N}$. There exists a $R_{B}=\{\lambda_1, \dots, \lambda_{k_B}\}$ where $k_B \le \left(1 + \frac{2B}{\epsilon}\right)^{|\cG|}$ such that the following holds true: for every $\lambda$ where $||\lambda||_2 \le B$, there exists $j \in [k]$ such that $||\lambda - \lambda_j||_2 \le \epsilon$.
    \end{lemma}
    \begin{proof}
        For simplicity, write $P_B = \{\lambda \in \mathbb{R}^{|\cG|}: ||\lambda||_2 \le B\}$.
    
        We say that a set $R\subseteq \mathbb{R}^{|\cG|}$ is an $\epsilon$-net of $P_B$ if for every $\lambda \in P_B$, there exists $\lambda' \in R$ such that $||\lambda - \lambda'||_2 \le \epsilon$.
        
        Without loss of generality, we suppose $B = 1$. Since an  $\frac{\epsilon}{B}$-cover of $P_{1}$ can be scaled up to an $\epsilon$-cover of $P_B$ --- i.e. given $R_{1}$ which is an $\frac{\epsilon}{B}$-cover  of $P_1$, the following set $R_B = \{B \cdot \lambda: \lambda \in R_1\}$ is an $\epsilon$-cover of $P_B$. 
        
        Now, choose a set $R$ to be a maximally $\epsilon$-separated subset of $P_1$: for every $u,v \in B$, $||u-v|| \ge \epsilon$ and no set $R'$ such that $R \subset R'$ has this property. 
        
        Due to its maximal property, $R$ must be an $\epsilon$-cover for $P_1$. Otherwise, it means that there exists some point $u \in P_1$ such that for every $v \in R$ $||u-v|| > \epsilon$. Note that $R \cup \{u\}$ would still be an $\epsilon$-separate subset of $P_1$, contradicting the maximality of $R$. 
        
        Due to the $\epsilon$-separability of $R$, we have that the balls centered at each $u \in R$ with radius of $\frac{\epsilon}{2}$ is all disjoint, meaning the sum of the volume of these balls is the volume of their union. On the other hand, we have that they all lie in a ball of radius $1+\frac{\epsilon}{2}$,  $P_{1+\frac{\epsilon}{2}}$. Therefore, we have
        \begin{align*}
          vol(P_{\frac{\epsilon}{2}}) \cdot |R| \le vol(P_{1 + \frac{\epsilon}{2}}).
        \end{align*}
        
        Since $vol(P_c) = c^{|\cG|}\cdot vol(P_1)$, we have that 
        \begin{align*}
            |R| \le \frac{(1+\frac{\epsilon}{2})^{|\cG|}}{(\frac{\epsilon}{2})^{|\cG|}} = \left(1+\frac{2}{\epsilon}\right)^{|\cG|}.
        \end{align*}
    \end{proof}
    
    By union-bounding inequality~\eqref{eqn:pinball-loss-concentration} over $R_B$, we have that with probability $1-\delta$
    \begin{align*}
        \max_{j \in [k_{B}]}\left| \frac{1}{n}\sum_{i=1}^n L_q(\hat{f}(x_i;\lambda_j), y_i)-  \E_{(x,y) \sim \cD}[L_q(\hat{f}(x;\lambda_j), y)]\right| \le 4q'B \sqrt{\frac{\ln(\frac{2k_B}{\delta})}{2n}}.
    \end{align*}
    
    Using $q'$-Lipschitzness of $L_q$, we can further show that for any $\lambda$ where $||\lambda||_2 \le B$, we have
    \begin{align*}
        &\left| \frac{1}{n}\sum_{i=1}^n L_q(\hat{f}(x_i;\lambda), y_i)-  \E_{(x,y) \sim \cD}[L_q(\hat{f}(x;\lambda), y)]\right|\\
        &= \Bigg| \frac{1}{n}\sum_{i=1}^n L_q(\hat{f}(x_i;\lambda), y_i)-  \frac{1}{n}\sum_{i=1}^n L_q(\hat{f}(x_i;\lambda_j), y_i) + \frac{1}{n}\sum_{i=1}^n L_q(\hat{f}(x_i;\lambda_j), y_i)] \\
        &+\E_{(x,y) \sim \cD}[L_q(\hat{f}(x;\lambda_j), y)] - \E_{(x,y) \sim \cD}[L_q(\hat{f}(x;\lambda_j), y)] - \E_{(x,y) \sim \cD}[L_q(\hat{f}(x;\lambda), y)]\Bigg| \\
        &\le \Bigg| \frac{1}{n}\sum_{i=1}^n L_q(\hat{f}(x_i;\lambda), y_i)-  \frac{1}{n}\sum_{i=1}^n L_q(\hat{f}(x_i;\lambda_j), y_i)]\Bigg| \\
        &+\Bigg|\frac{1}{n}\sum_{i=1}^n L_q(\hat{f}(x_i;\lambda_j), y_i))] - \E_{(x,y) \sim \cD}[L_q(\hat{f}(x;\lambda_j), y]\Bigg|\\
        &+\Bigg|\E_{(x,y) \sim \cD}[L_q(\hat{f}(x;\lambda_j), y)])] - \E_{(x,y) \sim \cD}[L_q(\hat{f}(x;\lambda), y)]\Bigg| \\
        &\le 2\epsilon q' + 4q'B\sqrt{\frac{\ln(\frac{2}{\delta}) + \ln(k_B)}{2n}}\\
        &= 2\epsilon q' + 4q'B\sqrt{\frac{\ln(\frac{2}{\delta}) + |\cG|\ln(1+\frac{2B}{\epsilon})}{2n}}
    \end{align*}
    where $j$ is chosen such that $||\lambda - \lambda_j||_2 \le \epsilon$ and we can find such $j$ because $R_B$ is an $\epsilon$-net for the ball of radius $B$.
    
    Setting $\epsilon = \frac{B}{n}$ yields 
    \begin{align*}
        \le \frac{2Bq'}{n} + 4q'B \sqrt{\frac{\ln(\frac{2}{\delta}) + |\cG|\ln(1+2n)}{2n}} \le 6Bq'\sqrt{\frac{\ln(\frac{2}{\delta}) + |\cG|\ln(1+2n)}{2n}}
    \end{align*}
    for sufficiently large $n$.

    We set $\delta_b = \frac{6\delta}{\pi^2 b^2} $ so that  $\sum_{b=1}^\infty \delta_b = \delta$. In other words, with probability $1-\delta$, we have simultaneously over all $b \in \mathbb{N}$ and $\lambda$ where $||\lambda||_2 \le b$
    \begin{align*}
        \left| \frac{1}{n}\sum_{i=1}^n L_q(\hat{f}(x_i;\lambda), y_i)-  \E_{(x,y) \sim \cD}[L_q(\hat{f}(x;\lambda), y)]\right| \le 6bq'\sqrt{\frac{\ln(\frac{\pi^2 b^2}{3\delta}) + |\cG|\ln(1+2n)}{2n}}.
    \end{align*}
    
    In other words, the final output $\lambda^*_D$ from \texttt{BatchGCP} and $\lambda^*$ which is the optimal solution with respect to the true distribution $\cD$ must be such that \begin{align*}
        \left| \frac{1}{n}\sum_{i=1}^n L_q(\hat{f}(x_i;\lambda^*_D), y_i)-  \E_{(x,y) \sim \cD}[L_q(\hat{f}(x;\lambda^*_D), y)]\right| \le 6bq'\sqrt{\frac{\ln(\frac{\pi^2 b^2}{3\delta}) + |\cG|\ln(1+2n)}{2n}}\\
        \left| \frac{1}{n}\sum_{i=1}^n L_q(\hat{f}(x_i;\lambda^*), y_i)-  \E_{(x,y) \sim \cD}[L_q(\hat{f}(x;\lambda^*), y)]\right| \le 6bq'\sqrt{\frac{\ln(\frac{\pi^2 b^2}{3\delta}) + |\cG|\ln(1+2n)}{2n}}.
    \end{align*}
    where $b = \lceil \max(||\lambda^*||_2, ||\lambda^*_D||_2) \rceil$.
    In other words, 
    \begin{align*}
        \E_{(x,y) \sim \cD}[L_q(\hat{f}(x;\lambda^*_D), y)]-  \E_{(x,y) \sim \cD}[L_q(\hat{f}(x;\lambda^*), y)] \le 12bq'\sqrt{\frac{\ln(\frac{\pi^2 b^2}{3\delta}) + |\cG|\ln(1+2n)}{2n}}.
    \end{align*}
    
    Now, for the sake of contradiction, suppose that there exists some $g \in \cG$ such that
    \[
        \Pr_{(x,s)\sim\cS}[g(x) =1] \cdot \left(q- \Pr_{(x,s) \sim \cS}[s \le \hat{f}(x;\lambda^*_D) | g(x) = 1]\right)^2 > \alpha'
    \]
    where $\alpha' = 24\rho b q'\sqrt{\frac{\ln(\frac{\pi^2 b^2}{3\delta}) + |\cG|\ln(1+2n)}{2n}}$.
    
    Then, Lemma~\ref{lem:patch-pinball-loss-change} tells us that we can decrease the true pinball loss with respect to $\lambda^*_D$ by at least $\frac{\alpha'}{2\rho}$ by patching $B=\{x: g(x) =1\}$. However, that cannot be the case that as that would mean there exists $\lambda'$ such that
    \begin{align*}
        \E_{(x,y) \sim \cD}[L_q(\hat{f}(x;\lambda'), y)] <  \E_{(x,y) \sim \cD}[L_q(\hat{f}(x;\lambda^*), y)]
    \end{align*}
    where $\lambda'$ is such that $\lambda'_g = \lambda^*_{D,g}$ for all $g' \neq g$ and $\lambda'_g$ is chosen to satisfy \[
    q = \Pr_{(x,y) \sim \cD}[f(x) + \lambda'_g | g(x)=1].\] This is a contradiction as we have already defined \[
        \lambda^* = \arg\min_{\lambda} \E_{(x,y) \sim \cD}[L_q(\hat{f}(x; \lambda), y)].
    \]
\end{proof}

\subsection{Out of sample guarantees for \texttt{BatchMVP}} \label{app:BatchMVP_generalization}

\subsubsection{Out-of-sample quantile multicalibration bound for fixed \texorpdfstring{$T$}{T}}

\BatchMVPalphaprimebound*

\begin{proof}[Proof 
]
For each $t \in \mathbb{N}$, define $\delta_t = \delta \cdot \frac{6}{\pi^2} \cdot \frac{1}{t^2}$. Note that 
\begin{align*}
    \sum_{t=1}^\infty \delta_t = \delta \frac{6}{\pi^2} \sum_{t=1}^\infty \frac{1}{t^2} = \delta
\end{align*}
as $\sum_{t=1}^\infty \frac{1}{t^2} = \frac{\pi^2}{6}$. 

Fixed any $t \in \mathbb{N}$ and $\delta$. Union-bounding Lemma~\ref{lem:quantile-error-add-chernoff} over $g \in \cG$ and $f \in \cC_t$, we have that for any $g \in \cG$, with probability $1-\delta$,
    \begin{align*}
        &\sum_{v \in R(f_t)}\Pr_{(x,s)\sim\cS}[f_t(x) = v, g(x) = 1] \left(q- \Pr_{(x,s)\sim \cS}[s\le f_t(x)|f_t(x)=v, g(x) = 1]\right)^2\\
        &\le \sum_{v \in R(f_t)}\Pr_{(x,s)\sim\tilde{\cS}}[f_t(x) = v, g(x) = 1] \left(q- \Pr_{(x,s)\sim \tilde{\cS}}[s\le f_t(x)|f_t(x)=v, g(x) = 1]\right)^2\\
        &+\underbrace{\sum_{v \in R(f_t)} 21 \sqrt{\frac{3\left(\ln(\frac{8}{\delta}) + t\ln(4m^2|\cG|)\right)\Pr_{\cS}[f_t(x) = v, g(x) = 1]}{n}} + \frac{12(\ln(\frac{8}{\delta}) + t\ln(4m^2|\cG|))}{n}}_{(*)}.
\end{align*}

We can further bound $(*)$ as 
\begin{align*}
        (*) &\le \sum_{v \in R(f_t)} 21 \sqrt{\frac{3\left(\ln(\frac{8}{\delta}) + t\ln(4m^2|\cG|)\right)\Pr_{\cS}[f_t(x) = v, g(x) = 1]}{n}} +  2m\frac{12(\ln(\frac{8}{\delta}) + t\ln(4m^2|\cG|))}{n}\\
        &\underbrace{\le}_{(**)} \alpha +  21\sqrt{\frac{3 \left(\ln(\frac{8}{\delta}) + t\ln(4m^2|\cG|)\right)\frac{\Pr_{\cS}[g(x)=1]}{|R(f_t)|}}{n}|R(f)|^2} + \frac{12\rho^2(\ln(\frac{8}{\delta}) + t\ln(4m^2|\cG|))}{\alpha n}\\
        &\le 21\sqrt{\frac{3 \left(\ln(\frac{8}{\delta}) + t\ln(4m^2|\cG|)\right)|R(f_t)|}{n}} + \frac{12\rho^2(\ln(\frac{8}{\delta}) + T\ln(4m^2|\cG|))}{\alpha n}\\
        &= 21\sqrt{\frac{3\rho^2 \left(\ln(\frac{8}{\delta}) + t\ln(4m^2|\cG|)\right)}{4\alpha n}} + \frac{12\rho^2(\ln(\frac{8}{\delta}) + t\ln(4m^2|\cG|))}{\alpha n}\\
        &= 21\sqrt{\frac{3\rho^2 \left(\ln(\frac{8}{\delta}) + t\ln(\frac{\rho^4 |\cG|}{\alpha^2})\right)}{2\alpha n}} + \frac{12\rho^2(\ln(\frac{8}{\delta}) + t\ln(\frac{\rho^4 |\cG|}{\alpha^2}))}{\alpha n}
    \end{align*}
    where we used $|R(f_t)| = m + 1 \le 2m$ and $m=\frac{\rho^2}{2\alpha}$.
    Inequality $(**)$ follows from the fact that $h(z) = \sqrt{\frac{3t \left(\ln(\frac{8}{\delta}) + t\ln(4m^2|\cG|)\right)\cdot z}{n}}$ is concave and so the optimization problem $\sum_{i=1}^{|R(f_t)|} h(z_i)$ where $\sum_{i=1}^{|R(f_t)|} z_i = \Pr_\cS(g(x)=1)$ is maximized at $z_i = \frac{\Pr_\cS(g(x)=1)}{|R(f_t)|}$ for each $i \in [|R(f_t)|]$.
    
    In other words, with probability $1-\delta = 1- \sum_{t=1}^\infty \delta_t$, we simultaneously have for every $t \in \mathbb{N}$ 
    \begin{align*}
        &\sum_{v \in R(f_t)}\Pr_{(x,s)\sim\cS}[f_t(x) = v, g(x) = 1] \left(q- \Pr_{(x,s)\sim \cS}[s\le f_t(x)|f_t(x)=v, g(x) = 1]\right)^2\\
        &\le \sum_{v \in R(f_t)}\Pr_{(x,s)\sim\tilde{\cS}}[f_t(x) = v, g(x) = 1] \left(q- \Pr_{(x,s)\sim \tilde{\cS}}[s\le f_t(x)|f_t(x)=v, g(x) = 1]\right)^2\\
        & +21\sqrt{\frac{3\rho^2 \left(\ln(\frac{8}{\delta_t}) + t\ln(\frac{\rho^4 |\cG|}{\alpha^2})\right)}{2\alpha n}} + \frac{12\rho^2(\ln(\frac{8}{\delta_t}) + t\ln(\frac{\rho^4 |\cG|}{\alpha^2}))}{\alpha n}\\
        &\le \sum_{v \in R(f_t)}\Pr_{(x,s)\sim\tilde{\cS}}[f_t(x) = v, g(x) = 1] \left(q- \Pr_{(x,s)\sim \tilde{\cS}}[s\le f_t(x)|f_t(x)=v, g(x) = 1]\right)^2\\
        & + 21\sqrt{\frac{3\rho^2 \left(\ln(\frac{4\pi^2t^2}{3\delta}) + t\ln(\frac{\rho^4 |\cG|}{\alpha^2})\right)}{2\alpha n}} + \frac{12\rho^2(\frac{4\pi^2t^2}{3\delta}) + t\ln(\frac{\rho^4 |\cG|}{\alpha^2}))}{\alpha n}.
    \end{align*}
    
    Finally, when our algorithm halts at round $T$, $f_T$ is $\alpha$-approximately multicalibrated with respect to its empirical distribution $\tilde{\cS}$. Therefore, we have
    \begin{align*}
        &\sum_{v \in R(f_T)}\Pr_{(x,s)\sim\cS}[f_T(x) = v, g(x) = 1] \left(q- \Pr_{(x,s)\sim \cS}[s\le f_T(x)|f_T(x)=v, g(x) = 1]\right)^2\\
        &\le \alpha + 21\sqrt{\frac{3\rho^2 \left(\ln(\frac{4\pi^2t^2}{3\delta}) + t\ln(\frac{\rho^4 |\cG|}{\alpha^2})\right)}{2\alpha n}} + \frac{12\rho^2(\frac{4\pi^2t^2}{3\delta}) + t\ln(\frac{\rho^4 |\cG|}{\alpha^2}))}{\alpha n}.
    \end{align*}
\end{proof}


\subsubsection{Sample complexity of maintaining convergence speed of \texttt{BatchMVP}}
We write $\cC_0 = \{f_0\}$ and 
\begin{align*}
    \cC_{t+1} = \left\{f_{t+1}:\quad \parbox{30em}{$f_{t+1}(x) = \patch(f_t, B(v,g), \Delta)$ where $B(v,g) = \{x: f_t(x) = v, g(x) = 1\}$ \\
    for all $f_t \in \cC_t, v \in \left[\frac{1}{m}\right], g \in \cG, \Delta \in \left[\frac{1}{m}\right]]$.}\right\}
\end{align*}
to denote the set of all possible models $f_t$ we could obtain at round $t$ of Algorithm \ref{alg:batch-quantile-multicalibrator} regardless of what dataset $S=\{(x_i,s_i)\}_{i=1}^n$ is used as input.

\begin{lemma}
\label{lem:total-num-models}
Fixing the initial model  $f_0$, the number of distinct models $f_t$ that can arise at round $t$ of Algorithm \ref{alg:batch-quantile-multicalibrator} (quantified over all possible input datasets $S$) is upper bounded by:
\[
    |\cC_t| = ((m+1)^2 |\cG|)^{t} \le (4m^2 |\cG|)^{t}.
\]
\end{lemma}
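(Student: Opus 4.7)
The plan is a straightforward induction on $t$, using the recursive definition of $\cC_t$ together with a simple counting of how many distinct patches can be applied to any fixed predecessor model.

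For the base case, $\cC_0 = \{f_0\}$ so $|\cC_0| = 1 = ((m+1)^2|\cG|)^0$, matching the claimed bound.

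For the inductive step, suppose $|\cC_t| \le ((m+1)^2|\cG|)^t$. By the defining equation for $\cC_{t+1}$, every element $f_{t+1} \in \cC_{t+1}$ is obtained from some $f_t \in \cC_t$ via a patch operation $\patch(f_t, B(v,g), \Delta)$ parameterized by a triple $(v, g, \Delta)$ where $v \in [\tfrac{1}{m}]$, $g \in \cG$, and $\Delta \in [\tfrac{1}{m}]$. The grid $[\tfrac{1}{m}] = \{0, \tfrac{1}{m}, \tfrac{2}{m}, \ldots, 1\}$ has $m+1$ elements, so the number of possible triples $(v, g, \Delta)$ is exactly $(m+1) \cdot |\cG| \cdot (m+1) = (m+1)^2 |\cG|$. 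Hence for each fixed $f_t \in \cC_t$, at most $(m+1)^2 |\cG|$ distinct models $f_{t+1}$ can be produced, and therefore
\[
|\cC_{t+1}| \le |\cC_t| \cdot (m+1)^2 |\cG| \le ((m+1)^2|\cG|)^{t+1},
\]
completing the induction.

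The secondary inequality $((m+1)^2|\cG|)^t \le (4m^2|\cG|)^t$ follows from $(m+1)^2 \le 4m^2$, which holds for every integer $m \ge 1$ (since $m+1 \le 2m$ for $m \ge 1$) and hence for the relevant setting where $m = \rho^2/(2\alpha)$ is assumed large. There is no real obstacle here: the claim is purely combinatorial counting of the branching factor of the patch operation, and the only subtle point is to make sure the counting is an \emph{upper} bound (distinct triples may sometimes produce the same patched function, but this only helps).
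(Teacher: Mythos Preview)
Your proof is correct and follows essentially the same approach as the paper: induction on $t$, with base case $|\cC_0|=1$ and inductive step counting the $(m+1)^2|\cG|$ possible parameter triples for the patch operation. You are in fact slightly more careful than the paper, noting explicitly that distinct triples may yield the same patched function (so the count is only an upper bound) and justifying the secondary inequality $(m+1)^2 \le 4m^2$.
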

\begin{proof}
The proof is by induction on $t \in \mathbb{N}$. By construction we have $|\cC_0|=1$. Now, note that $|\cC_{t+1}| = (m+1)^2 |\cG| |\cC_t|$ because we consider all possible combinations of $f_t \in \cC_t, v \in \left[\frac{1}{m}\right], g \in \cG,$ and $\Delta \in \left[\frac{1}{m}\right]$. Therefore, if $|\cC_t| = ((m+1)^2 |\cG|)^t$ for some $t$, then $|\cC_{t+1}| = ((m+1)^2 |\cG|)^{t+1}$. 
\end{proof}

\BatchMVPsamplecomplexity*

\begin{proof}[Proof 
]
Fix any round $t$. Because $f_t$ is not $\alpha$-approximately quantile multicalibrated with respect to $\cG$ and $q$ on $\tilde{\cS}$, we have
\begin{align*}
    \Pr_{(x,s) \sim \tilde{\cS}}[x \in B_t]\left(q - \Pr_{(x,s) \sim \tilde{\cS}}[s \leq f_t(x) | x \in B_t])\right)^2 \ge \frac{\alpha}{m+1} \ge \frac{\alpha}{2m}.
\end{align*}


Now, the patch operation in this can be decomposed into the following: 
\begin{align*}
    v_t &\to v_t + \Delta^* \quad\text{where $q = \Pr_{(x,s) \sim \cS|B_t}[s \le f_t(x) + \Delta^*]$}\\
    v_t +\Delta^* &\to v_t + \tilde{\Delta}^*\quad\text{where $\tilde{\Delta}^* = \arg\min_{v \in [1/m]} |v - \Delta^*|$}\\
    v_t + \tilde{\Delta}^* &\to v_t +\Delta_t.
\end{align*}

For convenience, we write 
\begin{align*}
    f^*_t(x) &= f_t(x) + \Delta^* \cdot \ind[x \in B_t]\\
    \tilde{f}^*_t(x) &= f_t(x) + \tilde{\Delta}^* \cdot \ind[x \in B_t]\\
    f_{t+1}(x) &= f_t(x) + \Delta_t \cdot \ind[x \in B_t]
\end{align*}

Now, we show that we decrease the empirical pinball loss $PB^{\tilde{\cS}}$ as we go from $f_t$ to $f_{t+1}$ in each round $t$ with high probability.

\item\paragraph{(1) $f_t \to f^*_t$:} 
First, we can show progress in terms of the pinball loss on $\cS$ and then by a Chernoff bound, we can show that we have made significant progress on $\tilde{\cS}$ with high probability. More specifically, we must have decreased the pinball loss with respect to $\cS|B_t$ by going from $v_t$ to $v_t + \Delta^*$. Note that $\Delta^*$ is chosen to satisfy the target quantile $q$ with respect to $\cS|B_t$ and $\cS$ is $\rho$-Lipschitz. 

Because the empirical quantile error was significant for $(v_t,g_t)$, we can show that the true quantile error must have been significant. By union bounding Lemma~\ref{lem:quantile-error-add-chernoff} over all $f \in \cC_t$ and using Lemma~\ref{lem:total-num-models} to bound the cardinality of $\cC_t$, we have with probability $1-\delta$ 
\begin{align*}
    &\left|\Pr_{(x,s)\sim\cS}[B_t] \left(q- \Pr_{(x,s)\sim \cS|B_t}[s\le f_t(x)]\right)^2  - \Pr_{(x,s)\sim\tilde{\cS}}[B_t] \left(q- \Pr_{(x,s)\sim \tilde{\cS}|B_t}[s\le f_t(x)]\right)^2 \right|\\
    &\le 21 \sqrt{\frac{3\ln(\frac{8|\cC_t|}{\delta})\Pr_{\cS}[x \in B]}{n}} + \frac{12\ln(\frac{8|\cC_t|}{\delta})}{n}\\
    &\le 21 \sqrt{\frac{3\left(\ln(\frac{8}{\delta})+ t\ln(4m^2|\cG|)\right)\Pr_{\cS}[x \in B]}{n}} + \frac{12\left(\ln(\frac{8}{\delta}) + t\ln(4m^2|\cG|) \right)}{n}\\
    &\le 22\sqrt{\frac{12\left(\ln(\frac{8}{\delta})+ t\ln(4m^2|\cG|)\right)}{n}}
\end{align*}
as long $n \ge 12\left(\ln(\frac{8}{\delta}) + t\ln(4m^2|\cG|) \right)$. In other words, we have
\[
    \Pr_{(x,s)\sim\cS}[B_t] \left(q- \Pr_{(x,s)\sim \cS|B_t}[s\le f_t(x)]\right)^2 \ge \frac{\alpha}{2m} - 22\sqrt{\frac{12\left(\ln(\frac{8}{\delta})+ t\ln(4m^2|\cG|)\right)}{n}}.
\]
If $n \ge \frac{92928 m^2 \left(\ln(\frac{8}{\delta})+ t\ln(4m^2|\cG|)\right)}{\alpha^2}$, then we have
\[
    \Pr_{(x,s)\sim\cS}[B_t] \left(q- \Pr_{(x,s)\sim \cS|B_t}[s\le f_t(x) ]\right)^2 \ge \frac{\alpha}{4m}.
\]

As $f^*_t$ achieves the target quantile $q$ against $\cS|B_t$ and the quantile error was at least $\frac{\alpha}{4m}$, applying Lemma~\ref{lem:patch-pinball-loss-change} yields
\begin{align*} 
    PB^{\cS}(f^*_t) - PB^{\cS}(f_t) \le -\frac{\alpha}{8m\rho}.
\end{align*}

\paragraph{(2) $f^*_t \to \tilde{f}^*_t$:}
Recall that $\tilde{\Delta}^*$ results from rounding $\Delta^*$ to the nearest grid point in $[\frac{1}{m}]$. Because $\cS|B_t$ is $\rho$-Lipschitz and $f_t(\cdot) + \Delta^*$ satisfies the target quantile $q$ for $\cS|B_t$, we can bound the marginal quantile consistency error of $f_t(\cdot) + \tilde{\Delta}^*$ against $\cS|B_t$ as 
\begin{align*}
    \left|\Pr_{(x,s) \sim \cS|B_t}[s \le f_{t}(x) + \tilde{\Delta}^*] - q \right|
    &= \left|\Pr_{(x,s) \sim \cS|B_t}[s \le f_{t}(x) + \tilde{\Delta}^*] - \Pr_{(x,s) \sim \cS|B_t}[s \le f_{t}(x) - \Delta^*] \right| \le \frac{\rho}{2m}
\end{align*}
as $|\Delta^* - \tilde{\Delta}^*| \le \frac{1}{2m}$.

Note that $f^*_{t}(x)=\tilde{f}^*_{t}(x)$ for $x \not\in B_{t}$ and $f^*_{t}(x) = \tilde{f}_{t+1}(x) + (\tilde{\Delta}^*-\Delta^*)$ for $x \in B_t$ where $|\tilde{\Delta}^*-\Delta^*| \le \frac{1}{2m}$. 
Applying Lemma~\ref{lem:pinball-loss-change} with $\Delta =\tilde{\Delta}^*-\Delta^*$, $\alpha \le \frac{\rho}{2m}$, and $(f,f') = (\tilde{f}^*_{t}, f^*_{t})$, we have that 
\begin{align*}
    PB^{\cS}_q(\tilde{f}^*_{t}) - PB^{\cS}_q(f^*_{t}) &= \Pr_{(x,s) \sim \cS}[x \in B_t] \cdot \left(PB^{\cS|B_t}(\tilde{f}^*_{t})- PB^{\cS|B_t}(f^*_{t})\right)\\
    &\le \Pr_{(x,s) \sim \cS}[x \in B_t] \cdot \left(\frac{\rho}{2m} \frac{1}{2m} - \left(\frac{\rho}{2m}\right)^2\frac{1}{2\rho}\right)\\
    &\le \frac{\rho}{8m^2}.
\end{align*}

We have so far shown that with probability $1-\delta$,
\begin{align*}
    &PB^\cS(f_{t+1}) - PB^\cS(f_{t}) \\
    &= \left(PB^\cS(f_{t+1}) -PB^\cS(\tilde{f}^*_{t})\right)+ \left(PB^\cS(\tilde{f}^*_{t})  - PB^\cS(f^*_{t})\right) +  \left(PB^\cS(f^*_{t}) - PB^\cS(f_{t})\right)\\
    &\le \left(PB^\cS(f_{t+1}) -PB^\cS(\tilde{f}^*_{t})\right) + \frac{\rho}{8m^2} - \frac{\alpha}{8m\rho}
\end{align*}

By union boudning the Chernoff bound (Theorem~\ref{thm:add-chernoff-bound}) over $\cC_{t+1}$, we can show that $PB^{\tilde{\cS}}(f)$ concentrates around $PB^\cS(f)$ for every $f \in \cC_t$: with probabiliy $1-\delta$, we simultaneously have 
\begin{align*}
    \left|PB^{\tilde{\cS}}(f_{t}) - PB^{\cS}(f_{t}) \right| &\le \sqrt{\frac{\ln(\frac{2}{\delta}) + t\ln(4m^2|\cG|)}{2n}}\\
    \left|PB^{\tilde{\cS}}(\tilde{f}^*_{t}) - PB^{\cS}(\tilde{f}^*_{t}) \right| &\le \sqrt{\frac{\ln(\frac{2}{\delta}) + t\ln(4m^2|\cG|)}{2n}}\\
    \left|PB^{\tilde{\cS}}(f_{t+1}) - PB^{\cS}(f_{t+1}) \right| &\le \sqrt{\frac{\ln(\frac{2}{\delta}) + t\ln(4m^2|\cG|)}{2n}}
\end{align*}
as $f_t, \tilde{f}^*_t, f_{t+1} \in \cC_{t+1}$.

In other words, we have with probability $1-2\delta$
\begin{align*}
    &PB^{\tilde{\cS}}(f_{t+1}) - PB^{\tilde{\cS}}(f_{t}) \\
    &\le \left(PB^{\tilde{\cS}}(f_{t+1}) -PB^{\tilde{\cS}}(\tilde{f}^*_{t})\right) + \frac{\rho}{8m^2} - \frac{\alpha}{4m\rho} + 4\sqrt{\frac{\ln(\frac{2}{\delta}) + t\ln(4m^2|\cG|)}{2n}}.
\end{align*}

\item\paragraph{(3) $\tilde{f}_t^* \to f_{t+1}$:}
Because $\Delta_t$ is chosen to minimize with respect to the empirical distribution out of grid points $[1/m]$, we can show that the pinball loss against the empirical distribution $\tilde{\cS}$ must be lower for $f_{t+1}$ than $\tilde{f}^*_t$.

We can calculate the derivative of the pinball loss with respect to the patch $\Delta$ as 
\begin{align*}
&\frac{d}{d\Delta}PB^{\tilde{\cS}|B_t}(f_t(\cdot) + \Delta) \\
&= \frac{d \E_{(x,s) \sim \tilde{\cS}|B_t}[L_q(f_t(x) + \Delta, s)]}{d\Delta} \\
&= \frac{1}{|B_t|}\frac{d}{d\Delta} \sum_{i: x_i \in B_t} L_q(f_t(x_i)+\Delta, s_i) \\
&= \frac{1}{|B_t|}\left(\sum_{i: x_i \in B_t, s_i \le f_t(x_i) + \Delta} \frac{d}{d\Delta}L_q(f_t(x_i)+\Delta, s_i) + \sum_{i: x_i \in B_t, s_i > f_t(x_i) + \Delta} \frac{d}{d\Delta}L_q(f_t(x_i)+\Delta, s_i) \right)\\
&= \frac{1}{|B_t|}\left( \sum_{i: x_i \in B_t, s_i \le f_t(x_i) + \Delta} (1-q) - \sum_{i: x_i \in B_t, s_i > f_t(x_i) + \Delta} q \right)\\
&= \frac{1}{|B_t|}\left|\{i: x_i \in B_t, s_i \le f_t(x_i) + \Delta\}\right| - q\\
&=\Pr_{(x,s) \sim \tilde{\cS}|B_t}[s \le f_t(x) + \Delta] - q.
\end{align*}

Because $PB^{\tilde{\cS}|B_t}(f_t(\cdot) + \Delta)$ is convex in $\Delta$, minimizing this function is equivalent to minimizing the absolute value of its derivative, which is how $\Delta_t$ is set. Therefore, $PB^{\tilde{\cS}|B_t}(f_t(\cdot) + \Delta)$ is minimized at $\Delta_t$. Hence, we have with probability $1-2\delta$ ($\delta$ to argue that there is significant quantile consistency error on $B_t$ with respect to $\cS$ and $\delta$ to argue that the empirical pinall loss concentrates around its expectation),
\begin{align*}
    &PB^{\tilde{\cS}}(f_{t+1}) - PB^{\tilde{\cS}}(f_{t}) \\
    &\le \left(PB^{\tilde{\cS}}(f_{t+1}) -PB^{\tilde{\cS}}(\tilde{f}^*_{t})\right) + \frac{\rho}{8m^2} - \frac{\alpha}{8m\rho} + 4\sqrt{\frac{\ln(\frac{2}{\delta}) + t\ln(4m^2|\cG|)}{2n}}\\
    &\le \frac{\rho}{8m^2} - \frac{\alpha}{8m\rho} + 4\sqrt{\frac{\ln(\frac{2}{\delta}) + t\ln(4m^2|\cG|)}{2n}}\\
    &\le \frac{-\alpha^2}{4\rho^3}+ 4\sqrt{\frac{\ln(\frac{2}{\delta}) + t\ln(4m^2|\cG|)}{2n}}
\end{align*}
as we have chosen $m=\frac{\rho^2}{2\alpha}$.

If $n \ge \frac{512\rho^6 \left(\ln(\frac{2}{\delta}) + t\ln(4m^2|\cG|)\right)}{\alpha^4}$, we have
\[
    PB^{\tilde{\cS}}(f_{t+1}) - PB^{\tilde{\cS}}(f_{t}) \le -\frac{\alpha^2}{8\rho^3}.
\]

Because we decrease the empirical pinball loss by $\frac{\alpha^2}{8\rho^3}$ in each round with probability $1-2\delta$, we have that with probability $1-2\delta \cdot \frac{8\rho^3}{\alpha^2}$, the algorithm halts at round $T=\frac{8\rho^3}{\alpha^2}$. If 
\begin{align*}
    n \ge 92928\left(\ln\left(\frac{8}{\delta}\right) + \frac{8\rho^3}{\alpha^2} \ln(4m^2|\cG|)\right) \max\left(\frac{m^2}{\alpha^2},\frac{\rho^6}{\alpha^4}\right),
\end{align*} we satisfy all the requirements that we stated previous for $n$ in each round $t \in [T]$. 

If we set $\delta' = \frac{16\rho^3}{\alpha^2} \delta$, we have with probability $1-\delta'$, the algorithm halts in $T=\frac{8\rho^3}{\alpha^2}$ where we require 
\begin{align*}
    n &\ge 92928\left(\ln\left(\frac{128\rho^3}{\alpha^2\delta'}\right) + \frac{8\rho^3}{\alpha^2} \ln(4m^2|\cG|)\right)\max\left(\frac{m^2}{\alpha^2},\frac{\rho^6}{\alpha^4}\right) \\
    &=92928\left(\ln\left(\frac{128\rho^3}{\alpha^2\delta'}\right) + \frac{8\rho^3}{\alpha^2} \ln\left(\frac{\rho^4|\cG|}{\alpha^2}\right)\right)\max\left(\frac{\rho^4}{4\alpha^4},\frac{\rho^6}{\alpha^4}\right) 
\end{align*}
\end{proof}

\section{Additional Experiments and Discussion}
\label{app:experiments}
\subsection{A Direct Test of Group Conditional Quantile Consistency and Quantile Multicalibration}

In this section we abstract away the non-conformity score, and directly perform a direct evaluation of the ability of \texttt{BatchGCP} and \texttt{BatchMVP} to offer group conditional and multivalid quantile consistency guarantees. We produce a synthetic regression dataset defined to have a set of intersecting groups that are all relevant to label uncertainty.  Specifically, the data $\{(x_i, y_i)\}_{i=1}^{10000} \in (\mathbb{Z}_+ \times \mathbb{R})^{10000}$ is generated as follows.
First, we define our group collection as $\mathcal{G} = \{g_1, \ldots, g_{15}\}$, where for each $j = 1, \ldots, 15$, the group $g_j = \{j, 2j, 3j, \ldots\} \subset \mathbb{Z}_+$ contains all multiples of $j$. Note that group $g_1$ encompasses the entire covariate space, ensuring that our methods will explicitly enforce \emph{marginal} coverage in addition to group-wise coverage.
Each $x_i$ is a random integer sampled uniformly from the range $[1, 5000)$. We let the corresponding label $y_i = \frac{|y_i'|}{|y_i'|+1} \in [0, 1]$, where $y_i'$ is distributed as the sum of $n(x_i)$ i.i.d.\ $\mathcal{N}(0, 1)$ random variables, where $n(x_i)$ is the number of groups that $x_i$ belongs to.
After generating our data, we split it into $80\%$ training data $\mathcal{D}_{train}$ and $20\%$ test data $\mathcal{D}_{test}$. We then run both methods on the group collection $\mathcal{G}$, for target coverage $q = 0.9$, with $m = 100$ buckets.

The group coverage obtained by both methods on a sample run can be seen in Figure~\ref{fig:divisible_dataset}. Generally, both methods achieve target group-wise coverage level on all groups, with \texttt{BatchMVP} exhibiting less variance in the attained coverage levels across different runs.
Meanwhile, the group-wise quantile calibration errors of both models are presented in Figure~\ref{fig:divisible_dataset_calibration}. Both \texttt{BatchMVP} and \texttt{BatchGCP} are very well-calibrated on all groups, with calibration errors on the order of $10^{-3}$ --- even though unlike \texttt{BatchMVP}, the definition of \texttt{BatchGCP} does not enforce this constraint explicitly.

\begin{figure}
    \centering
    \includegraphics[width=0.4\linewidth]{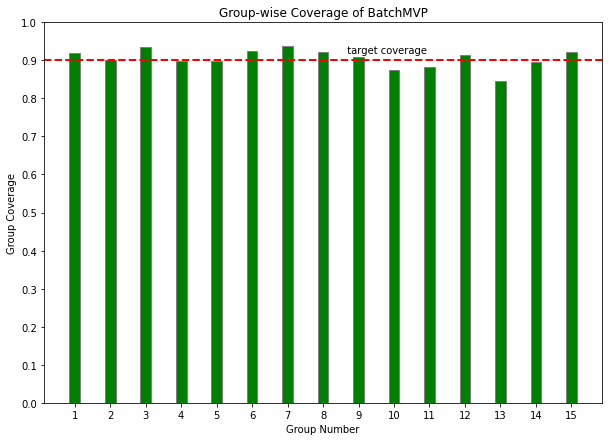}
    \includegraphics[width=0.4\linewidth]{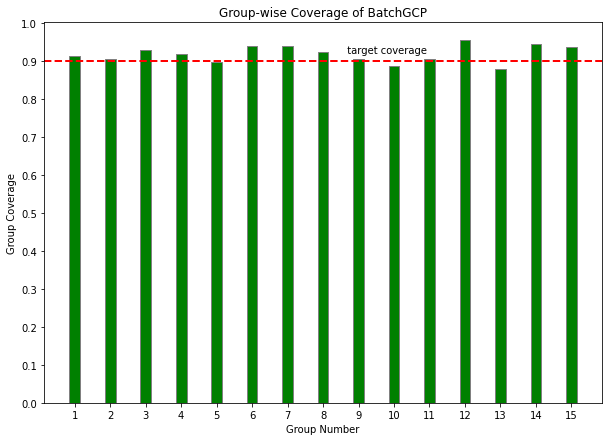}
    \caption{Per-group coverage of \texttt{BatchMVP} (left) and \texttt{BatchGCP} (right) on a representative run.}
    \label{fig:divisible_dataset}
\end{figure}

\begin{figure}
    \centering
    \includegraphics[width=0.4\linewidth]{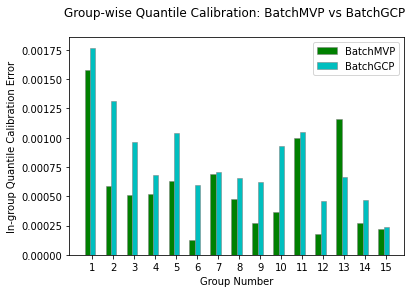}
    \caption{Per-group calibration error of \texttt{BatchMVP} and \texttt{BatchGCP} on a representative run.}
    \label{fig:divisible_dataset_calibration}
\end{figure}

\subsection{Further Comparisons On State Census Data}
\label{app:Folktables-Extra}
In Section \ref{sec:folktables} we performed an income-prediction task using census data (\cite{ding2021retiring}) from the state of California to compare the performance of \texttt{BatchGCP} and \texttt{BatchMVP} against each other, as well as against other regularly used conformal prediction methods. Here, we present results of the same experiment using data from other US states. We selected the ten largest states (by population data) to work with, of which California is one. Results for every state are averaged over 50 runs of the experiment, taking different random splits over the data (to form training, calibration, and test sets) each time. The plots in Figure~\ref{fig:folktables-coverage}, Figure ~\ref{fig:folktables-size}, Figure~\ref{fig:folktables-calib-error} and Figure~\ref{fig:folktables-scatterplots} compare the performance of all four methods with metrics such as group-wise coverage, prediction-set size, and group-wise quantile calibration error.  
\begin{figure}[t]
     \centering
     \begin{subfigure}[b]{0.3\textwidth}
         \centering
         \includegraphics[width=\textwidth]{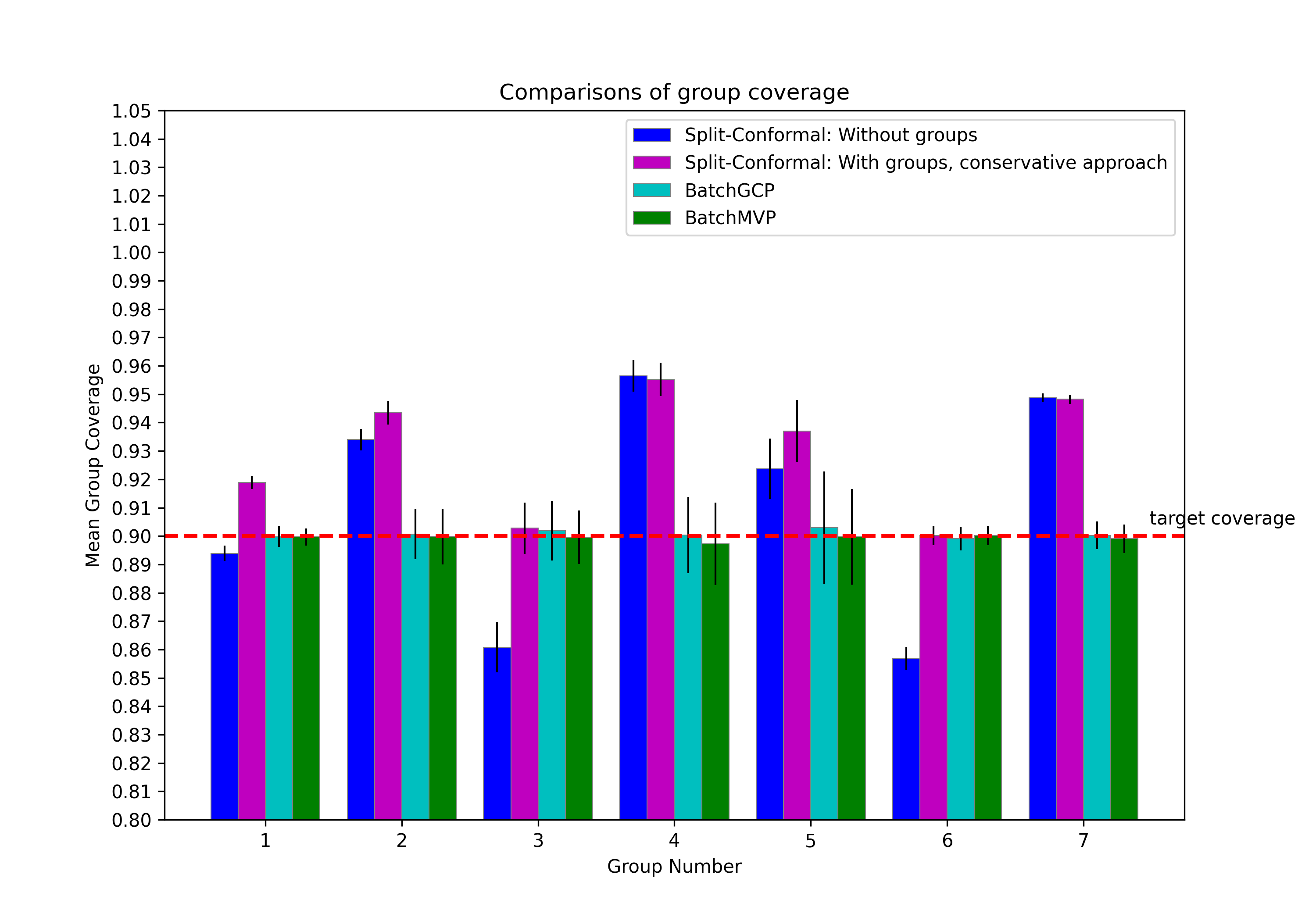}
         \caption{Texas}
         \label{fig:texas-cov}
     \end{subfigure}
     \hfill
     \begin{subfigure}[b]{0.3\textwidth}
         \centering
         \includegraphics[width=\textwidth]{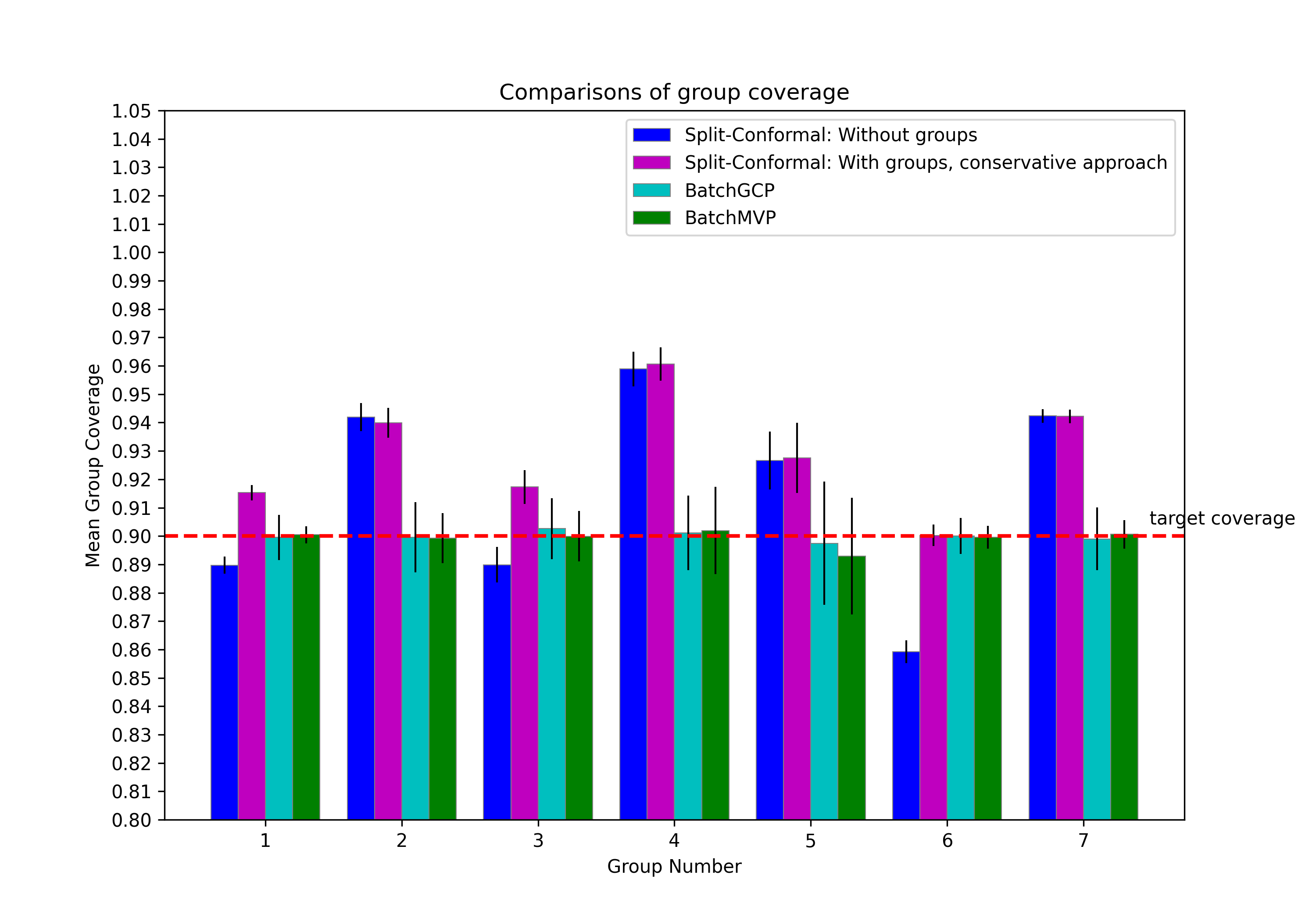}
         \caption{New York}
         \label{fig:ny-cov}
     \end{subfigure}
     \hfill
     \begin{subfigure}[b]{0.3\textwidth}
         \centering
         \includegraphics[width=\textwidth]{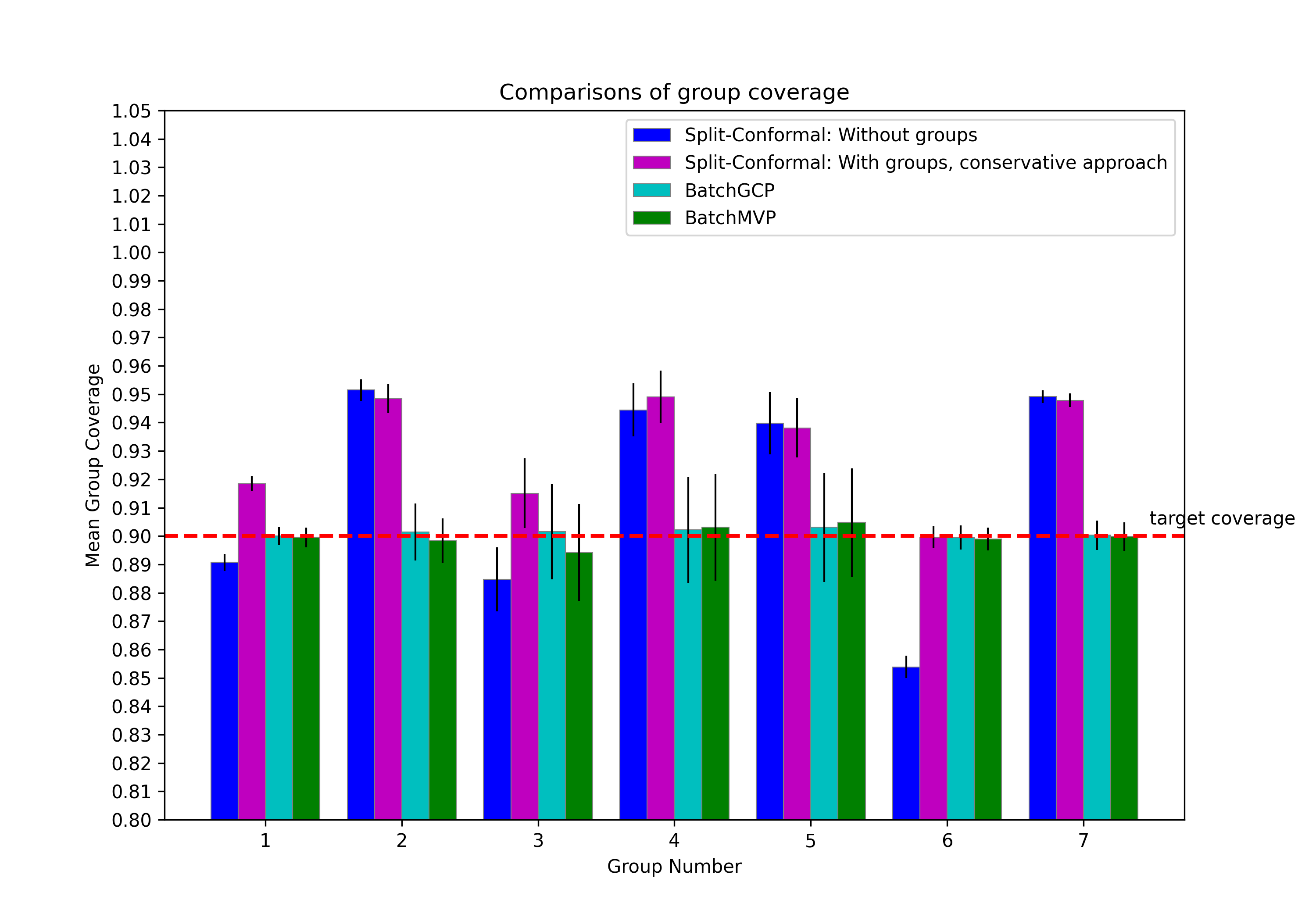}
         \caption{Florida}
         \label{fig:fl-cov}
     \end{subfigure}
     \vfill \vfill \vfill
          \begin{subfigure}[b]{0.3\textwidth}
         \centering
         \includegraphics[width=\textwidth]{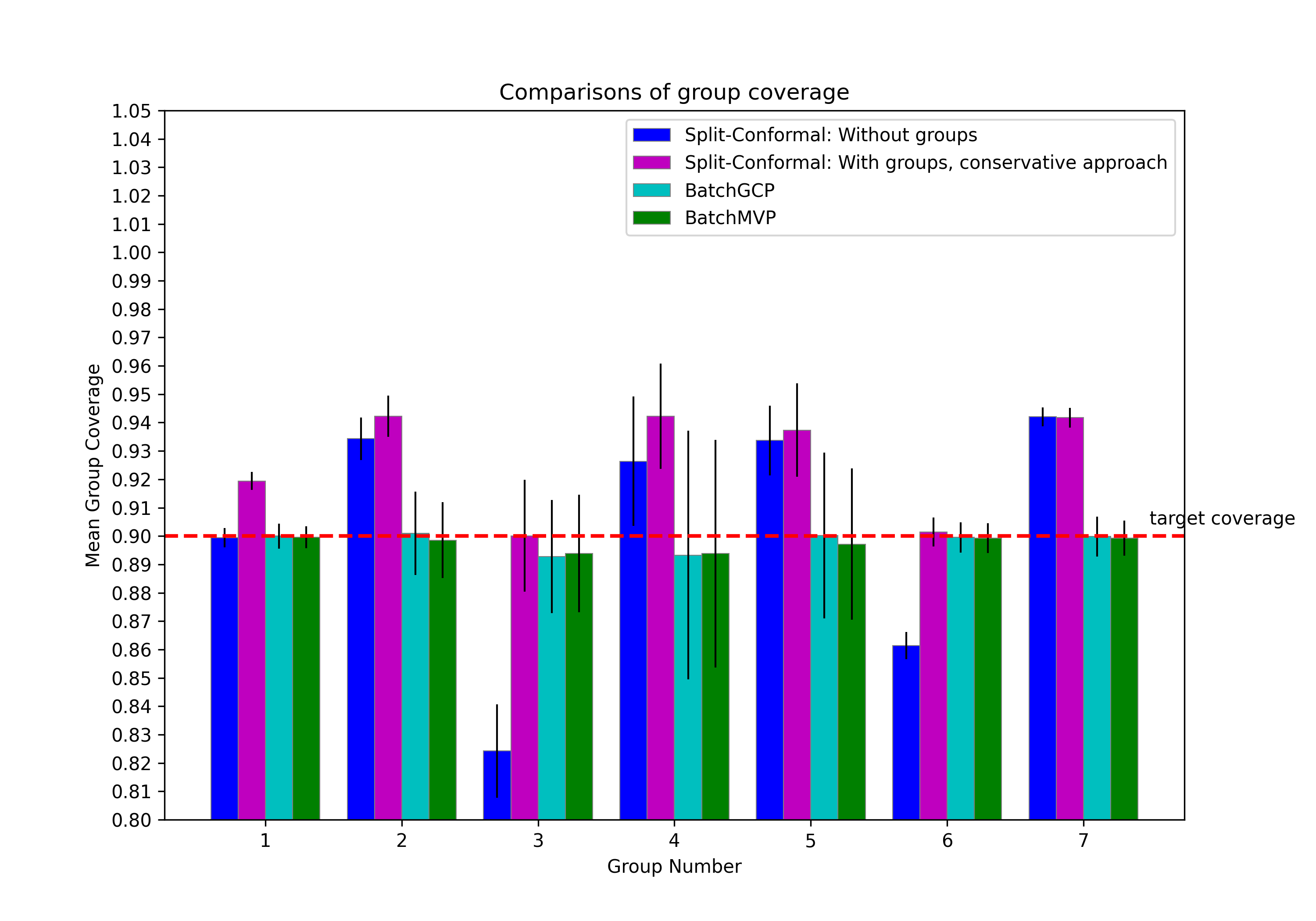}
         \caption{Pennsylvania}
         \label{fig:pa-cov}
     \end{subfigure}
     \hfill
     \begin{subfigure}[b]{0.3\textwidth}
         \centering
         \includegraphics[width=\textwidth]{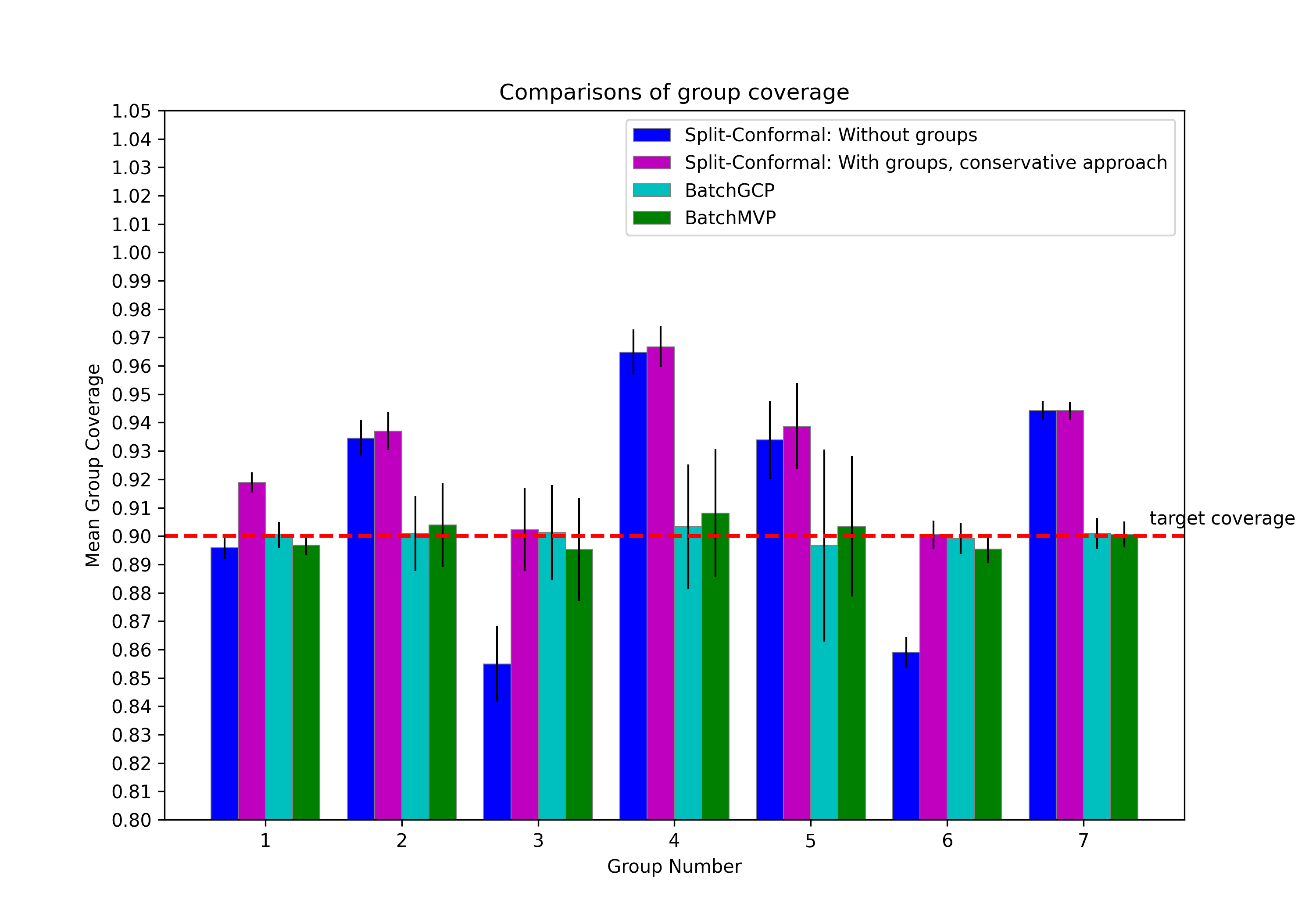}
         \caption{Illinois}
         \label{fig:il-cov}
     \end{subfigure}
     \hfill
     \begin{subfigure}[b]{0.3\textwidth}
         \centering
         \includegraphics[width=\textwidth]{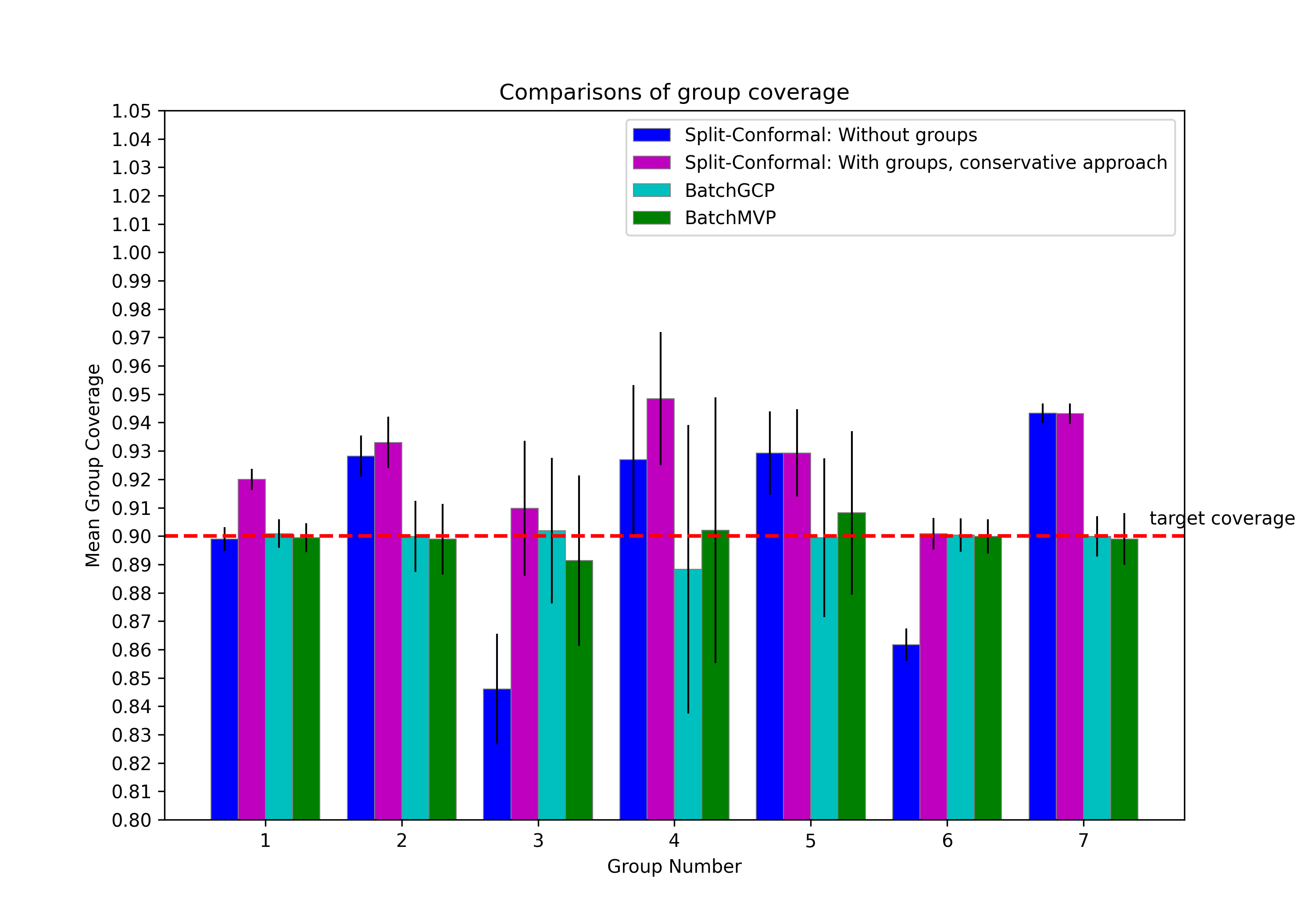}
         \caption{Ohio}
         \label{fig:oh-cov}
     \end{subfigure}
     \vfill \vfill \vfill
          \begin{subfigure}[b]{0.3\textwidth}
         \centering
         \includegraphics[width=\textwidth]{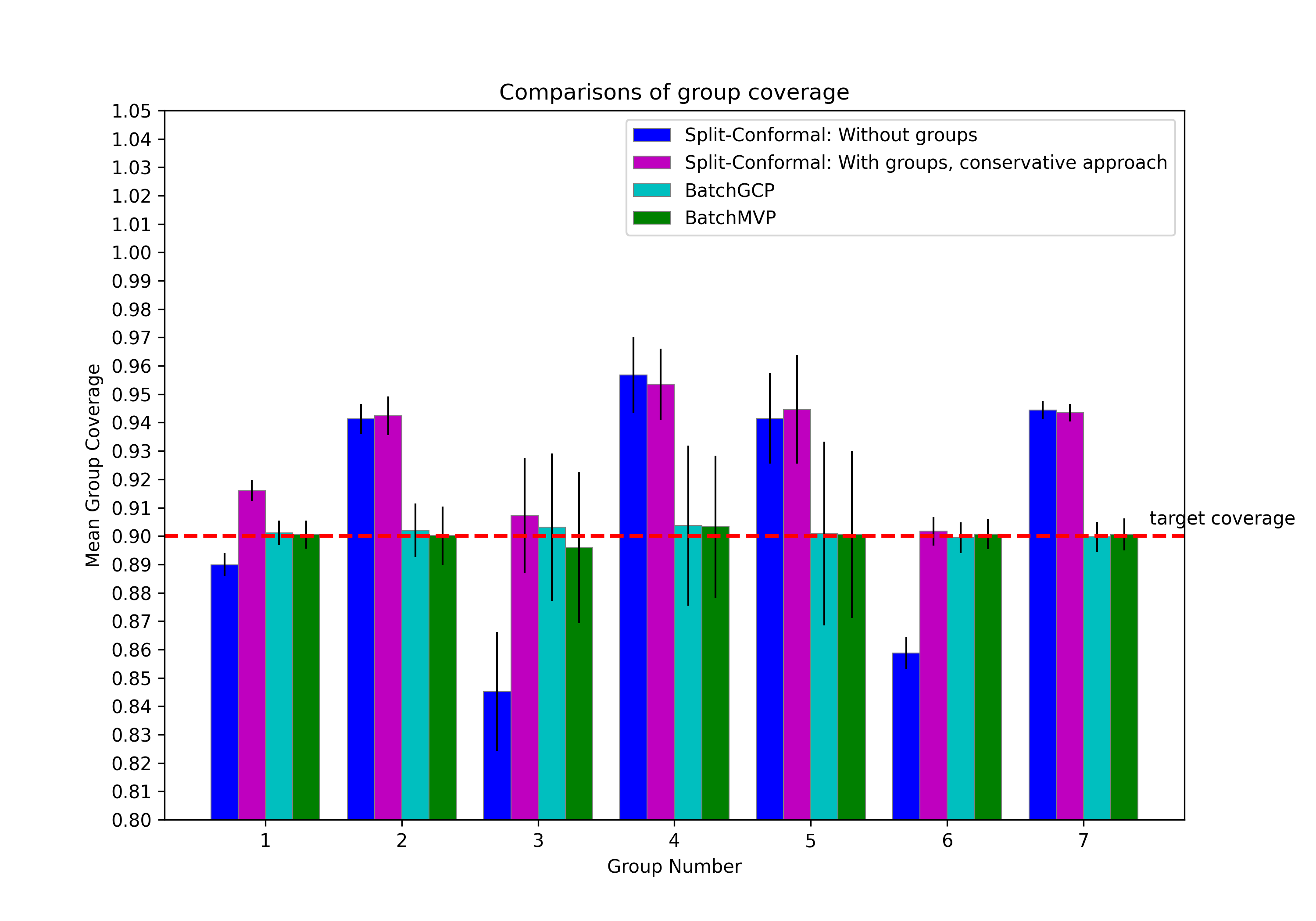}
         \caption{North Carolina}
         \label{fig:nc-cov}
     \end{subfigure}
     \hfill
     \begin{subfigure}[b]{0.3\textwidth}
         \centering
         \includegraphics[width=\textwidth]{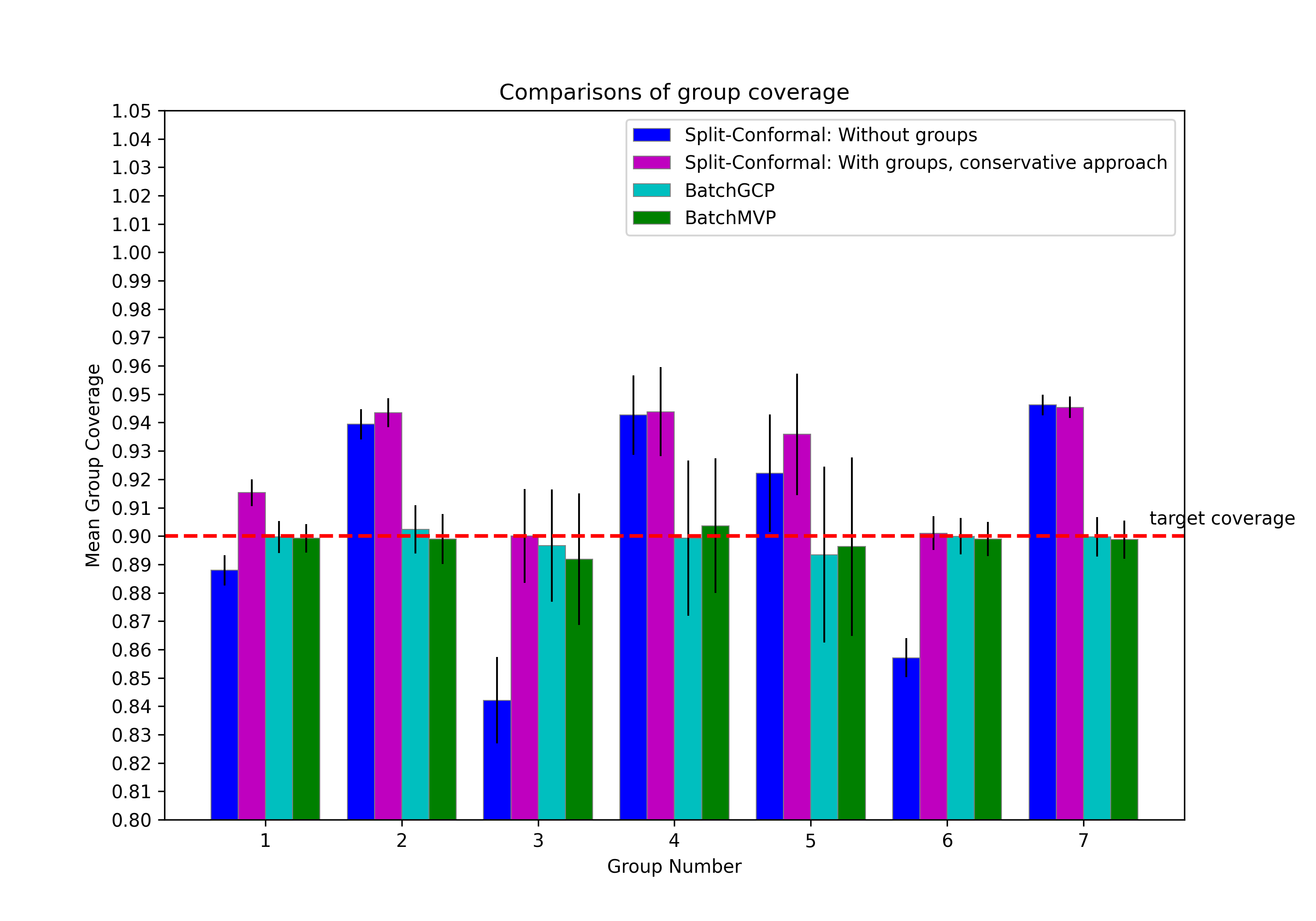}
         \caption{Georgia}
         \label{fig:ga-cov}
     \end{subfigure}
     \hfill
     \begin{subfigure}[b]{0.3\textwidth}
         \centering
         \includegraphics[width=\textwidth]{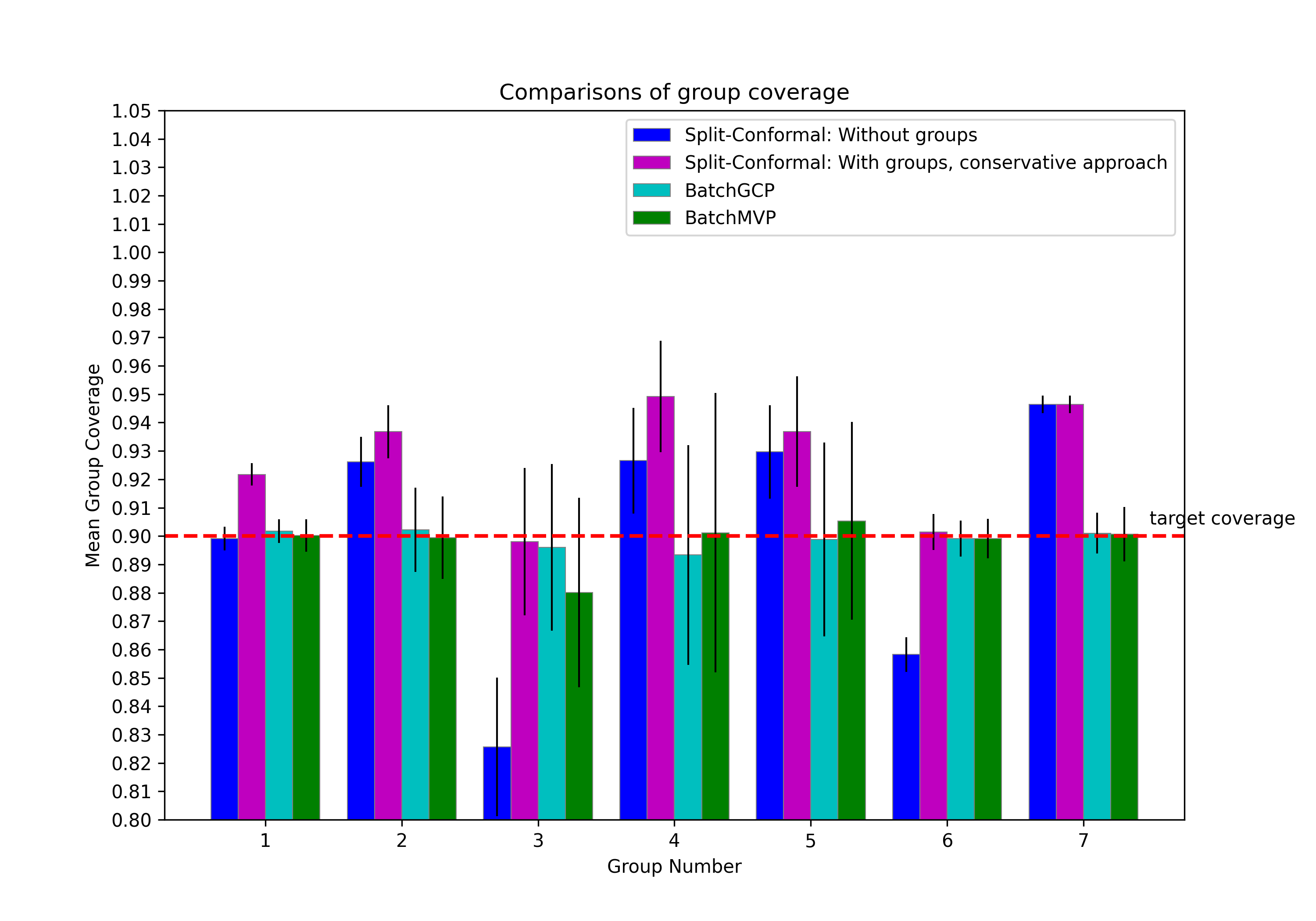}
         \caption{Michigan}
         \label{fig:mi-cov}
     \end{subfigure}
        \caption{Results for group-wise coverage for nine different states (averaged over 50 runs) using different methods of conformal prediction. Error bars show standard deviation.}
        \label{fig:folktables-coverage}
\end{figure}

\begin{figure}
     \centering
     \begin{subfigure}[b]{0.3\textwidth}
         \centering
         \includegraphics[width=\textwidth]{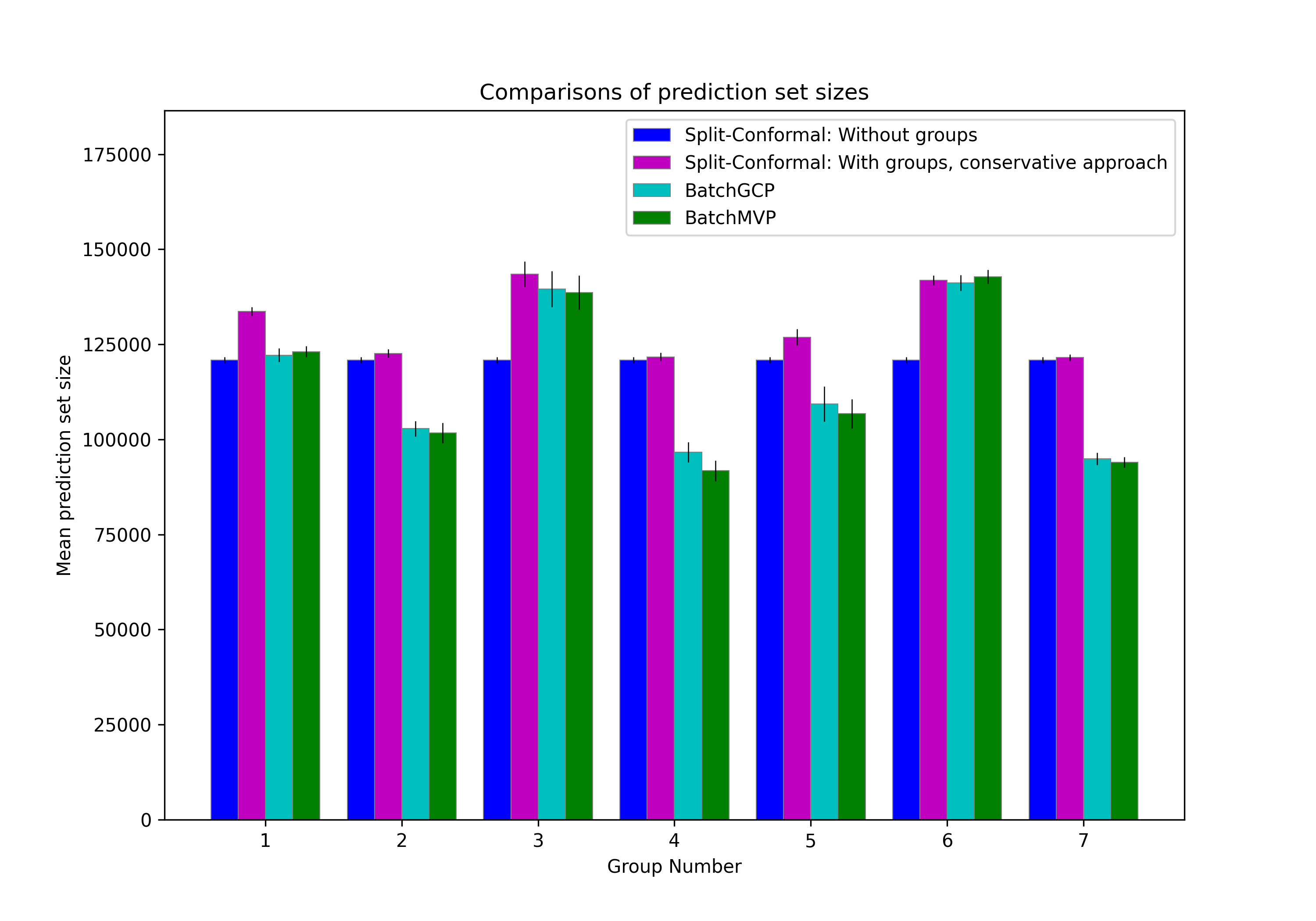}
         \caption{Texas}
     \end{subfigure}
     \hfill
     \begin{subfigure}[b]{0.3\textwidth}
         \centering
         \includegraphics[width=\textwidth]{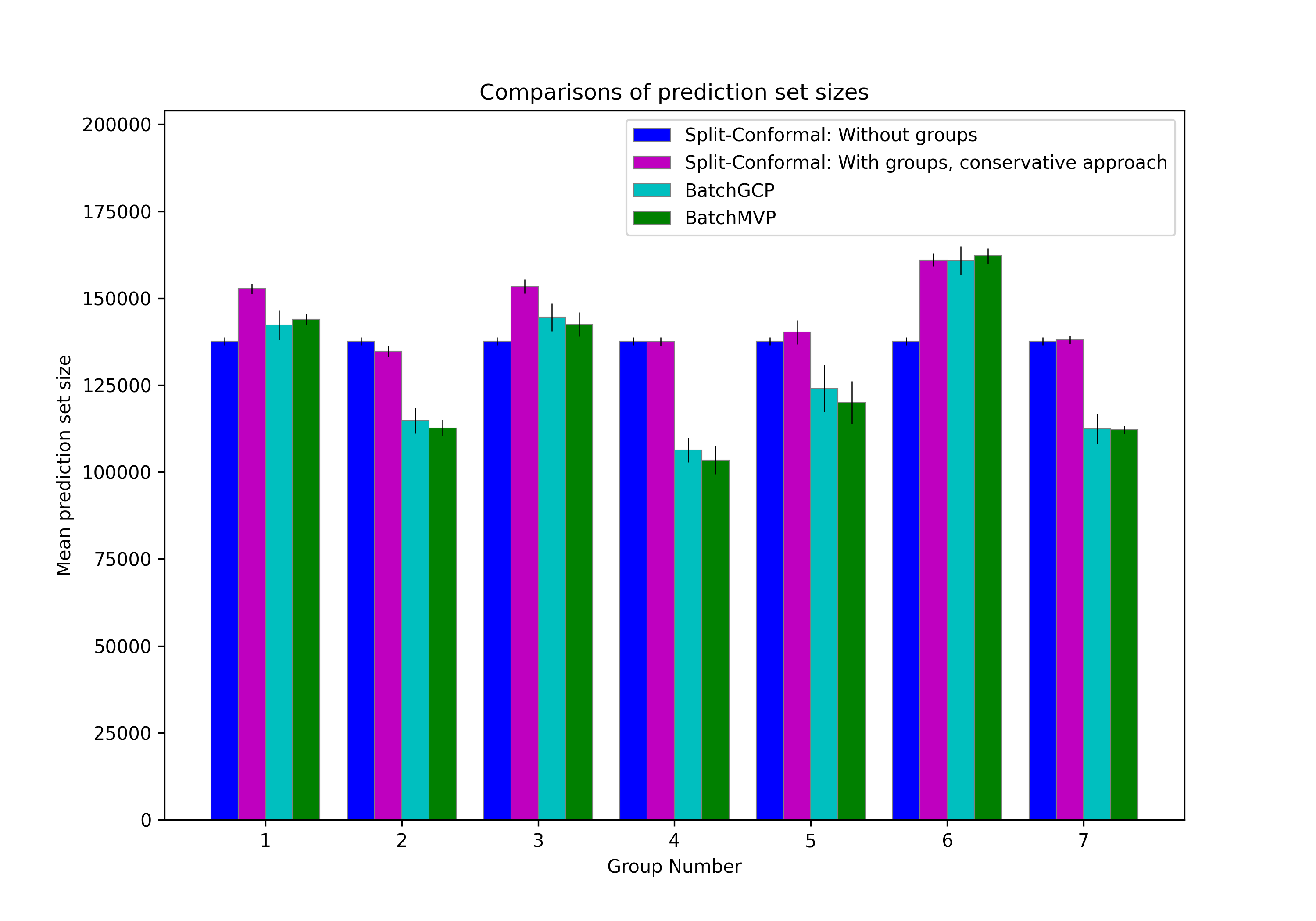}
         \caption{New York}
     \end{subfigure}
     \hfill
     \begin{subfigure}[b]{0.3\textwidth}
         \centering
         \includegraphics[width=\textwidth]{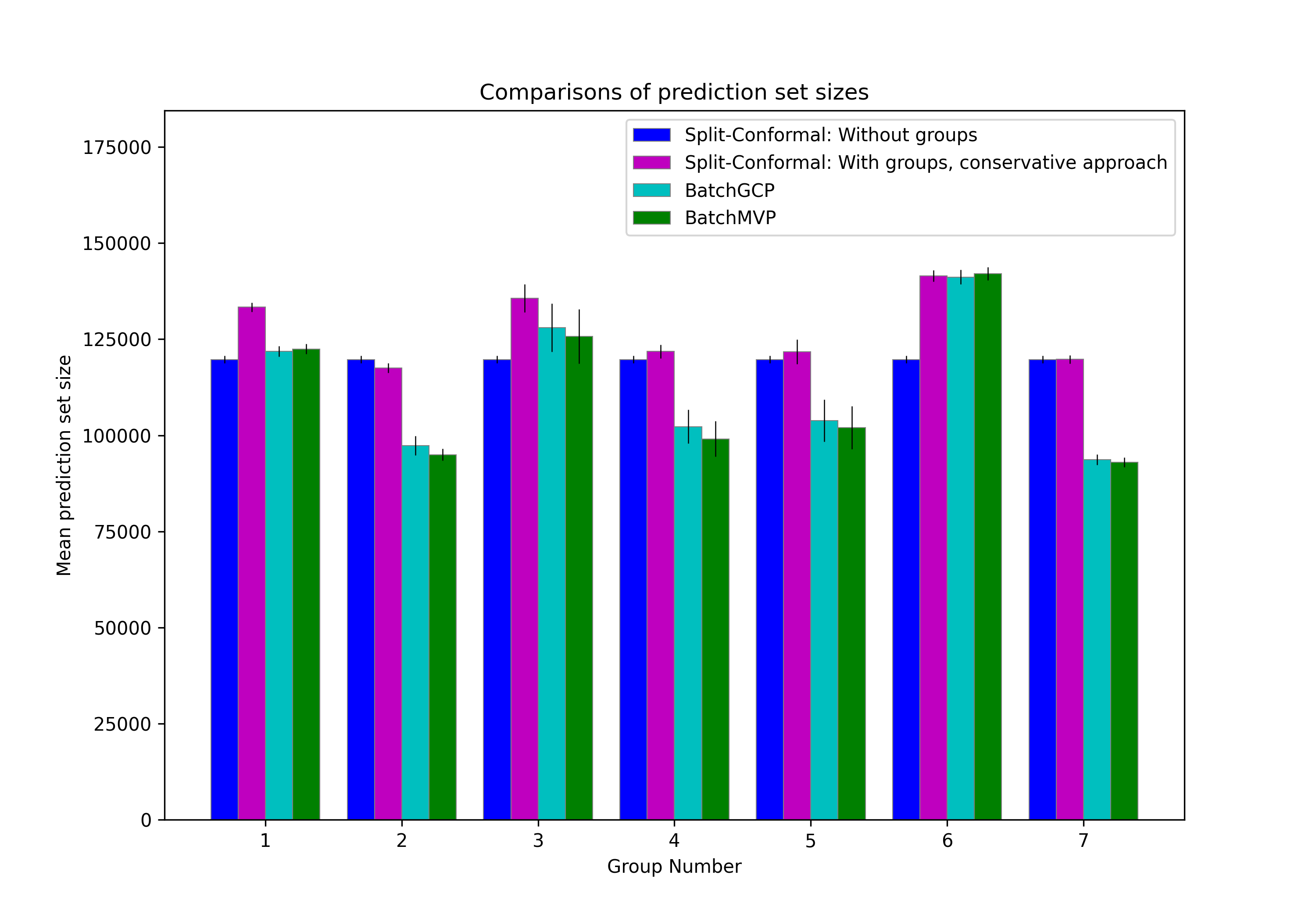}
         \caption{Florida}
     \end{subfigure}
     \vfill \vfill \vfill
          \begin{subfigure}[b]{0.3\textwidth}
         \centering
         \includegraphics[width=\textwidth]{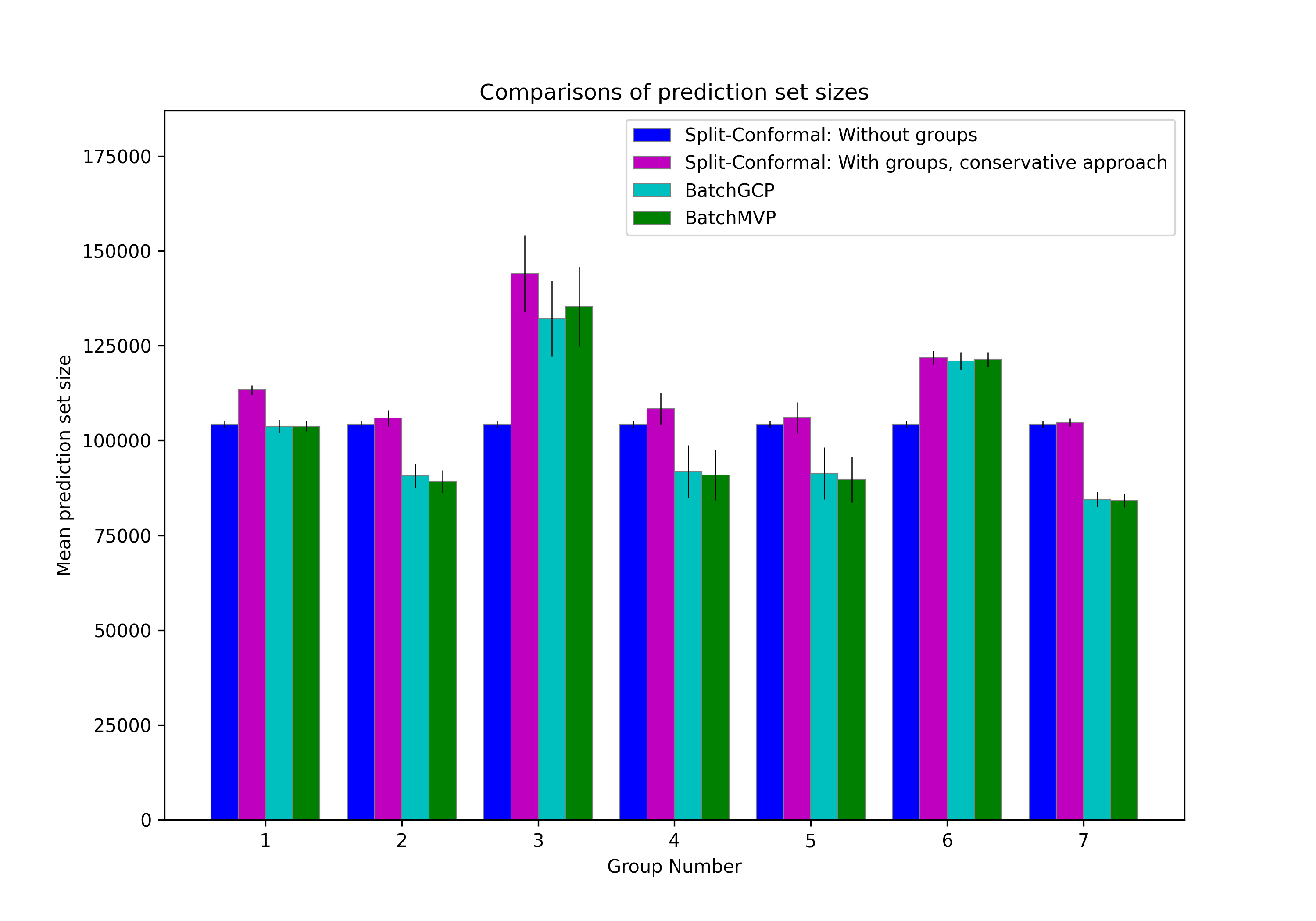}
         \caption{Pennsylvania}
     \end{subfigure}
     \hfill
     \begin{subfigure}[b]{0.3\textwidth}
         \centering
         \includegraphics[width=\textwidth]{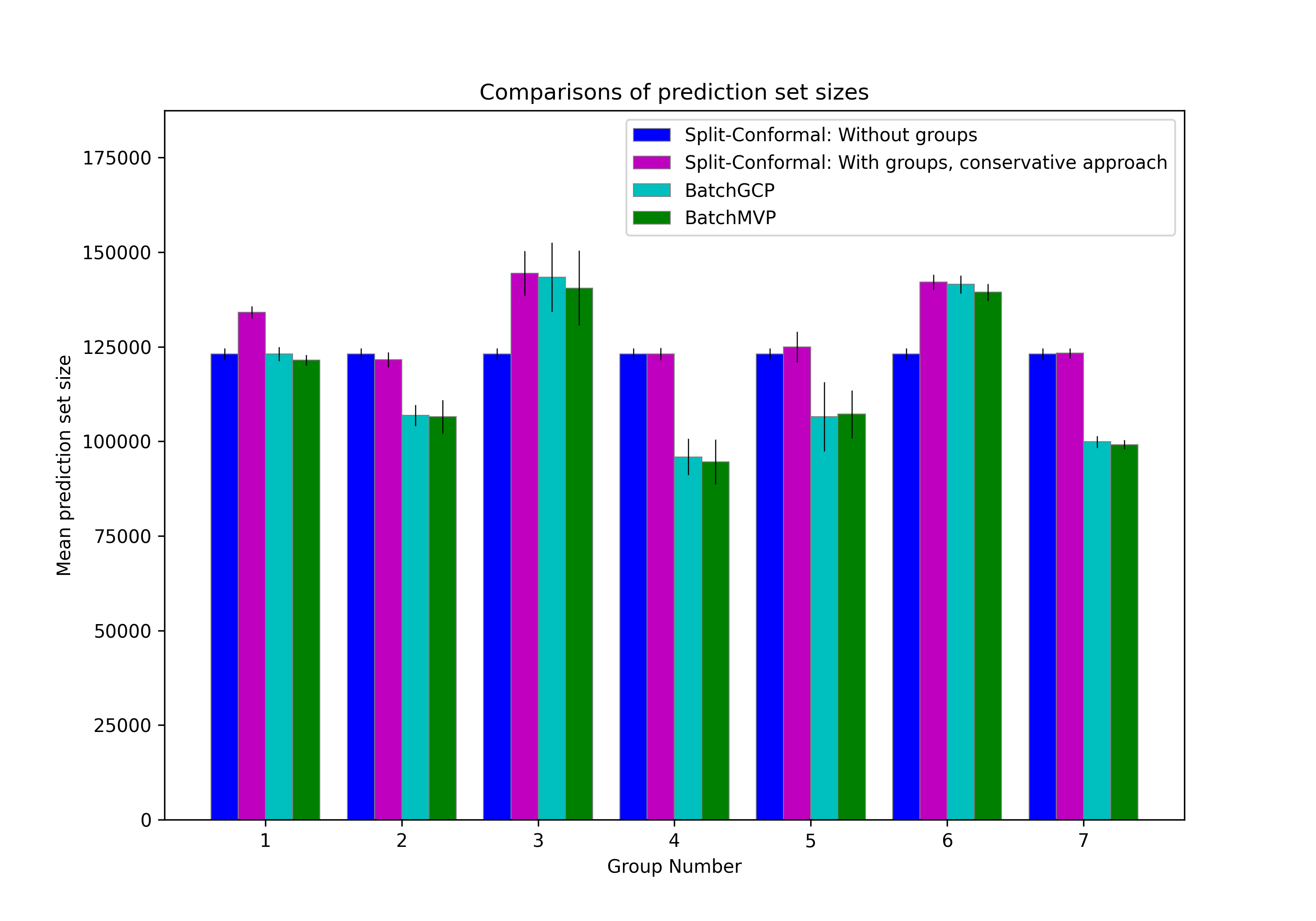}
         \caption{Illinois}
     \end{subfigure}
     \hfill
     \begin{subfigure}[b]{0.3\textwidth}
         \centering
         \includegraphics[width=\textwidth]{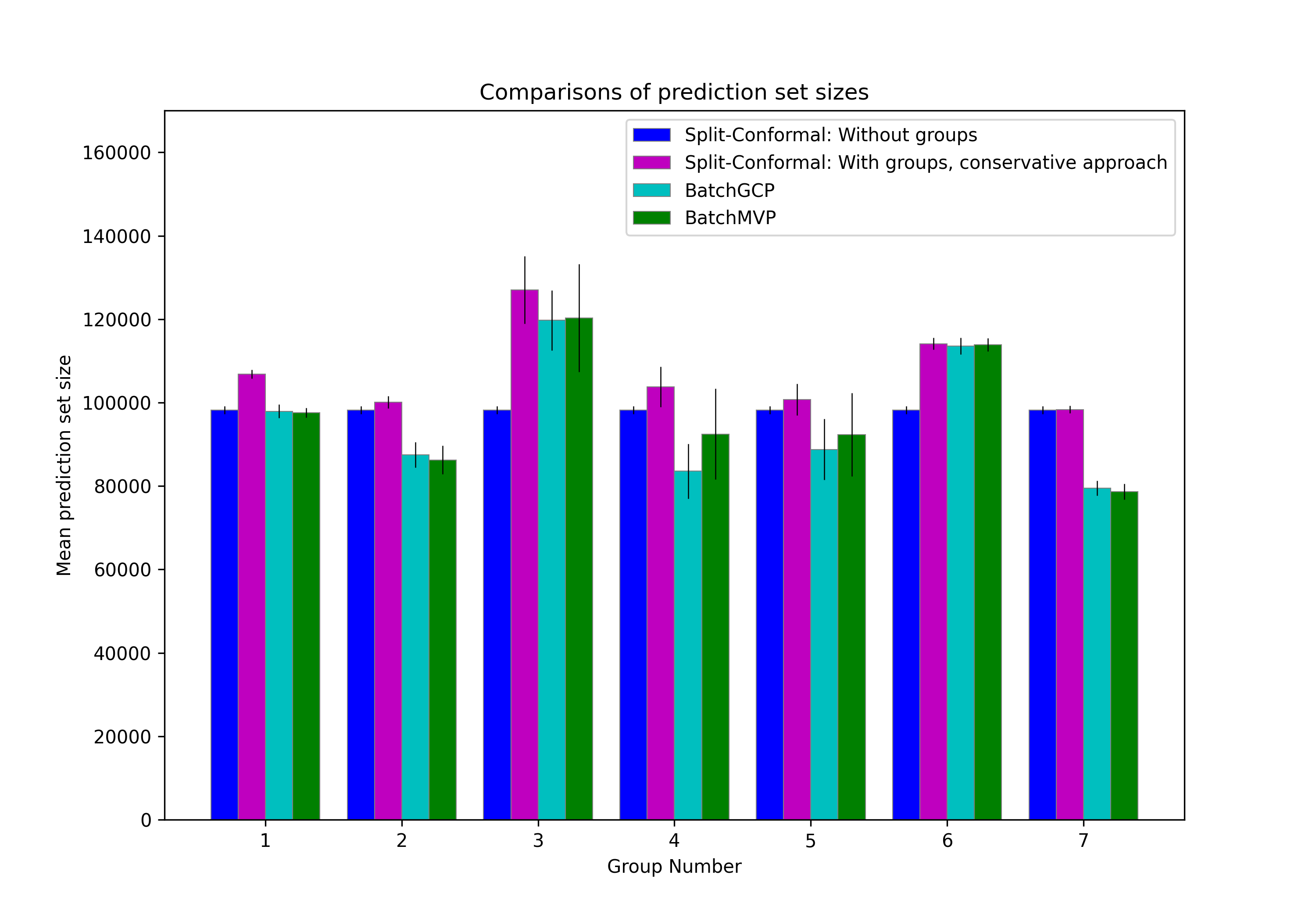}
         \caption{Ohio}
     \end{subfigure}
     \vfill \vfill \vfill
          \begin{subfigure}[b]{0.3\textwidth}
         \centering
         \includegraphics[width=\textwidth]{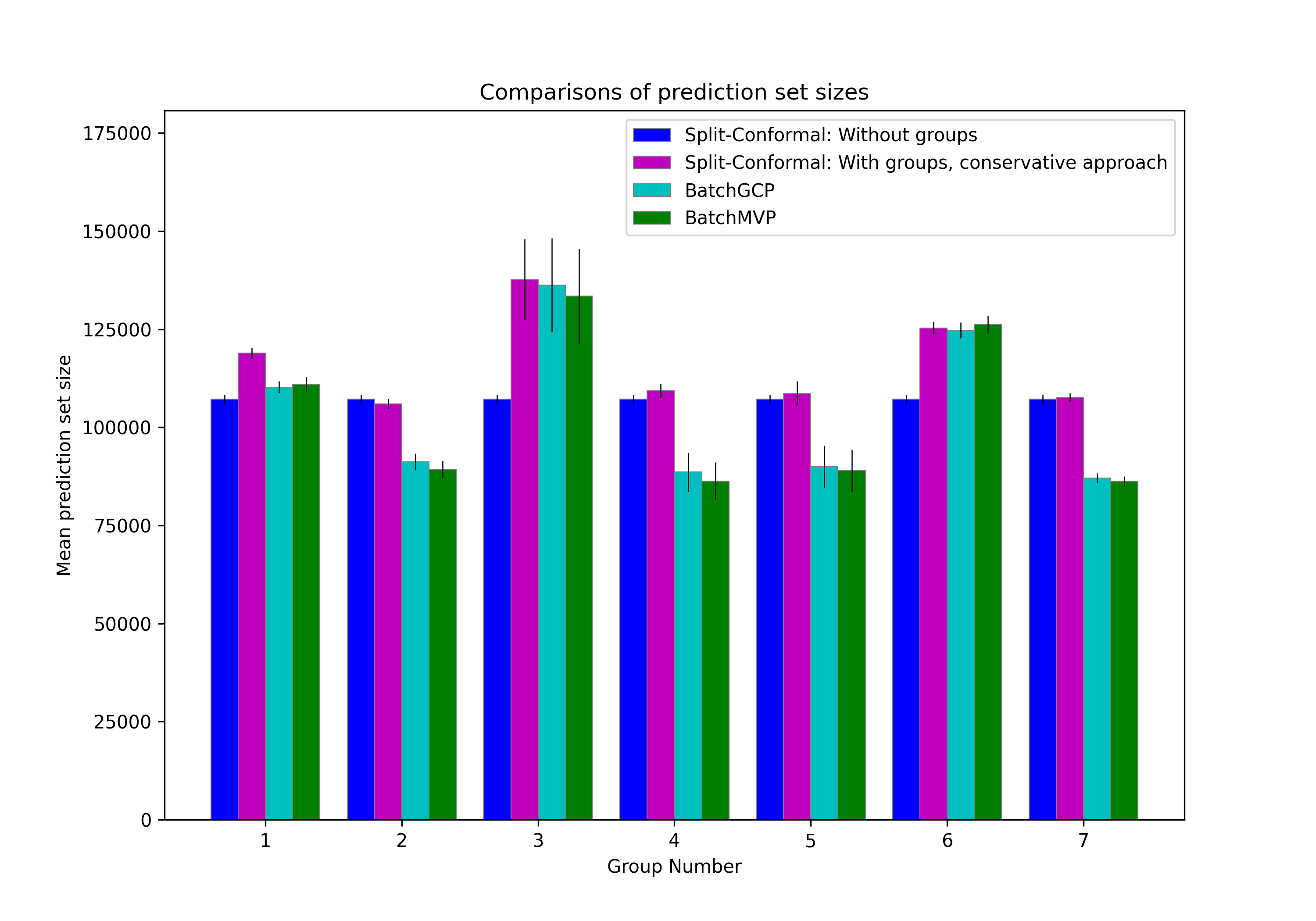}
         \caption{North Carolina}
     \end{subfigure}
     \hfill
     \begin{subfigure}[b]{0.3\textwidth}
         \centering
         \includegraphics[width=\textwidth]{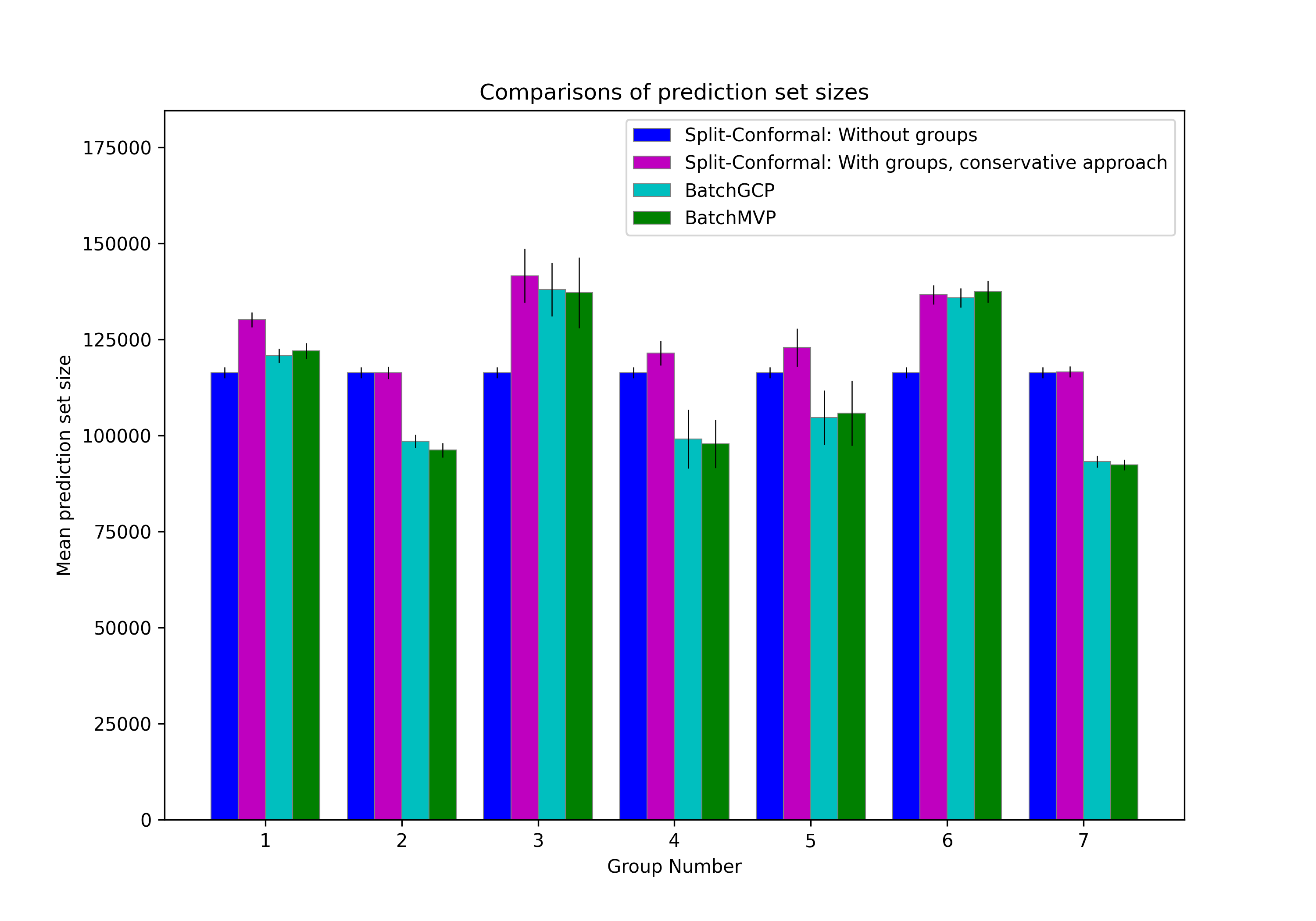}
         \caption{Georgia}
     \end{subfigure}
     \hfill
     \begin{subfigure}[b]{0.3\textwidth}
         \centering
         \includegraphics[width=\textwidth]{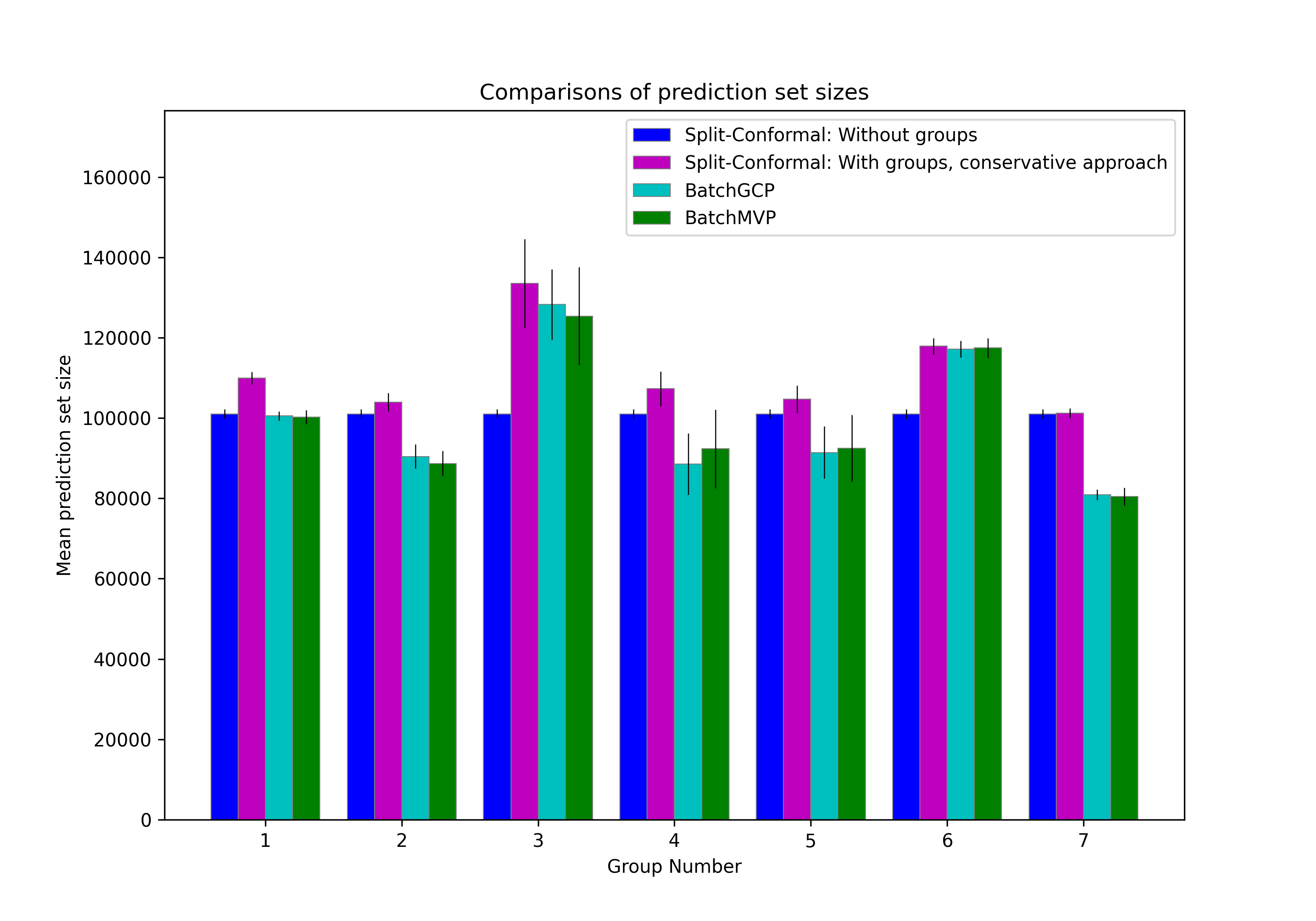}
         \caption{Michigan}
     \end{subfigure}
        \caption{Results for group-wise average prediction set size for nine different states (averaged over 50 runs) using different methods of conformal prediction. Error bars show standard deviation.}
        \label{fig:folktables-size}
\end{figure}

\begin{figure}
     \centering
     \begin{subfigure}[b]{0.3\textwidth}
         \centering
         \includegraphics[width=\textwidth]{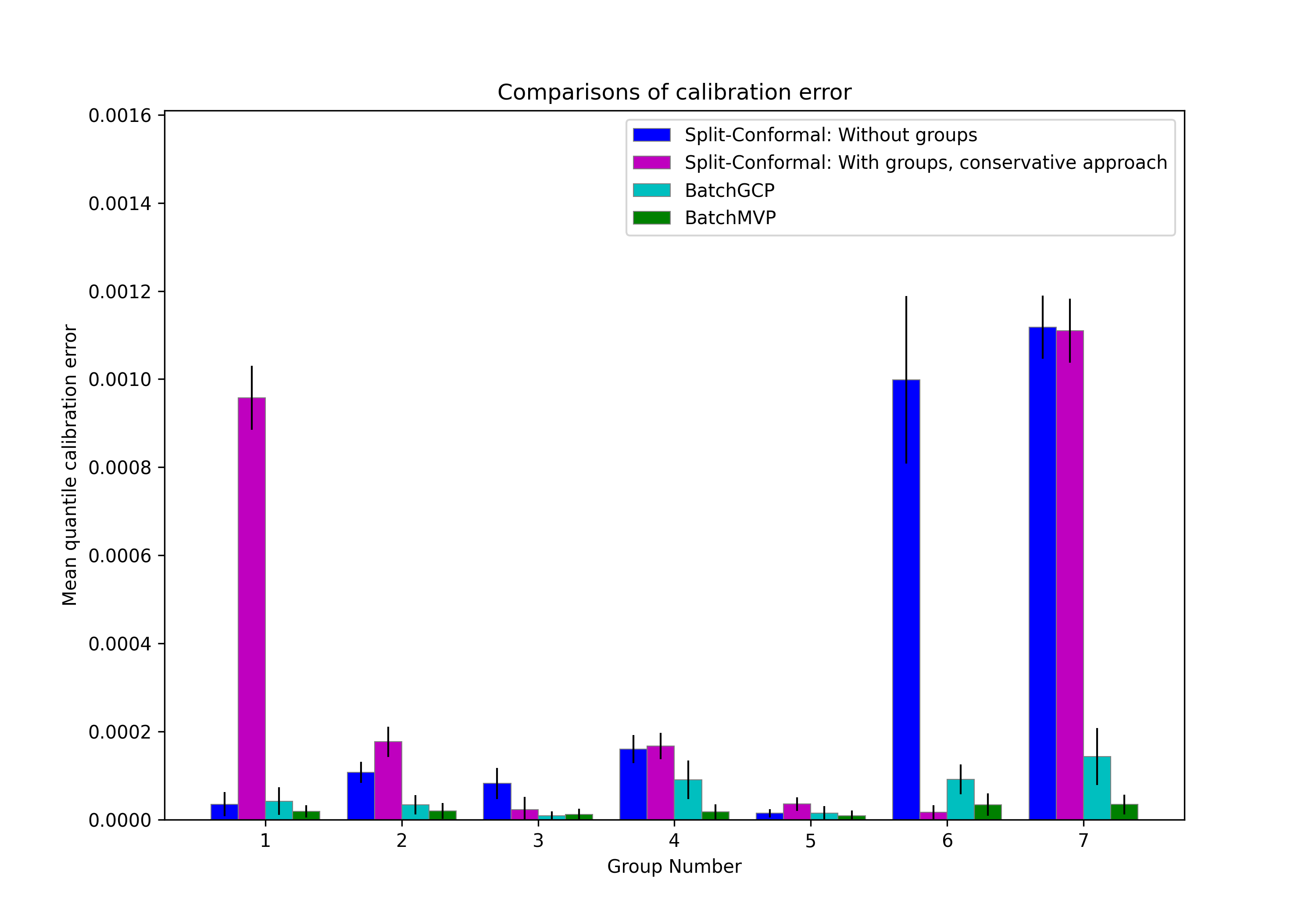}
         \caption{Texas}
     \end{subfigure}
     \hfill
     \begin{subfigure}[b]{0.3\textwidth}
         \centering
         \includegraphics[width=\textwidth]{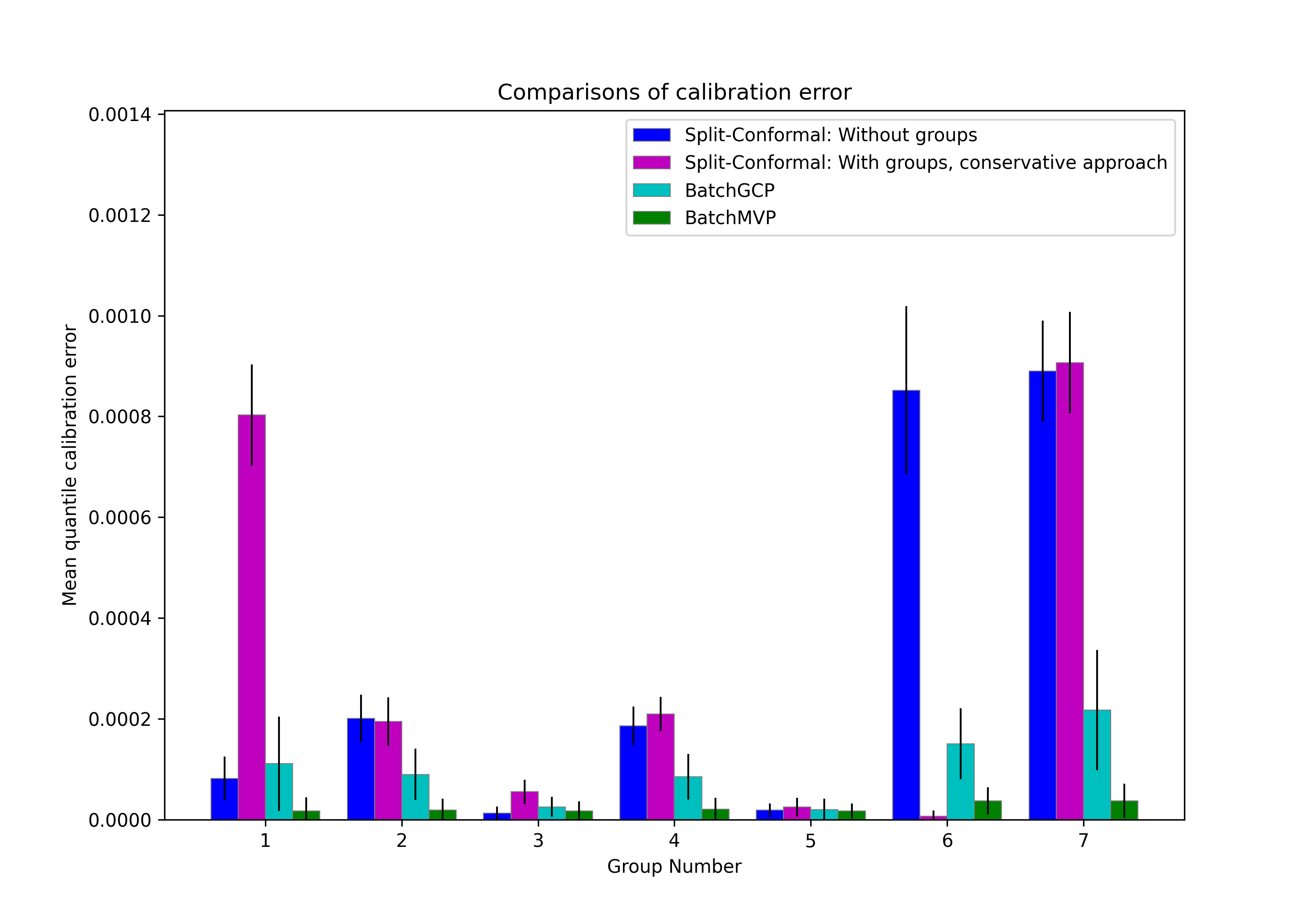}
         \caption{New York}
     \end{subfigure}
     \hfill
     \begin{subfigure}[b]{0.3\textwidth}
         \centering
         \includegraphics[width=\textwidth]{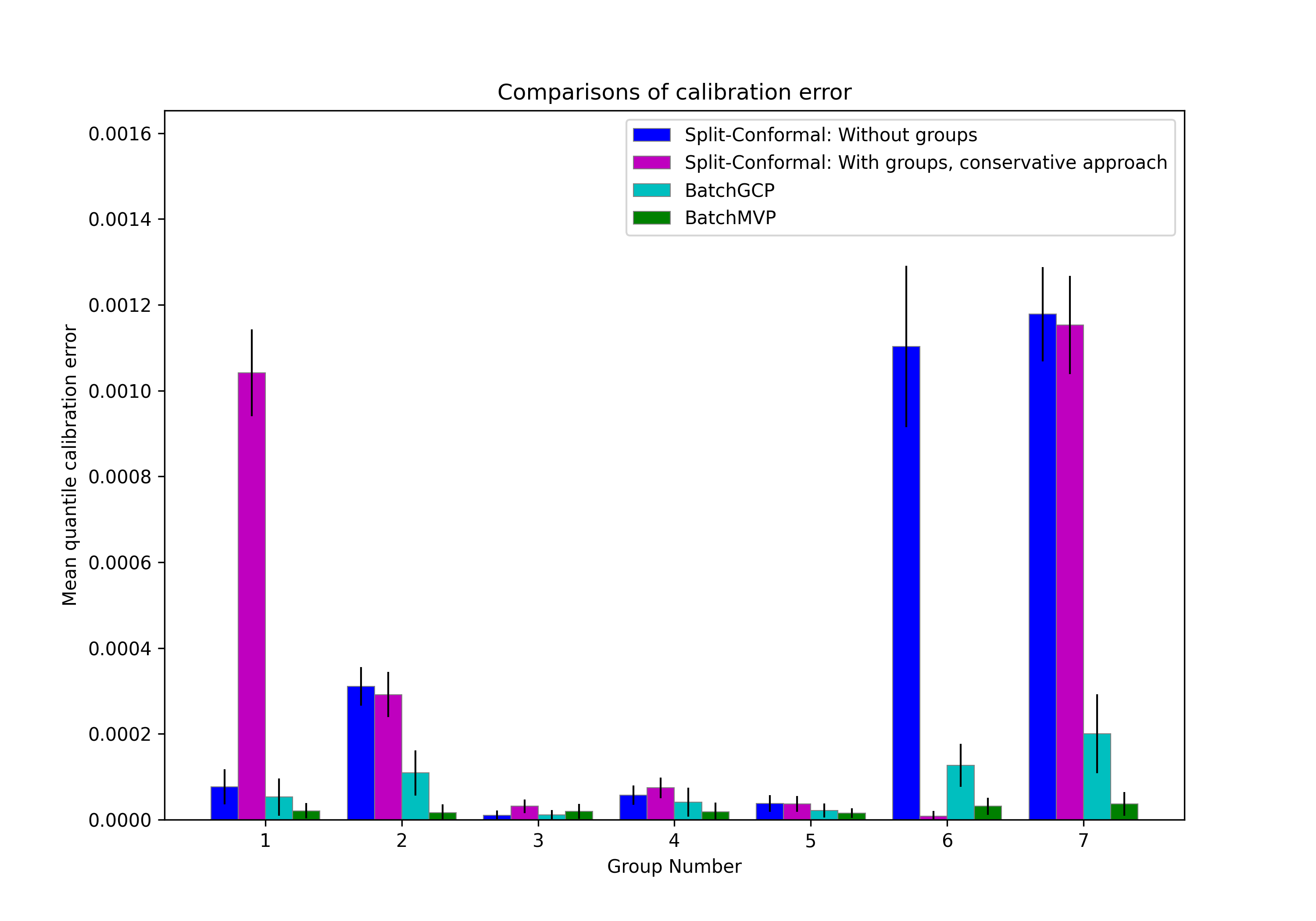}
         \caption{Florida}
     \end{subfigure}
     \vfill \vfill \vfill
          \begin{subfigure}[b]{0.3\textwidth}
         \centering
         \includegraphics[width=\textwidth]{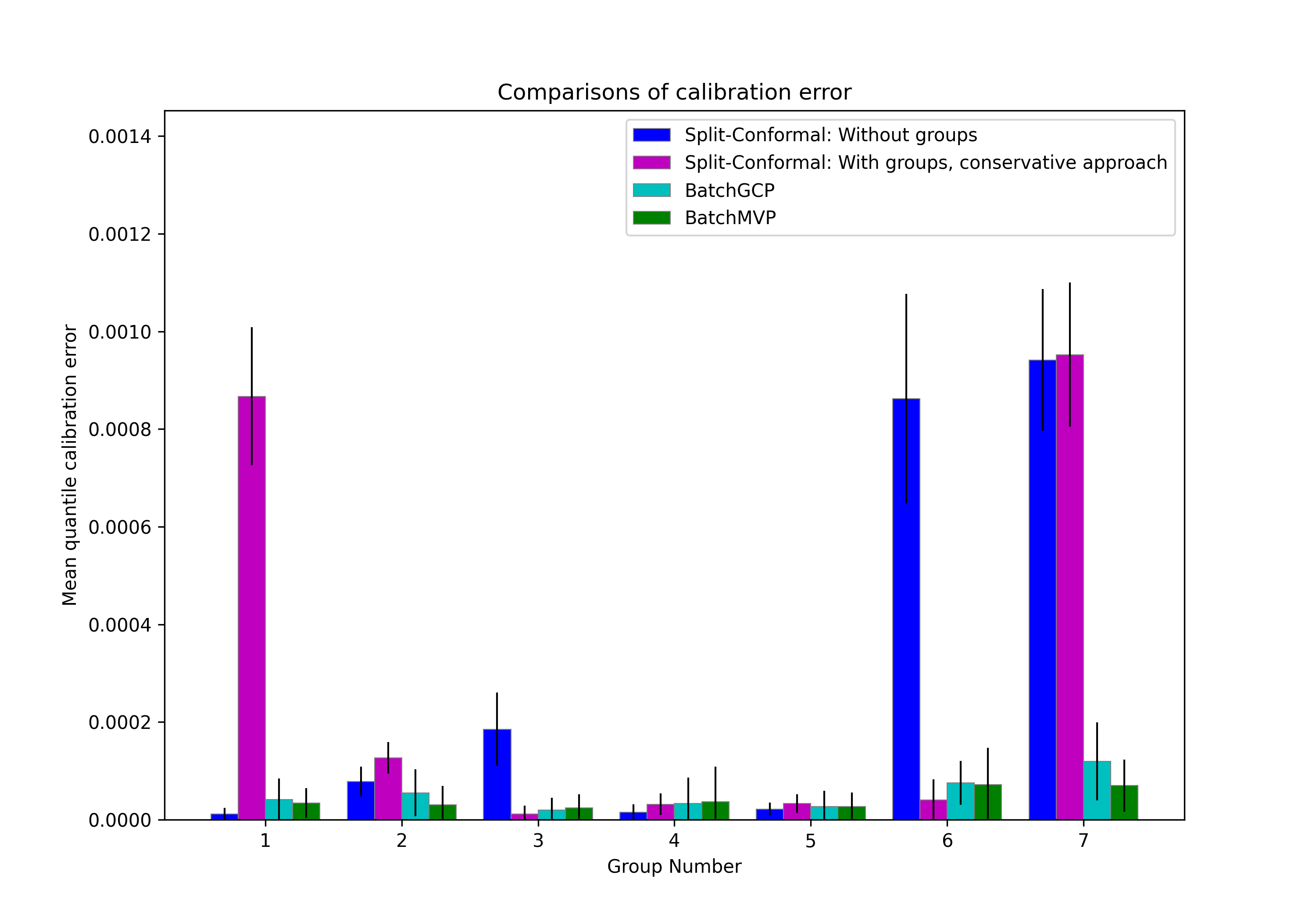}
         \caption{Pennsylvania}
     \end{subfigure}
     \hfill
     \begin{subfigure}[b]{0.3\textwidth}
         \centering
         \includegraphics[width=\textwidth]{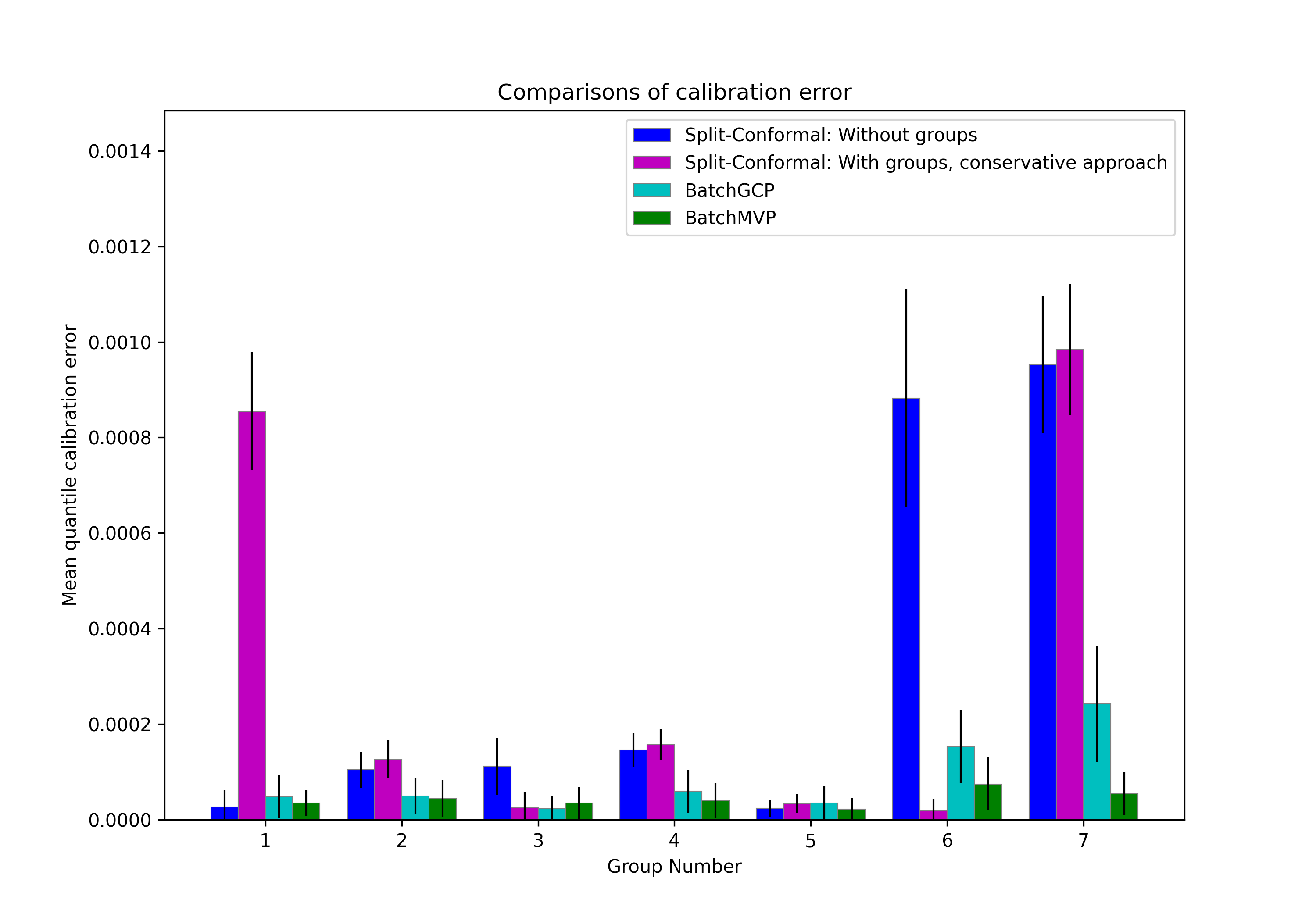}
         \caption{Illinois}
     \end{subfigure}
     \hfill
     \begin{subfigure}[b]{0.3\textwidth}
         \centering
         \includegraphics[width=\textwidth]{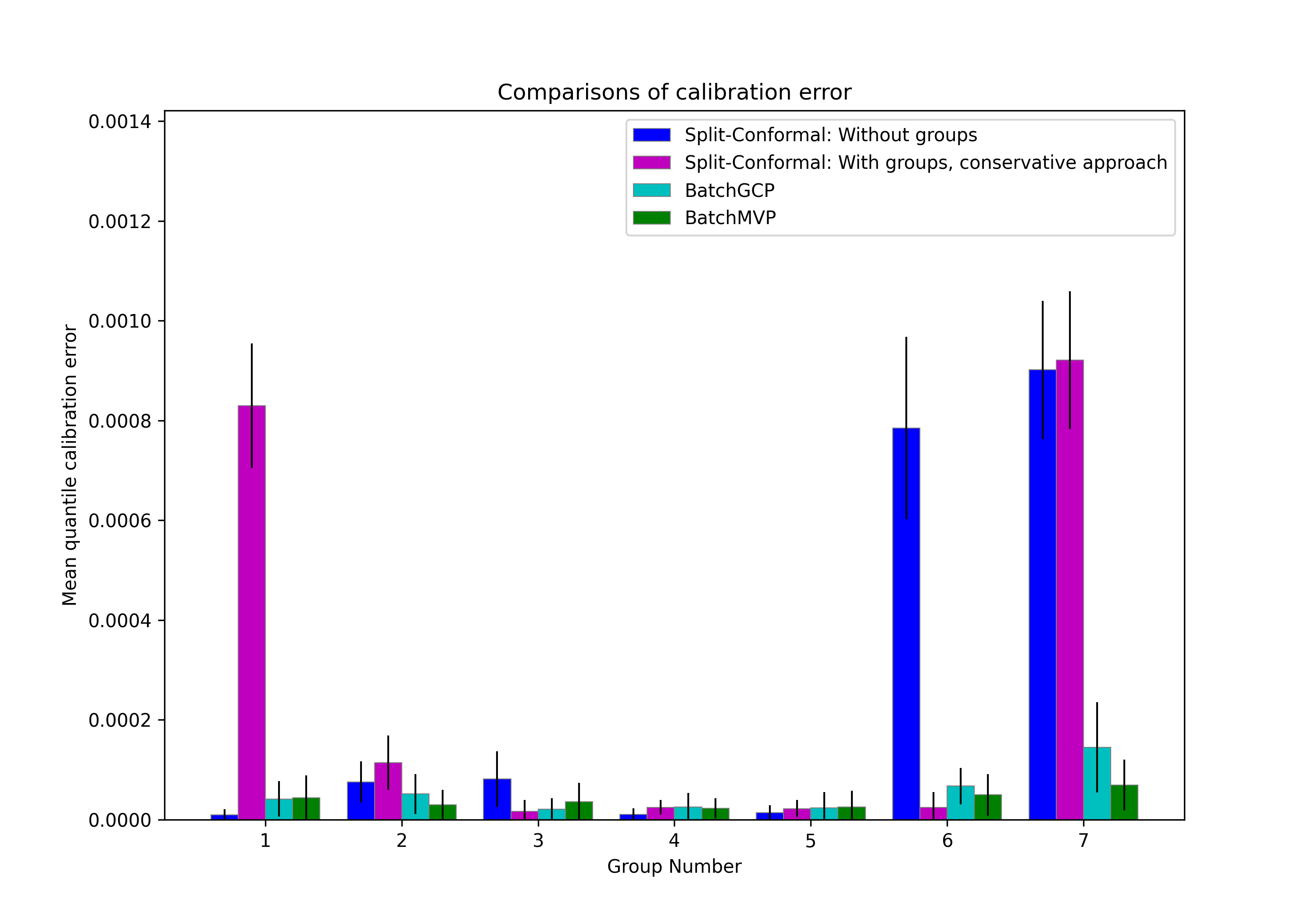}
         \caption{Ohio}
     \end{subfigure}
     \vfill \vfill \vfill
          \begin{subfigure}[b]{0.3\textwidth}
         \centering
         \includegraphics[width=\textwidth]{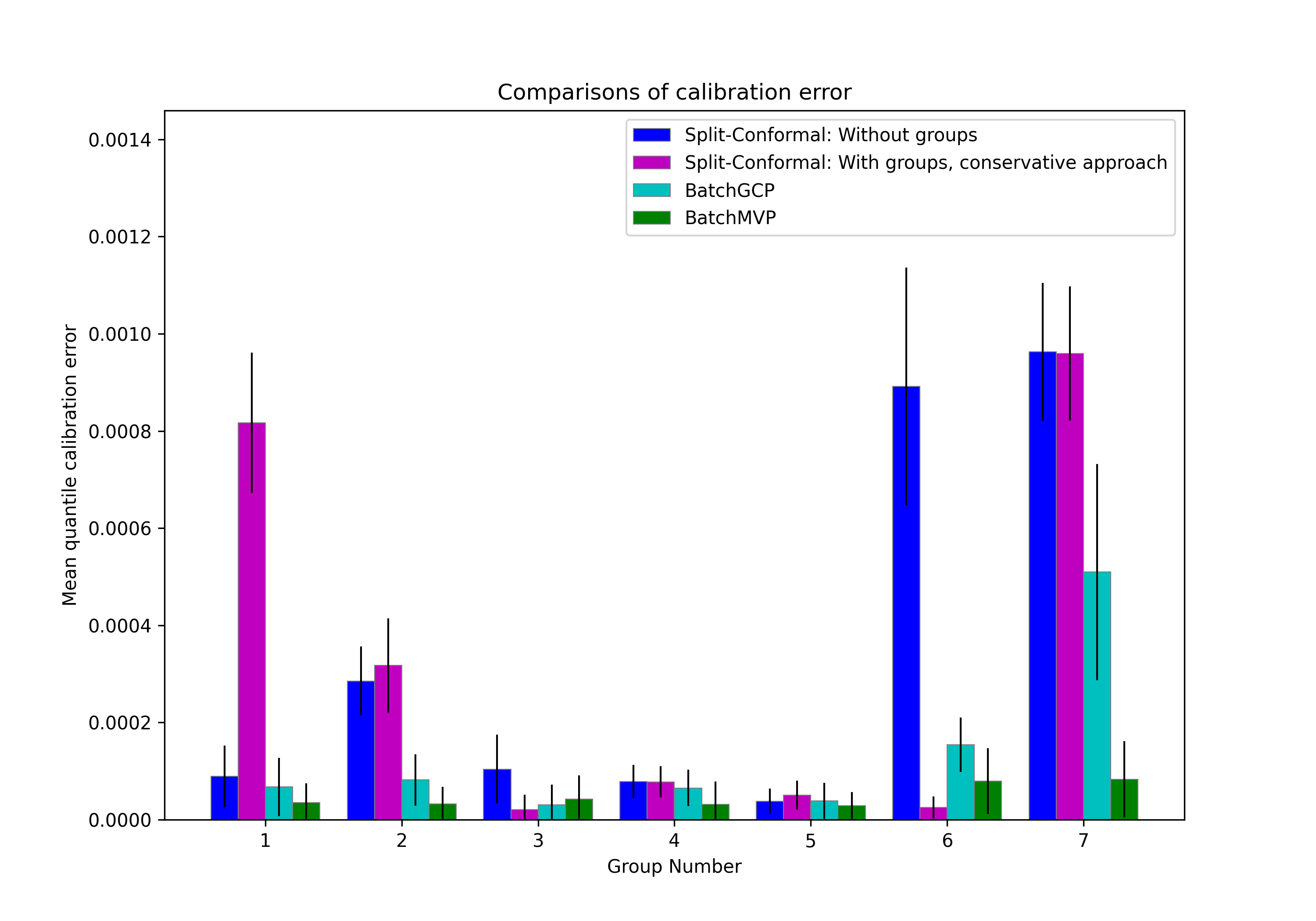}
         \caption{North Carolina}
     \end{subfigure}
     \hfill
     \begin{subfigure}[b]{0.3\textwidth}
         \centering
         \includegraphics[width=\textwidth]{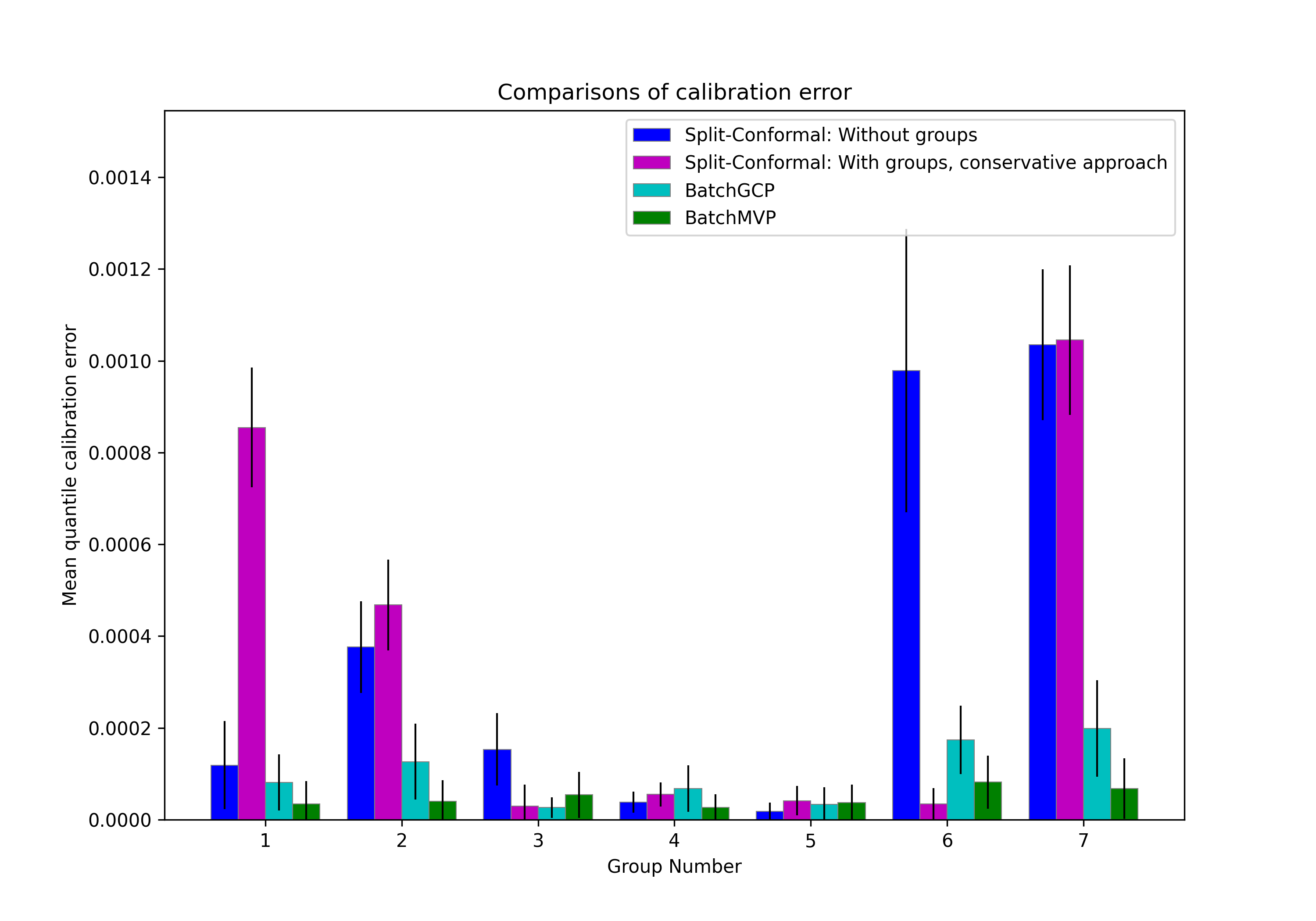}
         \caption{Georgia}
     \end{subfigure}
     \hfill
     \begin{subfigure}[b]{0.3\textwidth}
         \centering
         \includegraphics[width=\textwidth]{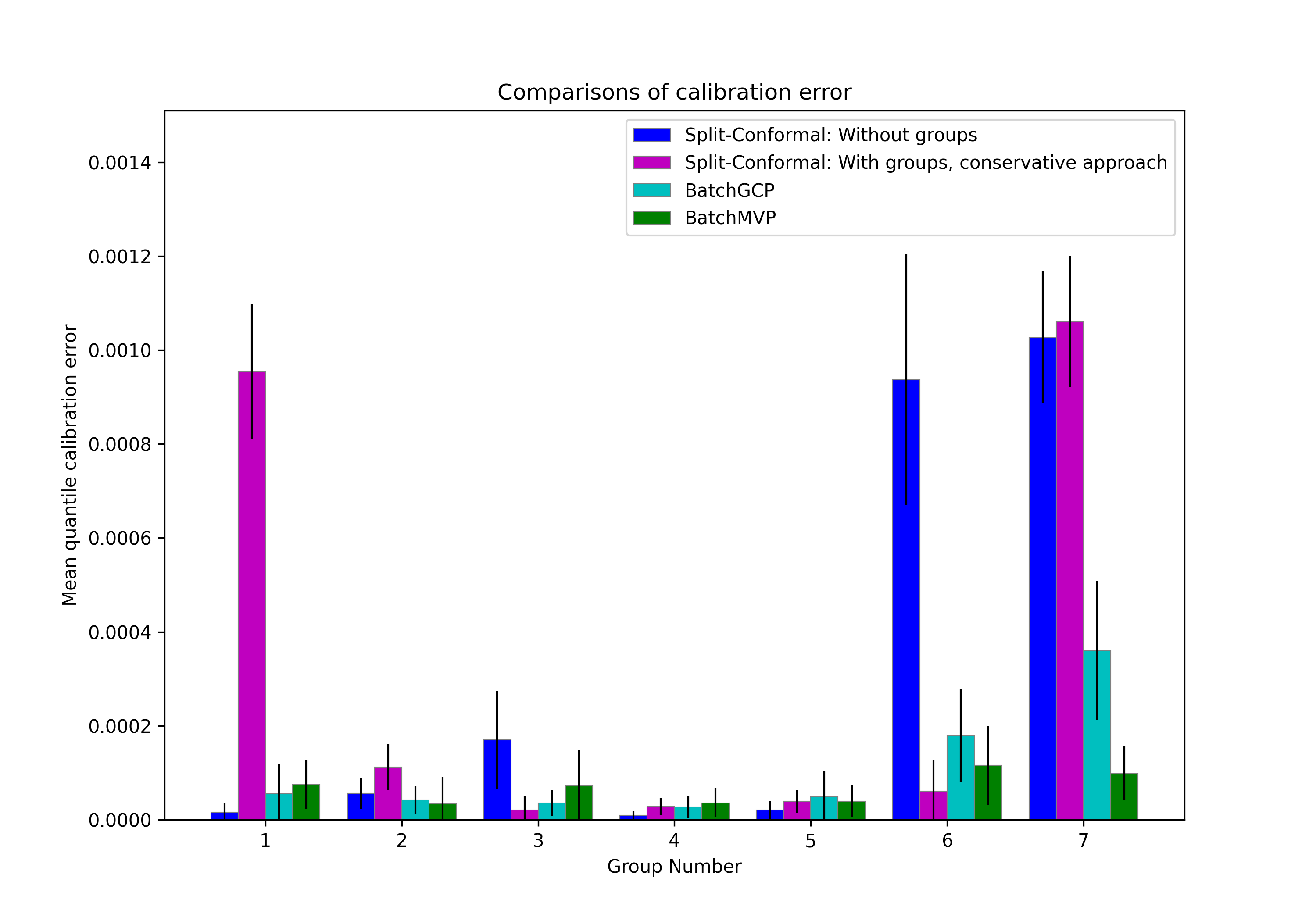}
         \caption{Michigan}
     \end{subfigure}
        \caption{Results for group-wise calibration error (averaged over 50 runs). Error bars show standard deviation.}
        \label{fig:folktables-calib-error}
\end{figure}

\begin{figure}
     \centering
     \begin{subfigure}[b]{0.3\textwidth}
         \centering
         \includegraphics[width=\textwidth]{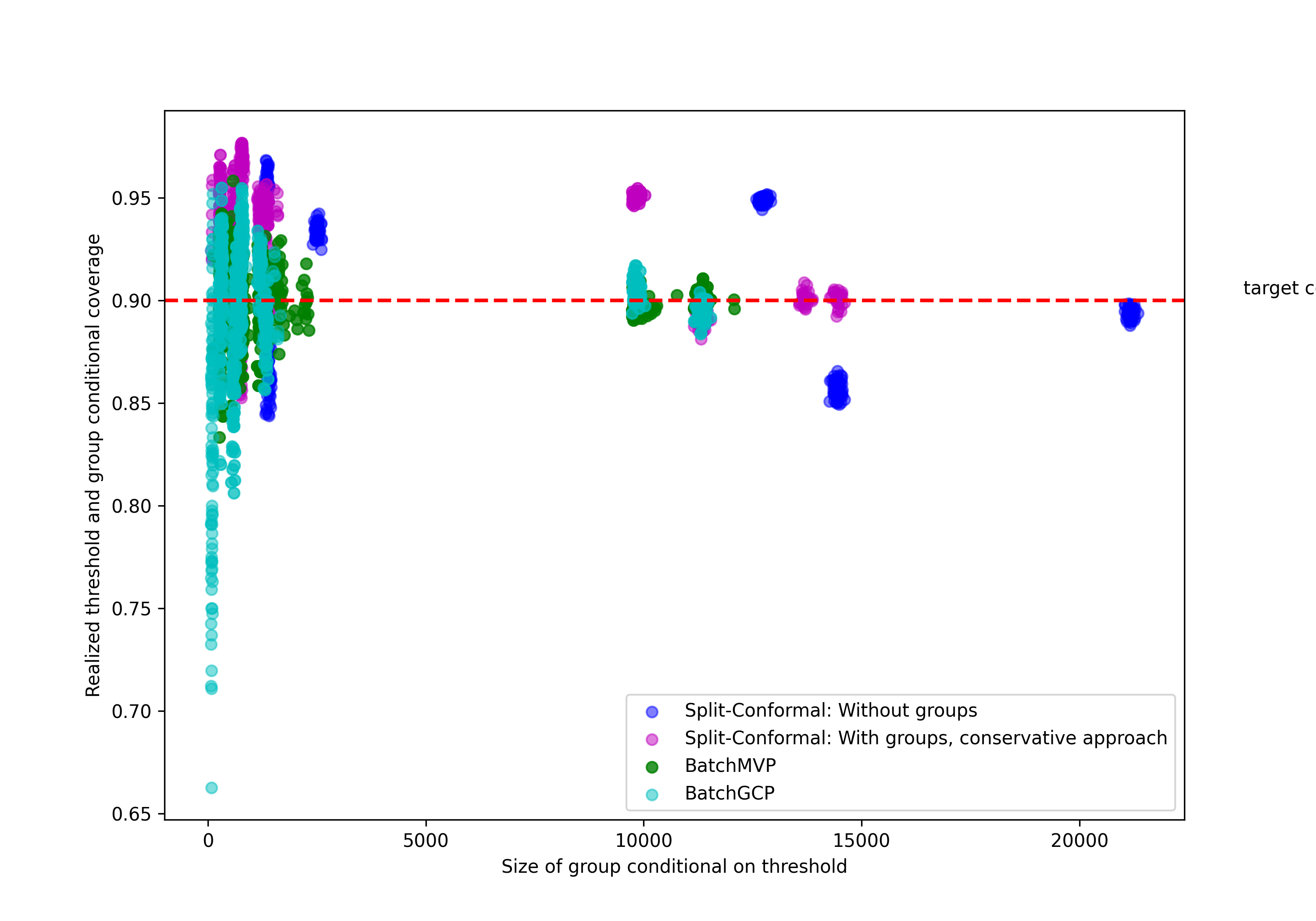}
         \caption{Texas}
     \end{subfigure}
     \hfill
     \begin{subfigure}[b]{0.3\textwidth}
         \centering
         \includegraphics[width=\textwidth]{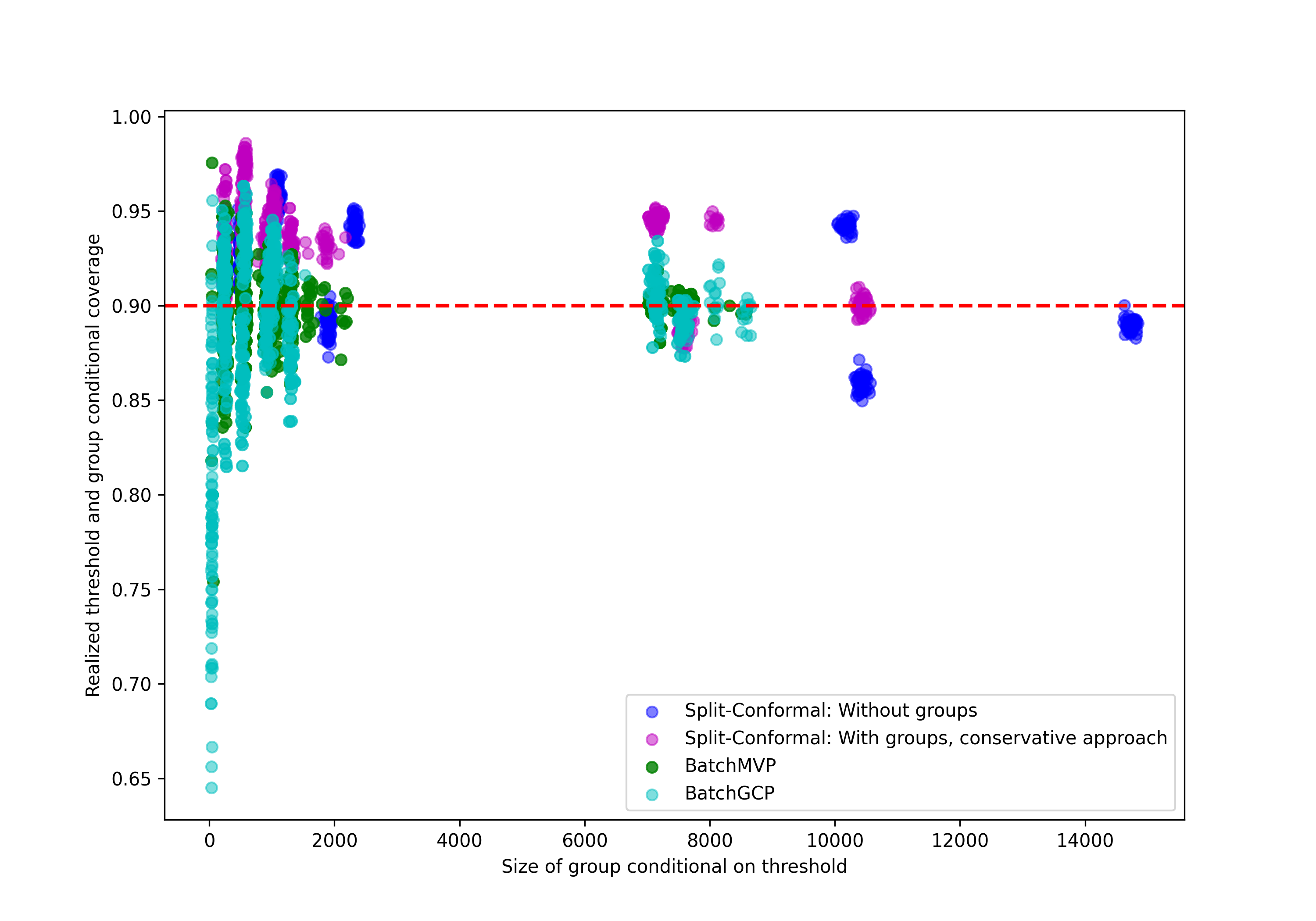}
         \caption{New York}
     \end{subfigure}
     \hfill
     \begin{subfigure}[b]{0.3\textwidth}
         \centering
         \includegraphics[width=\textwidth]{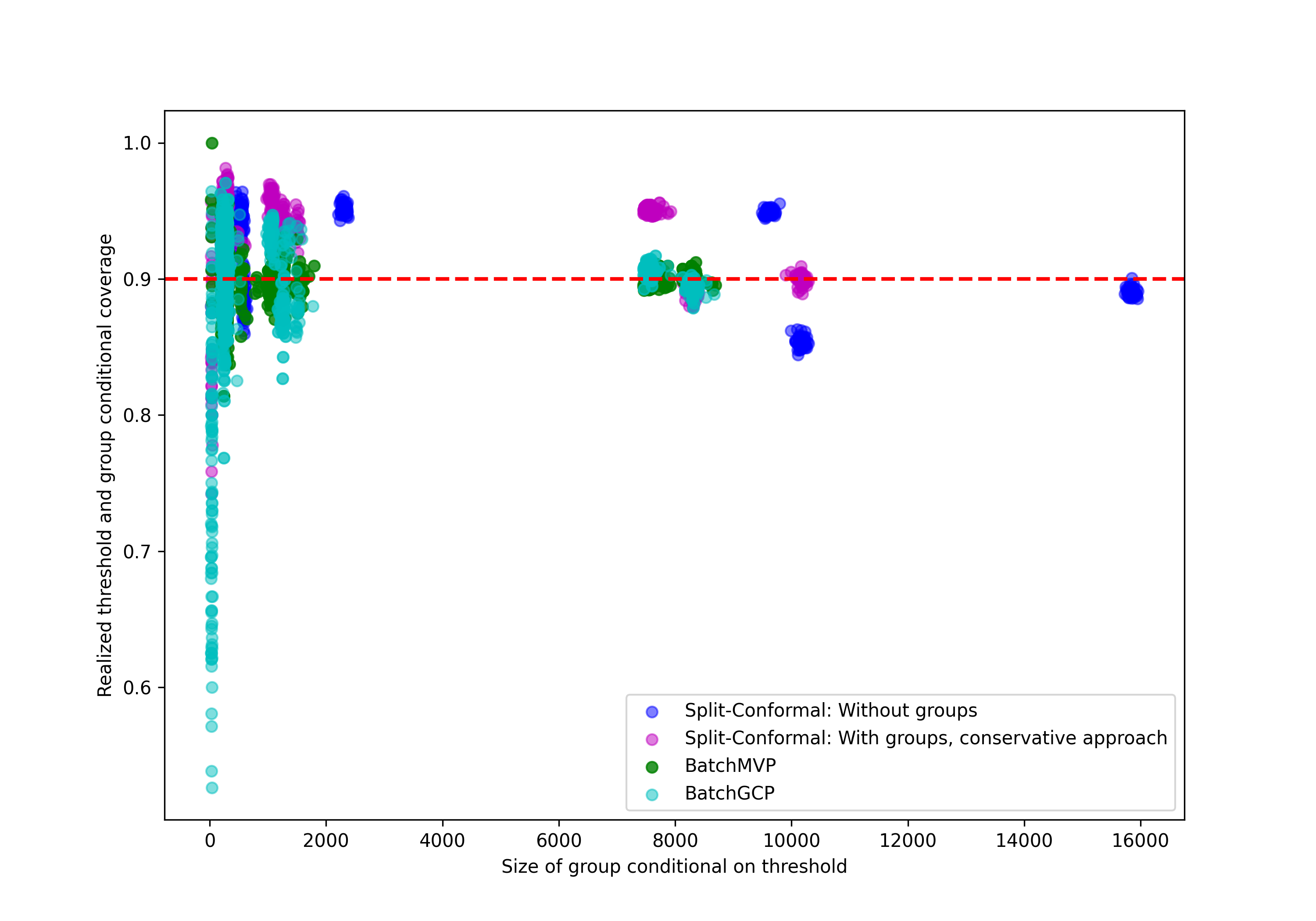}
         \caption{Florida}
     \end{subfigure}
     \vfill \vfill \vfill
          \begin{subfigure}[b]{0.3\textwidth}
         \centering
         \includegraphics[width=\textwidth]{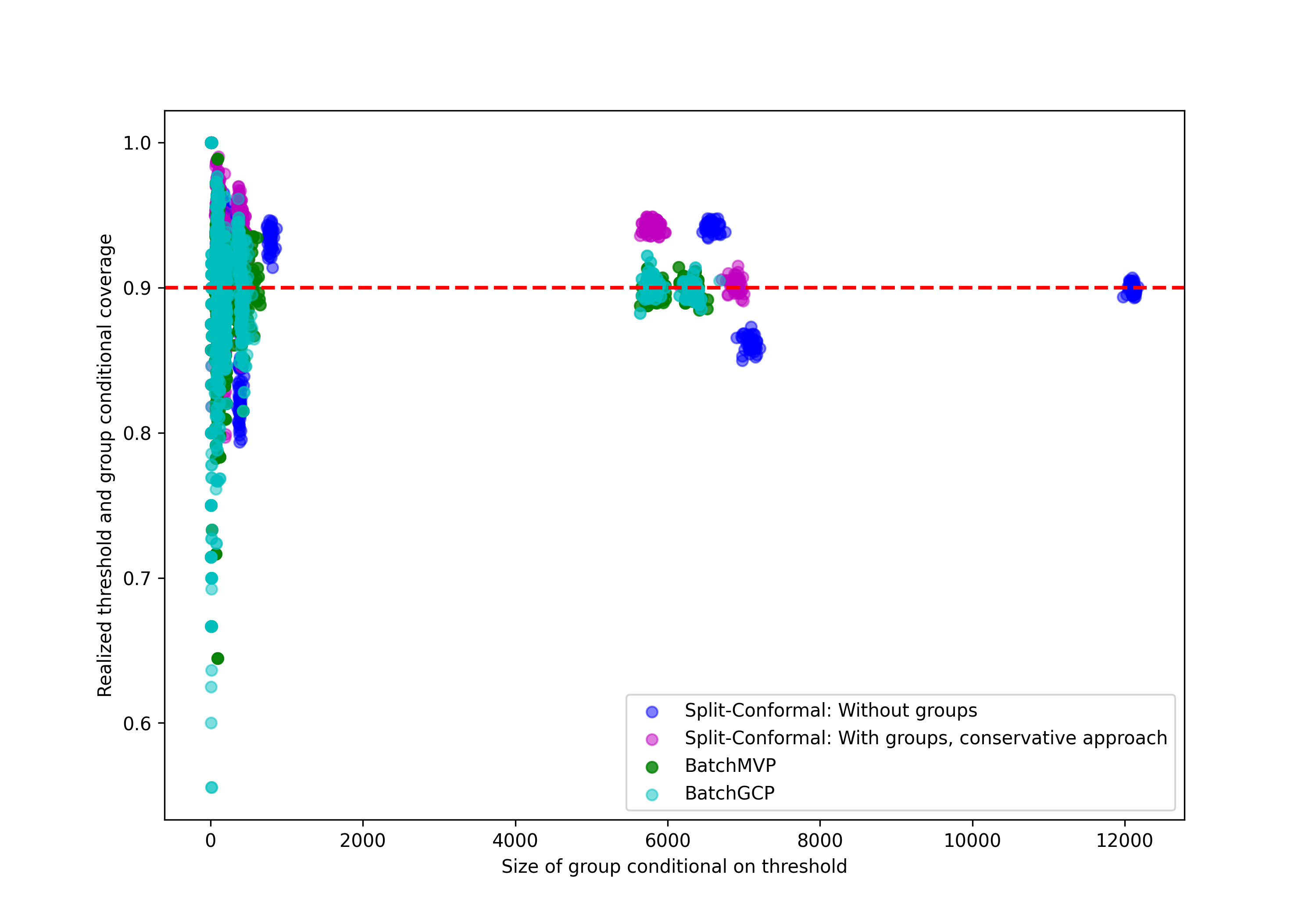}
         \caption{Pennsylvania}
     \end{subfigure}
     \hfill
     \begin{subfigure}[b]{0.3\textwidth}
         \centering
         \includegraphics[width=\textwidth]{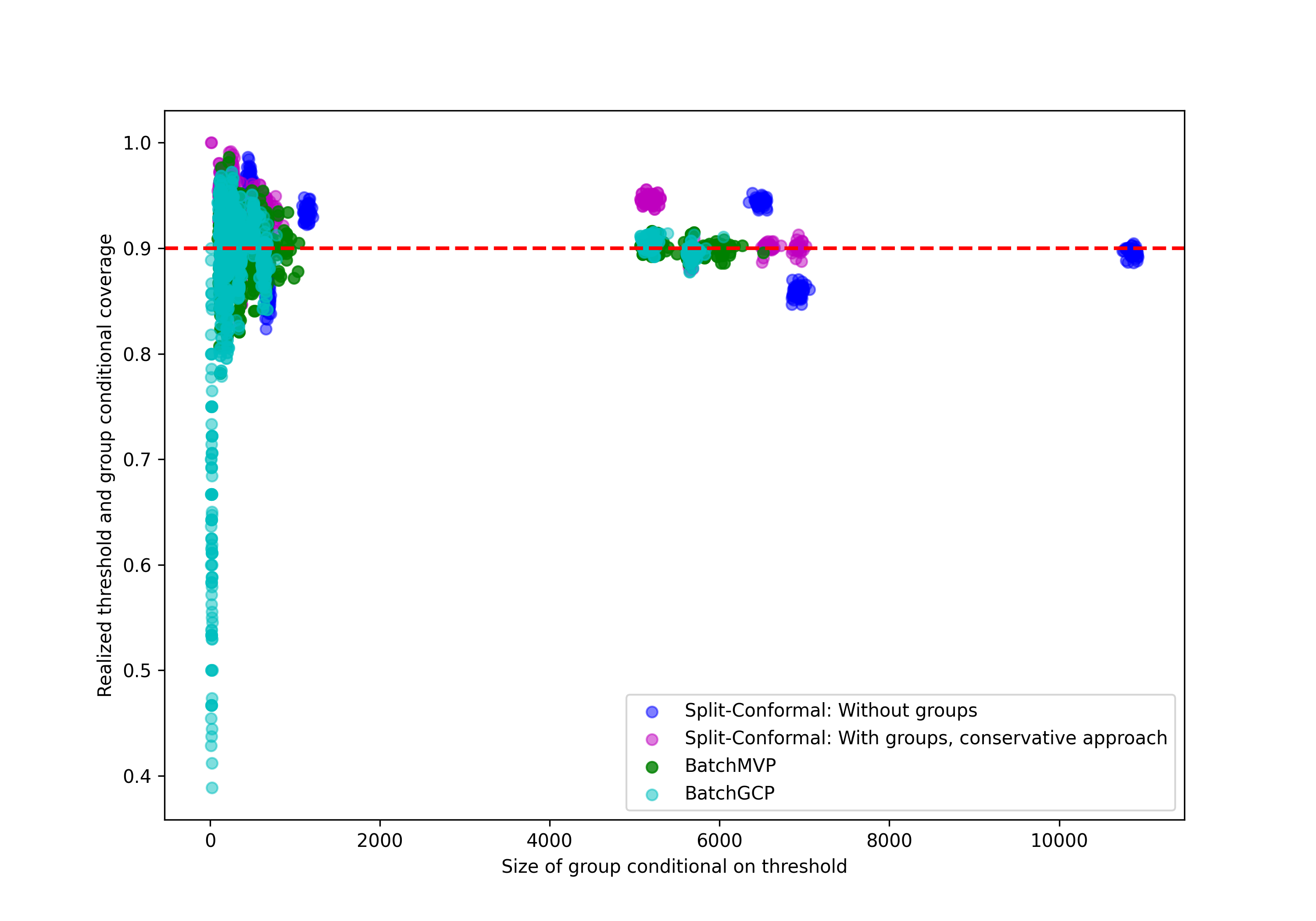}
         \caption{Illinois}
     \end{subfigure}
     \hfill
     \begin{subfigure}[b]{0.3\textwidth}
         \centering
         \includegraphics[width=\textwidth]{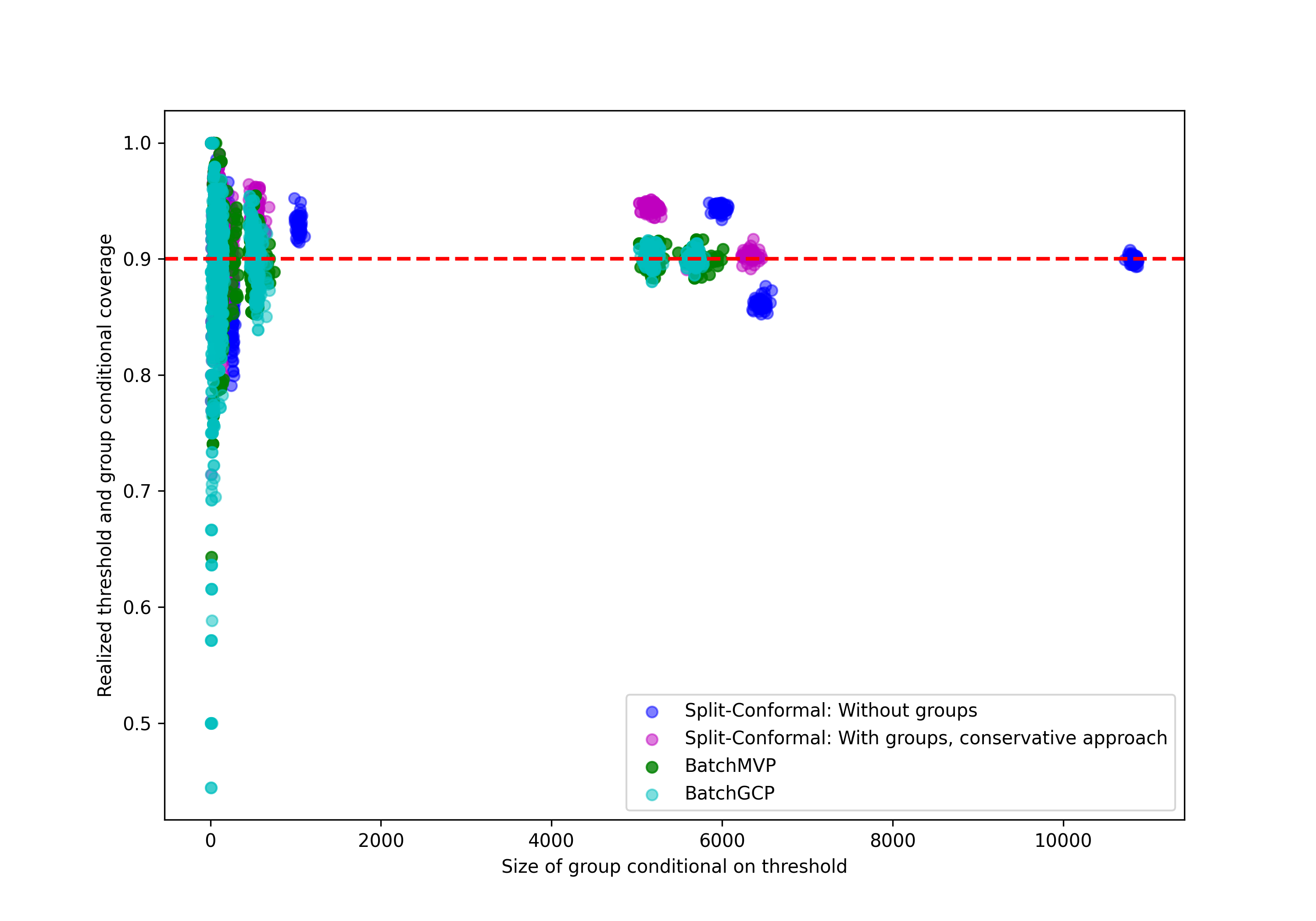}
         \caption{Ohio}
     \end{subfigure}
     \vfill \vfill \vfill
          \begin{subfigure}[b]{0.3\textwidth}
         \centering
         \includegraphics[width=\textwidth]{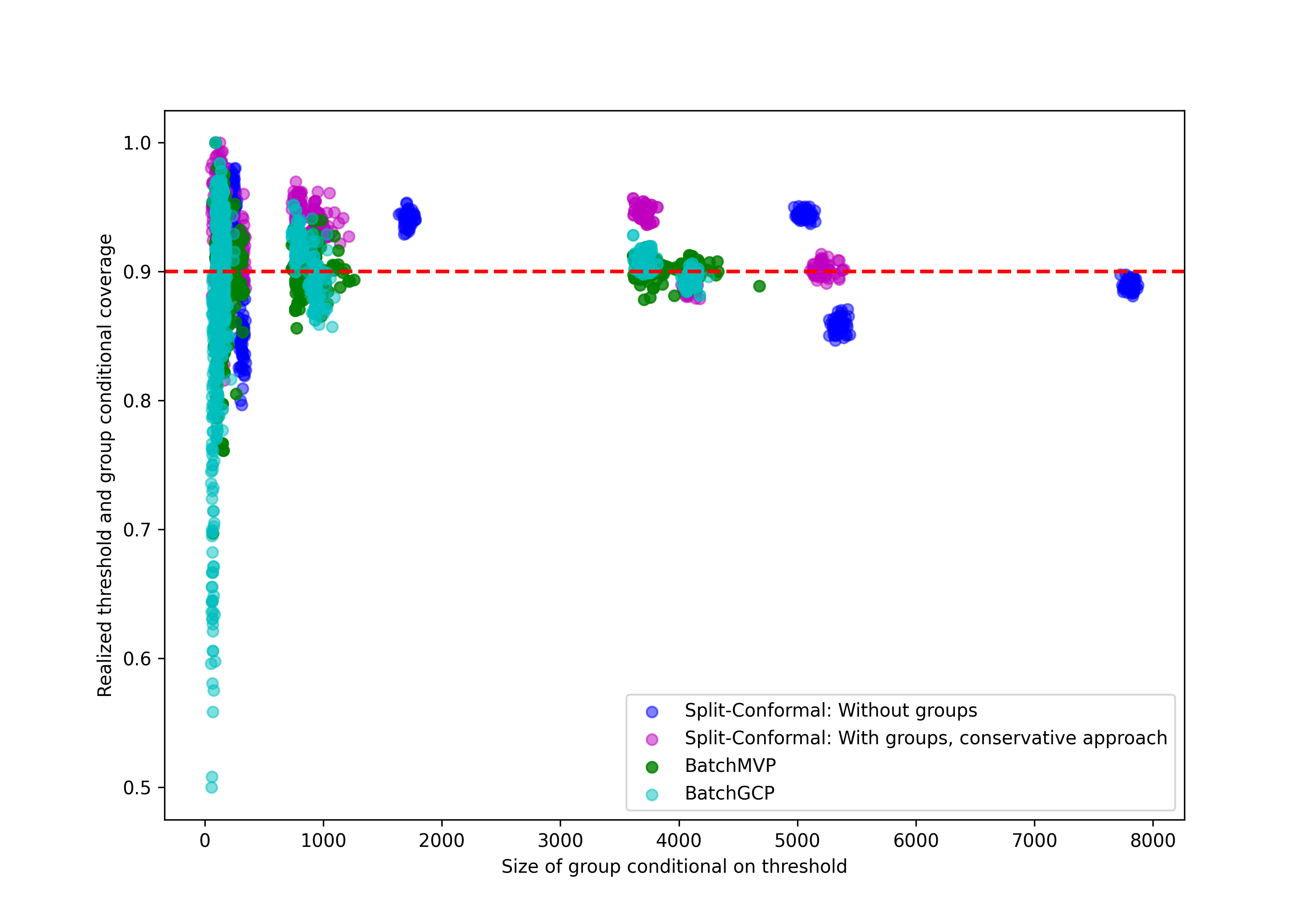}
         \caption{North Carolina}
     \end{subfigure}
     \hfill
     \begin{subfigure}[b]{0.3\textwidth}
         \centering
         \includegraphics[width=\textwidth]{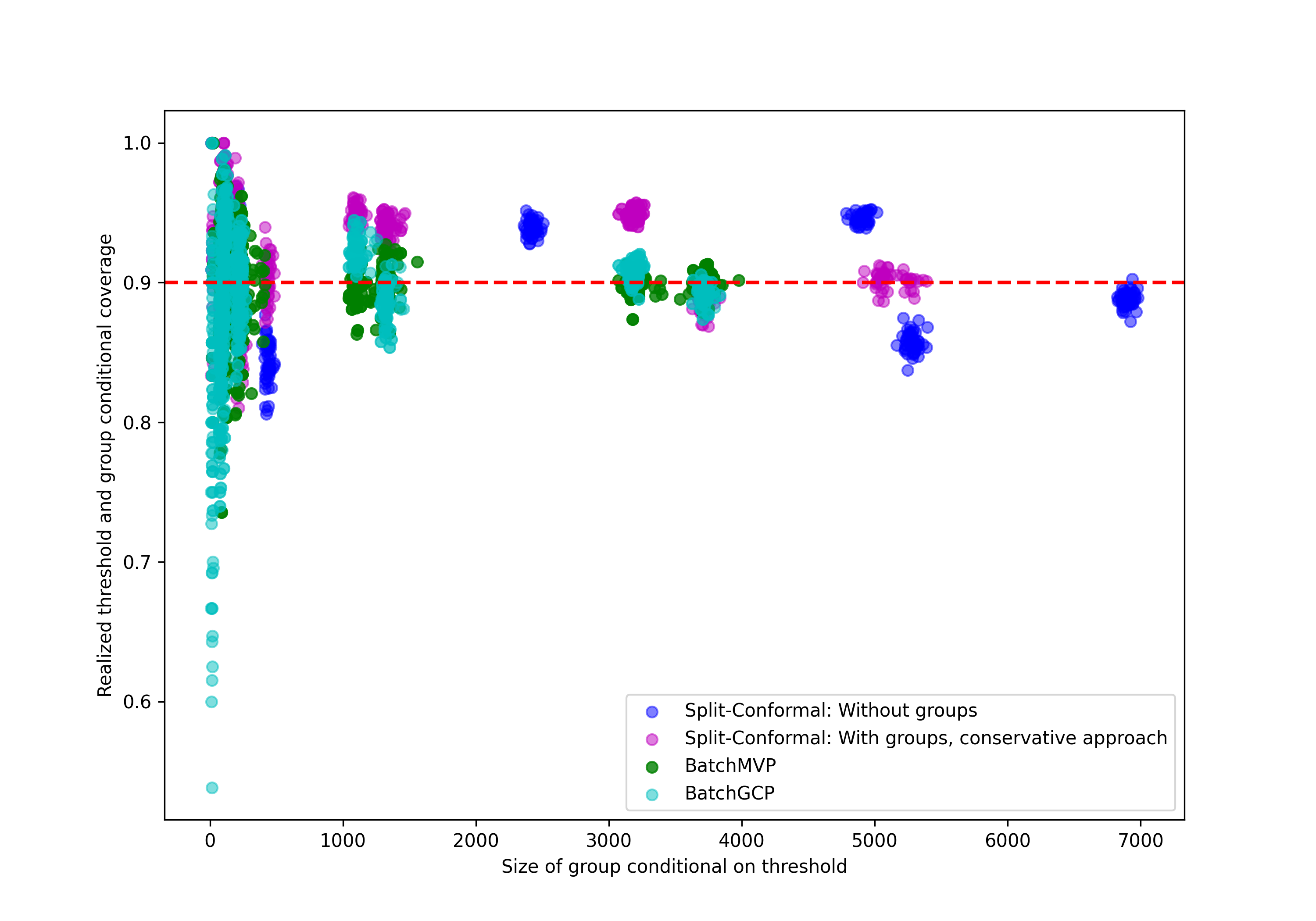}
         \caption{Georgia}
     \end{subfigure}
     \hfill
     \begin{subfigure}[b]{0.3\textwidth}
         \centering
         \includegraphics[width=\textwidth]{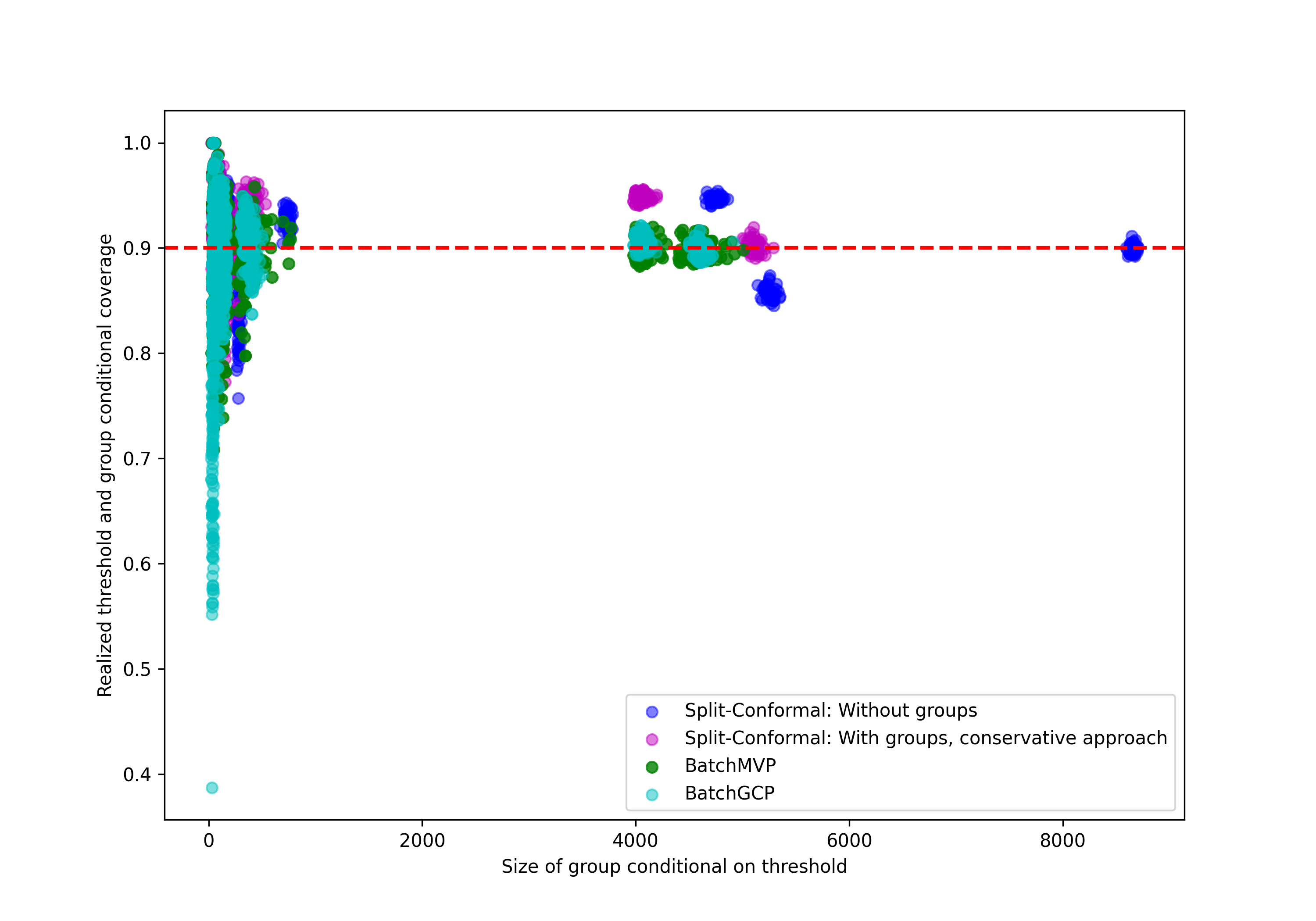}
         \caption{Michigan}
     \end{subfigure}
        \caption{Scatterplots of the number of points associated with each threshold-group pair $(g, \tau_i)$ against the average coverage conditional on that pair for all $g \in \mathcal{G}$ and all $\tau_i$ in a grid, over all tested conformal prediction methods (consolidating results over all 50 runs), for all nine states. Target coverage is $q = 0.9$.}
        \label{fig:folktables-scatterplots}
\end{figure}

\subsubsection{Halting Time}
Recall that our generalization theorem for \texttt{BatchMVP} is in terms of the number of iterations $T$ it runs for before halting. We prove a worst-case upper bound on $T$, but note that empirically \texttt{BatchMVP} halts much sooner (and so enjoys improved theoretical generalization properties). Here we report the average number of iterations $T$ that \texttt{BatchMVP} runs for before halting on each state, averaged over the 50 runs, together with the empirical standard deviation. 

Texas: $10.84 \pm 1.2059$. \\ 
New York: $11.26 \pm 1.2134$ \\
Florida: $12.06 \pm 1.4752$ \\ 
Pennsylvania: $11.68 \pm 2.3189$ \\
Illinois: $9.74 \pm 1.4941$ \\
Ohio: $10.94 \pm 1.7482$ \\
North Carolina: $13.22 \pm 1.5138$ \\
Georgia: $12.18 \pm 1.6454$ \\
Michigan: $11.24 \pm 2.0353$



\end{document}